\documentclass{article} 
\usepackage{iclr2025_conference,times}
\iclrfinalcopy 

\usepackage[utf8]{inputenc} 
\usepackage[T1]{fontenc}    
\usepackage{xcolor}         
\definecolor{cvprblue}{rgb}{0.21,0.49,0.74}
\usepackage[pagebackref,breaklinks,colorlinks,linkcolor=cvprblue,citecolor=cvprblue]{hyperref}
\usepackage{url}            
\usepackage{booktabs}       
\usepackage{amsfonts}       
\usepackage{nicefrac}       
\usepackage{microtype}      
\usepackage{xcolor}         
\usepackage{multirow}
\usepackage{bm}
\usepackage{graphicx}
\usepackage{amsmath}
\usepackage{amssymb}
\usepackage{pifont}
\usepackage{algorithm}
\usepackage{algpseudocode}
\usepackage{float}
\usepackage{adjustbox}
\usepackage{graphicx}
\usepackage{wrapfig}
\usepackage{tcolorbox}
\usepackage{multirow}
\usepackage{enumitem}
\usepackage{colortbl}
\usepackage{amsthm} 
\usepackage{amsmath} 
\usepackage{titletoc}

\newtheorem{theorem}{Theorem}        
\newtheorem{corollary}{Corollary}    
\newtheorem*{corollary*}{Corollary}  
\theoremstyle{remark}
\tcbuselibrary{listings, breakable, skins}

\newtcbox{\highlightedcode}{on line,
  colback=gray!20, colframe=gray!20,
  boxrule=0pt, arc=1mm, boxsep=0pt,
  left=1pt, right=1pt, top=1pt, bottom=1pt
}

\definecolor{darkblue}{RGB}{0,102,153}  
\definecolor{darkred}{RGB}{153,0,0}     
\definecolor{darkgreen}{RGB}{0,102,0}   
\definecolor{purple}{RGB}{102,0,102}    

\providecommand{\thetitle}{}
\let\oldtitle\title
\renewcommand{\title}[1]{\oldtitle{#1}\renewcommand{\thetitle}{#1}}
\newcommand{\maketitlesupplementary}{
    \newpage
    \begin{center}
        \Large
        \textbf{\thetitle}\\[0.5em] 
        Supplementary Material\\[1.0em]
    \end{center}
}

\title{EasyTune: Efficient Step-Aware Fine-Tuning for Diffusion-Based Motion Generation}

\author{%
  Xiaofeng Tan\thanks{Equal contribution. $^\dagger$Corresponding author.} \quad
  Wanjiang Weng\footnotemark[1] \quad
  Haodong Lei \quad
  Hongsong Wang\textsuperscript{$\dagger$} \\
  Southeast University,  
  \texttt{\{xiaofengtan, wjweng, hongsongwang\}@seu.edu.cn} \\
}

\begin{document}

\maketitle

\begin{abstract}
    In recent years, motion generative models have undergone significant advancement, yet pose challenges in aligning with downstream objectives. Recent studies have shown that using differentiable rewards to directly align the preference of diffusion models yields promising results. However, these methods suffer from (1) inefficient and coarse-grained optimization with (2) high memory consumption. In this work, we first theoretically and  empirically identify the \emph{key reason} of these limitations: the recursive dependence between different steps in the denoising trajectory. Inspired by this insight, we propose \textbf{EasyTune}, which fine-tunes diffusion at each denoising step rather than over the entire trajectory.  This decouples the recursive dependence, allowing us to perform (1) a dense and fine-grained, and (2) memory-efficient optimization. Furthermore, the scarcity of preference motion pairs restricts the availability of motion reward model training. To this end, we further introduce a \textbf{S}elf-refinement \textbf{P}reference \textbf{L}earning (\textbf{SPL}) mechanism that dynamically identifies preference pairs and conducts preference learning. Extensive experiments demonstrate that EasyTune outperforms DRaFT-50 by 8.2\% in alignment (MM-Dist) improvement while requiring only 31.16\% of its additional memory overhead and achieving a \textbf{7.3$\times$} training speedup. The project page is available at this \href{https://xiaofeng-tan.github.io/projects/EasyTune/index.html}{link}.
\end{abstract}
 
\setcounter{tocdepth}{0}

\section{Introduction}

Text-to-motion generation aims to synthesize realistic and coherent human motions from natural language~\citep{shen2025finextrolcontrollablemotiongeneration}, enabling applications in animation \citep{azadi2023make}, human-computer interaction \citep{peng2024t3m}, and virtual reality \citep{tashakori2025flexmotion}. Recent advances are driven by diffusion models, which capture complex distributions and synthesize high-quality motions from text \citep{chen2023executing}. However, their likelihood-based training \citep{guo2022generating} often misaligns with downstream goals such as semantic understanding \citep{tan2024sopo, su2026generationenhancesunderstandingunified}, motion plausibility \citep{wang2024aligning}, and user preference \citep{tan2026consistentrftreducingvisualhallucinations}.

To bridge this gap, reinforcement learning from human feedback (RLHF) \citep{kirstain2023pick} has been explored to fine-tune diffusion models toward human preferences and task-specific goals. Existing approaches include differentiable reward methods \citep{clark2024directly}, reinforcement learning \citep{black2023training}, and direct preference optimization (DPO) \citep{Wallace_2024_CVPR}. Among these, DPO provides a effective way to align models using preference pairs. However, acquiring large-scale, high-quality preference pairs remains challenging due to the cost and difficulty of capturing nuanced semantic and preference signals. A more efficient alternative is to fine-tune models using a reward model that captures semantic alignment and task preference. Reinforcement learning methods, such as DDPO \citep{black2023training} and DPOK \citep{NEURIPS2023_fc65fab8}, treat the denoising trajectory as a Markov Decision Process, where intermediate motions are states and final motions are evaluated by a reward model. Differentiable reward methods, including DRaFT \citep{clark2024directly} and DRTune \citep{wu2025drtune}, directly backpropagate gradients from a differentiable reward $\mathcal{R}(\mathbf{x}^\theta)$ to optimize the model $\theta$.

However, these methods still face several primary limitations that hinder their application to diffusion-based motion generation: 
(1) \textbf{Sparse and coarse-grained optimization:} Most approaches only optimize model parameters $\theta$ once after completing a multi-step denoising trajectory, resulting in sparse optimization signals and slowing down convergence.  
(2) \textbf{Excessive memory consumption:} These methods optimize the model $\theta$ by backpropagating the gradients of the reward value $\nabla_\theta \mathcal{R}(\mathbf{x}_\theta)$, which is related to the overall denoising trajectory. Notably, this requires storing a large computation graph of the entire trajectory, leading to excessive memory consumption. Beyond these computational challenges, existing methods rely on intricate designs such as early stopping or partial gradient blocking, increasing implementation complexity and limiting applicability. Moreover, research on motion-specific reward models is limited, so current approaches \citep{tan2024sopo} typically use general-purpose retrieval models, which may inadequately capture motion preferences.

\noindent \textbf{Contributions.} In this work, we first theoretically ({Corollary}~\ref{thm:t1})  and empirically (Fig.~\ref{fig:memory}) identify a key factor of the significant computational and memory overhead: \emph{the optimization is recursively related to the multi-step denoising trajectory,} causing the reward value of generated motions, {\scriptsize $\mathcal{R}(\mathbf{x}^\theta)$}, to be recursively depended on each denoised step throughout the overall trajectory. Specifically, each denoised motion {\scriptsize $\mathbf{x}_t^\theta$} is generated from the diffusion model, {\scriptsize $\mathbf{x}_t^\theta \sim \epsilon_\theta$}, and recursively dependent on previous steps {\scriptsize $\mathbf{x}_t^\theta \sim \mathbf{x}_{t+1}^\theta$}. Thus, computing the gradient {\scriptsize $\nabla_\theta \mathbf{x}_t^\theta$} requires solving for that of the prior step, {\scriptsize $\nabla_\theta \mathbf{x}^\theta_1$}, which in turn depends on those of subsequent steps, {\scriptsize $\nabla_\theta \mathbf{x}^\theta_2$, $\nabla_\theta \mathbf{x}^\theta_3$, ..., $\nabla_\theta \mathbf{x}^\theta_T$}, leading to the large significant computational and memory overhead. Building on this, we then theoretically analyze and empirically validate (Fig.\ref{fig:norm}) the primary limitation of existing methods: \emph{coarse-grained chain optimization leads to vanishing gradients, hindering optimization of early denoising steps.}
\begin{figure}[t!]
    \centering
    \includegraphics[width=\textwidth]{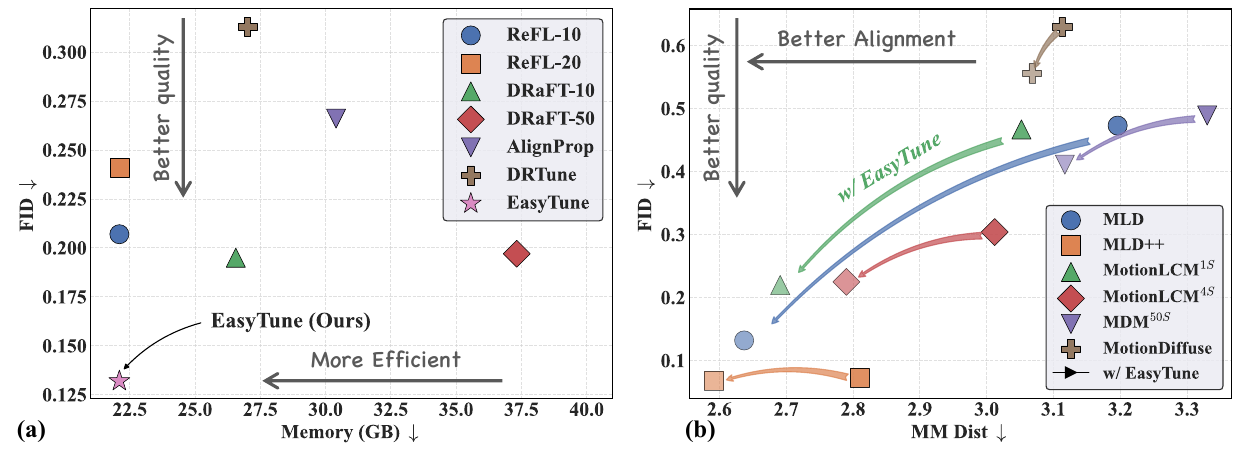}
    \vspace{-0.8cm}
    \caption{\textbf{Comparison of the training costs and generation performance on HumanML3D}~\citep{guo2022generating}. (a) Performance comparison of different fine-tuning methods \citep{clark2024directly, prabhudesai2023aligning, wu2025drtune}. (b) Generalization performance across six pre-trained diffusion-based models \citep{chen2023executing, motionlcm-v2, Dai2025, tevet2023human, zhang2022motiondiffuse}.}
    \vspace{-0.3cm}
    \label{fig:intro}
\end{figure}
To address this, we introduce a simple and effective insight: \textbf{\emph{perform optimization at each denoising step, thereby decoupling gradients from the full reverse trajectory.}} By decoupling gradients from the denoising trajectory, our \emph{EasyTune} framework facilitates: (1) dense and fine-grained optimization through clearing the computational graph after each denoising step, (2) avoiding storing them until denoising completes, and thus (3) obviation of the need for complex memory-saving techniques.

Nevertheless, two critical challenges remain: the lack of a reliable motion reward model and reward perception for intermediate denoising steps. The first issue stems from limited large-scale, high-quality preference data, making it difficult to train a motion-specific reward model directly. To overcome this, we propose a \textbf{S}elf-refinement \textbf{P}reference \textbf{L}earning (\textbf{SPL}), which adapts a pre-trained text-to-motion retrieval model for preference evaluation without human annotations.  We dynamically construct preference pairs from the retrieval datasets and fail-retrieved results, and fine-tune this retrieval model to capture implicit preferences. For noisy intermediate steps, we employ single-step prediction rewards for ODE-based models and noise-aware rewards for SDE-based models.

Finally, we evaluate {EasyTune} on {HumanML3D} \citep{guo2022generating} and {KIT-ML} \citep{plappert2016kit} with six pre-trained diffusion models. As shown in Fig.\ref{fig:intro}, EasyTune achieves SoTA performance (FID = \textbf{0.132}, \textbf{72.1\%} better than MLD \citep{chen2023executing}), while cutting memory usage to \textbf{22.10} GB and achieving a \textbf{7.3$\times$} training speedup over DRaFT. In summary, our contributions are as follows:

\noindent
\begin{enumerate}[leftmargin=*, topsep=0pt, partopsep=0pt, itemsep=0pt]
    \item We theoretically and empirically identify the cost and performance limitations of existing differentiable-reward methods, and propose EasyTune, an effective step-aware fine-tuning method.
    \item To the best of our knowledge, this work is the first to fine-tune diffusion-based text-to-motion models with a differentiable reward. To achieve this, we introduce SPL to fine-tune a pre-trained retrieval model for preference evaluation without human-annotated preference pairs.
    \item Extensive experiments demonstrate that EasyTune significantly outperforms existing methods in terms of performance, optimization efficiency, and storage requirements.
\end{enumerate}

\section{Related Works}

{\noindent \textbf{Text-to-Motion Generation.} Text-to-motion generation~\citep{chen2023executing, guo2023momask} produces human motion sequences from textual descriptions. Among these works, as a powerful generative model, diffusion models~\citep{chen2023executing, tevet2023human} iteratively denoise latent motions under text guidance, offering higher quality and stability. Recent advances include transformer-based diffusion with geometric losses~\citep{tevet2023human}, few-step controllable inference~\citep{Dai2025}, and hybrid discrete-continuous modeling~\citep{meng2024rethinking}. However, these methods primarily target the pretraining stage by aligning to fixed dataset distributions~\citep{guo2022generating}, yet they remain misaligned with semantic coherence~\citep{tan2024sopo} and physical reality.}

{\noindent \textbf{Post-training in Motion Generation.} Recent studies investigate post-training for motion generation to improve semantic coherence~\citep{tan2024sopo, pappa2024modipo} and physical realism~\citep{Yuan2023PhysDiff, HanReinDiffuse2025, motioncritic2025}. Specifically, \cite{tan2024sopo} constructs semi-online preference pairs of semantically aligned and misaligned motions and optimizes the model with a DPO-based objective. Motioncritic~\citep{motioncritic2025} curates human preferences and applies PPO to enhance realism. \citep{HanReinDiffuse2025} adopts rule-based rewards. \cite{tan2024frequency} dynamically generate fake motion to improve motion detection ability. However, these methods mostly optimize within the pretraining reward domain, while the reward model operates in a separate and often black-box space~\citep{janner2019trust, yao2022controlvae, yao2024moconvq}, leading to limited and potentially insufficient feedback~\citep{motioncritic2025}. They also exhibit substantial data dependence~\citep{tan2024sopo,pappa2024modipo}. In contrast, we improve semantic alignment in text-to-motion generation using a differentiable reward model and a lightweight RL algorithm. We therefore focus on text-to-motion, where semantic alignment generally takes precedence over physical realism.}

{\noindent \textbf{Differentiable Reward Fine-Tuning for Diffusion Models.} Fine-tuning pre-trained diffusion models \citep{clark2024directly, prabhudesai2023aligning, wu2025drtune} with differentiable reward models is a key strategy for aligning models to downstream objectives, including semantics~\citep{tan2024sopo}, and safety~\citep{su2025safe, wang2026safemo}. However, as discussed in Sec.\ref{sec:motivation}, these approaches are limited by sparse gradients, slow convergence, and high memory costs.}

\section{Motivation: Rethinking Differentiable Reward-based Methods} 
\label{sec:motivation}

\textbf{Preliminaries.} As illustrated in Fig.~\ref{fig:framework}, existing methods fine-tune a pre-trained motion diffusion model by maximizing the reward value $\mathcal{R}_\phi(\mathbf{x}_0^\theta, c)$ of the motion  $\mathbf{x}_0^\theta$ generated via a $T$-step reverse process. Notably, this generated motion $\mathbf{x}_t^\theta$ requires retaining gradients throughout the entire denoising trajectory $\mathbf{x}_t^\theta$, and thus the model can be optimized via maximizing its reward value $\nabla_\theta \mathcal{R}_\phi(\mathbf{x}_0^\theta, c)$.

Given a pre-trained motion diffusion model parameterized by $\epsilon_\theta$, the optimization objective is to fine-tune $\theta$ to maximize the reward value $\mathcal{R}_\phi(\mathbf{x}_0^\theta, c)$, with the loss defined as:
\begin{equation} \small
\mathcal{L}(\theta) = -\mathbb{E}_{c \sim \mathcal{D}_\mathrm{T}, \mathbf{x}_0^\theta \sim \pi_\theta(\cdot|c)} \left[ \mathcal{R}_\phi(\mathbf{x}_0^\theta, c) \right],
\label{eq:loss}
\end{equation}
where $c$ is a text condition from the training set $\mathcal{D}_\mathrm{T}$, and $\mathbf{x}_0^\theta$ is the motion generated from noise $\mathbf{x}_T \sim \mathcal{N}(\mathbf{0}, \mathbf{I})$ via a $T$-step reverse process $\pi_\theta$. The $t$-th step of the reverse process is denoted as:
\begin{equation}
\mathbf{x}_{t-1}^\theta = \pi_\theta(\mathbf{x}_t^\theta, t, c) := \frac{1}{\sqrt{\alpha_t}} \left( \mathbf{x}_t^\theta - \frac{\beta_t}{\sqrt{1 - \bar{\alpha}_t}} \epsilon_\theta(\mathbf{x}_t^\theta, t, c) \right),
\label{eq:reverse_process}
\end{equation}
where $\mathbf{x}_{t-1}^\theta$ is the denoised motion at step $t-1$, and $\alpha_t$, $\beta_t$ are noise schedule parameters.

\begin{figure}[t!]
    \centering
    \includegraphics[width=\textwidth]{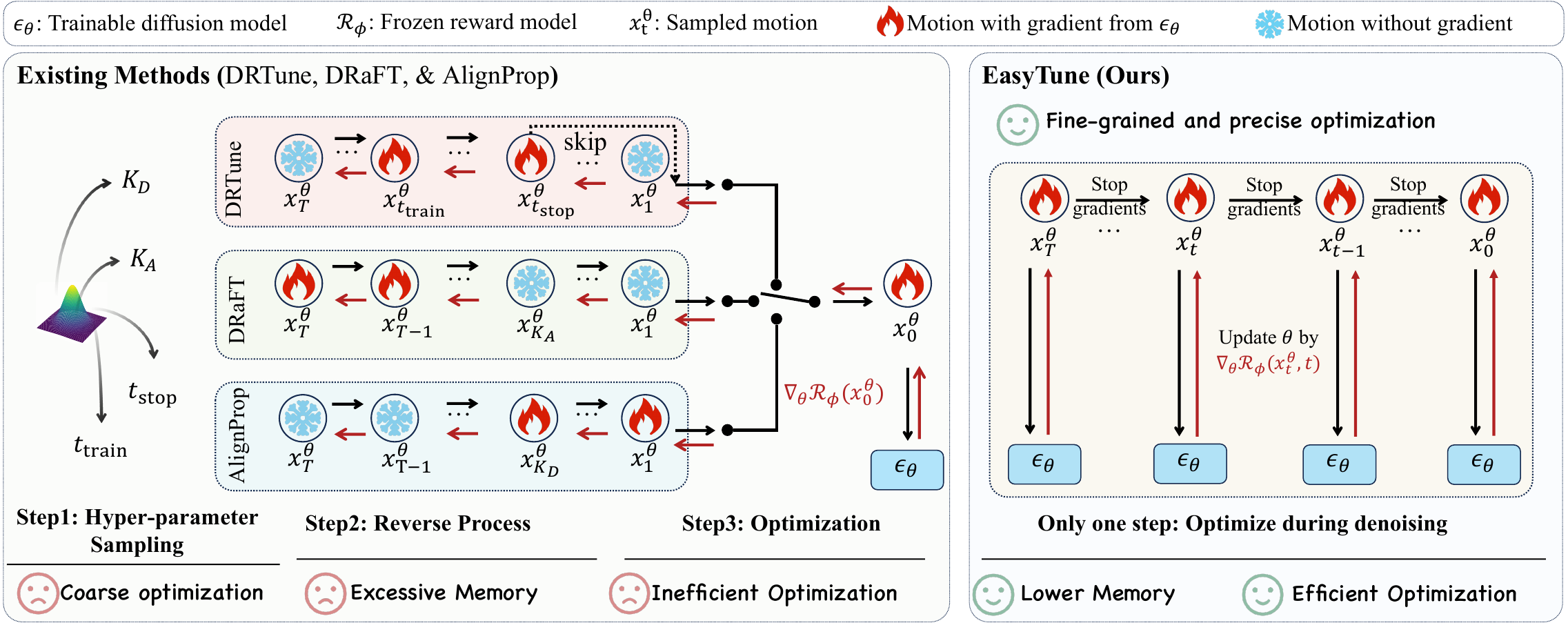}
    \vspace{-0.4cm}
    \caption{\textbf{The framework of existing differentiable reward-based methods (left) and our proposed EasyTune (right).} Existing methods backpropagate the gradients of the reward model through the overall denoising process, resulting in (1) excessive memory, (2) inefficient, and (3) coarse-grained optimization. In contrast, EasyTune optimizes the diffusion model by directly backpropagating the gradients at each denoising step, overcoming these issues.}
    \vspace{-0.2cm}
    \label{fig:framework}
\end{figure}

\textbf{Gradient Analysis.}  To optimize the loss in Eq.~\eqref{eq:loss}, we further analyze the gradient computation, where the gradient of $\mathcal{L}(\theta)$ w.r.t. the model parameters $\theta$ is computed via the chain rule:
\begin{equation} \small
\frac{\partial \mathcal{L}(\theta)}{\partial \theta} = -\mathbb{E}_{c \sim \mathcal{D}_\mathrm{T}, \mathbf{x}_0^\theta \sim \pi_\theta(\cdot|c)} \left[ \frac{\partial \mathcal{R}_\phi(\mathbf{x}_0^\theta, c)}{\partial \mathbf{x}_0^\theta} \cdot \frac{\partial \mathbf{x}_0^\theta}{\partial \theta} \right].
\label{eq:gradient}
\end{equation}
Here, $\tfrac{\partial \mathcal{R}_\phi(\mathbf{x}^\theta_0, c)}{\partial \mathbf{x}^\theta_0}$ represents the gradient of the reward model w.r.t. the generated motion, and $\tfrac{\partial \mathbf{x}^\theta_0}{\partial \theta}$ captures the dependence of the generated motion $\mathbf{x}^\theta_t$ on the model $\theta$ through the reverse trajectory. 

Eq.~\eqref{eq:gradient} indicates that the gradient of loss function can be divided into two terms: ${\partial \mathcal{R}_\phi(\mathbf{x}^\theta_0, c)}/{\partial \mathbf{x}^\theta_0}$, which can be directly computed from the reward model, and ${\partial \mathbf{x}^\theta_0}/{\partial \theta}$, which depends on the denoising trajectory $\pi_\theta$. Here, we introduce {Corollary} \ref{thm:t1} to analyze this gradient (See the proof in App. \ref{supp:the1}).
\begin{corollary}\label{thm:t1}
    Given the reverse process in Eq.~\eqref{eq:reverse_process}, $\mathbf{x}_{t-1}^\theta = \pi_\theta(\mathbf{x}_t^\theta, t, c)$, the gradient w.r.t diffusion model $\theta$, denoted as $\tfrac{\partial \mathbf{x}^\theta_{t-1}}{\partial \theta}$, can be expressed as:
    \begin{equation}\small
            {\frac{\partial \mathbf{x}^\theta_{t-1}}{\partial \theta}} =  \frac{\partial \pi_{\textcolor{cyan}{\theta}}(\mathbf{x}^{\theta}_t, t, c)}{\partial \textcolor{cyan}{\theta}} + 
            \frac{\partial \pi_\theta(\mathbf{x}^{\textcolor{red}{\theta}}_t, t, c)}{\partial \mathbf{x}^{\textcolor{red}{\theta}}_t} \cdot {\frac{\partial \mathbf{x}^{\textcolor{red}{\theta}}_t}{\partial \textcolor{red}{\theta}}}.
        \label{eq:recursive_gradient}
    \end{equation}
\end{corollary}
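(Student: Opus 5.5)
The plan is to treat the reverse step as a composite map of $\theta$ and apply the multivariable chain rule. Write $\pi(\vartheta, \mathbf{y}) := \pi_\vartheta(\mathbf{y}, t, c)$ for a fixed step $t$ and condition $c$, regarded as a function of two independent arguments: parameters $\vartheta$ and input $\mathbf{y}$. Then $\mathbf{x}_{t-1}^\theta = \pi(\theta, \mathbf{x}_t^\theta)$, where $\mathbf{x}_t^\theta$ is itself a function of $\theta$ through the preceding steps $\mathbf{x}_T \mapsto \mathbf{x}_{T-1}^\theta \mapsto \cdots \mapsto \mathbf{x}_t^\theta$. The initial noise $\mathbf{x}_T$ does not depend on $\theta$, so $\partial \mathbf{x}_T/\partial\theta = 0$ serves as the base case.

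The key step is to write the total derivative of $g(\theta) := \pi(\theta, h(\theta))$ with $h(\theta) = \mathbf{x}_t^\theta$. This is the chain rule for the map $\theta \mapsto (\theta, h(\theta)) \mapsto \pi$:
\[
\frac{d g}{d\theta} = \partial_\vartheta \pi(\theta, h(\theta)) + \partial_{\mathbf{y}} \pi(\theta, h(\theta)) \cdot \frac{d h}{d\theta}.
\]
The first term is the explicit dependence of $\pi_\theta$ on its parameters, holding $\mathbf{x}_t$ fixed (the cyan term). The second term is the Jacobian of the step with respect to its input, multiplied by the recursive derivative $\partial \mathbf{x}_t^\theta / \partial\theta$ (the red terms). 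To make the pieces explicit with Eq.~\eqref{eq:reverse_process}, I would compute
\[
\partial_\vartheta \pi = -\frac{\beta_t}{\sqrt{\alpha_t}\sqrt{1-\bar\alpha_t}}\,\partial_\vartheta \epsilon_\vartheta(\mathbf{x}_t, t, c)
\]
and
\[
\partial_{\mathbf{y}} \pi = \frac{1}{\sqrt{\alpha_t}}\Big(\mathbf{I} - \frac{\beta_t}{\sqrt{1-\bar\alpha_t}}\,\partial_{\mathbf{y}}\epsilon_\vartheta\Big),
\]
both evaluated at $(\theta, \mathbf{x}_t^\theta)$.

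The main obstacle is notational rather than mathematical. In Eq.~\eqref{eq:recursive_gradient} the symbol $\theta$ appears in two roles, and the derivation has to state clearly that each partial derivative freezes the other argument. The two-argument function $\pi(\vartheta, \mathbf{y})$ resolves this. The only regularity needed is that $\epsilon_\theta$ is differentiable in both the parameters and the input, which I would assume for the neural network.

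Finally, I would note that applying the identity inductively from $t = T$ down to $t = 1$ unrolls $\partial \mathbf{x}_0^\theta / \partial\theta$ into a sum over steps $s$ of products of input Jacobians times $\partial_\vartheta \pi$ at step $s$. This unrolled form is what exposes the recursive dependence used in Eq.~\eqref{eq:gradient}.
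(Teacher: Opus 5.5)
Your proposal is correct and follows the paper's own proof: the paper likewise sets $u=\mathbf{x}_t^\theta$, $v=\theta$, $F(u,v)=\pi_v(u,t,c)$ and applies the two-variable chain rule with $\partial v/\partial\theta$ equal to the identity, which is your map $\theta\mapsto(\theta,h(\theta))$. The paper omits the explicit Jacobians of the reverse step and the base case $\partial\mathbf{x}_T/\partial\theta=0$; your versions of both are correct.
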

{Corollary} \ref{thm:t1} shows that the computation involves two parts: (1) a direct term (in blue) from the dependence of the diffusion model $\pi_{{\theta}}$ on ${\theta}$, and (2) an indirect term (in red) that depends on the $t$-th step generated motion $\mathbf{x}^{{\theta}}_t$. However, the reverse process in diffusion models is inherently recursive, where the denoised motion $\mathbf{x}_{t-1}$ is relied on $\mathbf{x}_t$, which in turn depends on $\mathbf{x}_{t+1}$, resulting in substantial computational complexity for $T$ time steps intermediate variables.

To compute the full gradient ${\partial \mathcal{L}(\theta)}/{\partial \theta}$, we unroll the ${\partial \mathbf{x}_0^\theta}/{\partial \theta}$ using {Corollary} \ref{thm:t1} and substitute it into Eq.~\eqref{eq:gradient} resulting in (see proof in App. \ref{supp:eq5}):
\begin{equation}\small
    \frac{\partial \mathcal{L}(\theta)}{\partial \theta} = -\mathbb{E}_{c\sim \mathcal{D}_\mathrm{T}, \mathbf{x}^\theta_0 \sim \pi_\theta(\cdot|c)} \Bigg[ \frac{\partial \mathcal{R}_\phi(\mathbf{x}^\theta_0)}{\partial \mathbf{x}^\theta_0}  \cdot \sum_{t=1}^T \underbrace{\left( \prod_{s=1}^{t-1} \frac{\partial \pi_\theta(\mathbf{x}^\theta_s, s, c)}{\partial \mathbf{x}^\theta_s} \right)}_{\text{tend to 0 when t is larger}} \underbrace{\left(\frac{\partial \pi_\theta(\mathbf{x}^\theta_t, t, c)}{\partial \theta}\right)}_{\text{optimizing t-th step}} \Bigg].
    \label{eq:final_gradient}
\end{equation}


\noindent \textbf{Limitations.} Eq.~\eqref{eq:final_gradient} reveals the core optimization mechanism of existing methods: the motions $\mathbf{x}^\theta_0$ are generated via the reverse process $\pi_\theta$, with the full computation graph preserved to enable the maximization of the reward $\mathcal{R}_\phi(\mathbf{x}^\theta_0, c)$. However, as shown in Fig. \ref{fig:framework}, this optimization incurs severe limitations: 

\begin{enumerate}[label=(\arabic*), leftmargin=*, topsep=-4pt, partopsep=0pt, itemsep=-0.5pt]
    \item \emph{Memory-intensive and sparse optimization}: Gradient computation over $T$ reverse steps demands storing the entire trajectory $\{\mathbf{x}^\theta_t\}_{t=1}^T$ and corresponding Jacobians, leading to high memory consumption and inefficient, sparse optimization compared to the sampling process.
    \item \emph{Vanishing gradient due to coarse-grained optimization}: Eq.~\eqref{eq:final_gradient} indicates that the optimization of $t$-th noisy step relies on the gradient $\frac{\partial \pi_\theta(\mathbf{x}^\theta_t, t, c)}{\partial \theta}$ with a coefficient $\prod_{s=1}^{t-1} \frac{\partial \pi_\theta(\mathbf{x}^\theta_s, s, c)}{\partial \mathbf{x}^\theta_s}$. However, during optimization, the term $\frac{\partial \pi_\theta(\mathbf{x}^\theta_t, t, c)}{\partial \mathbf{x}^\theta_t}$ tends to converge to 0 (see the blue line in Fig.~\ref{fig:norm}), causing the coefficient $\prod_{s=1}^{t-1} \frac{\partial \pi_\theta(\mathbf{x}^\theta_s, s, c)}{\partial \mathbf{x}^\theta_s}$ to also approach 0 (see the orange line in Fig.~\ref{fig:norm}). Consequently, the optimization process tends to neglect the contribution of $\frac{\partial \pi_\theta(\mathbf{x}^\theta_t, t, c)}{\partial \theta}$. More importantly, the ignored optimization at these early noise steps may be more crucial than at later ones \citep{xie2025dymo}.
\end{enumerate}

\begin{wrapfigure}[14]{r}{0.4\linewidth}
    \vspace{-0.4cm}
    \begin{minipage}{\linewidth}
        \includegraphics[width=\textwidth]{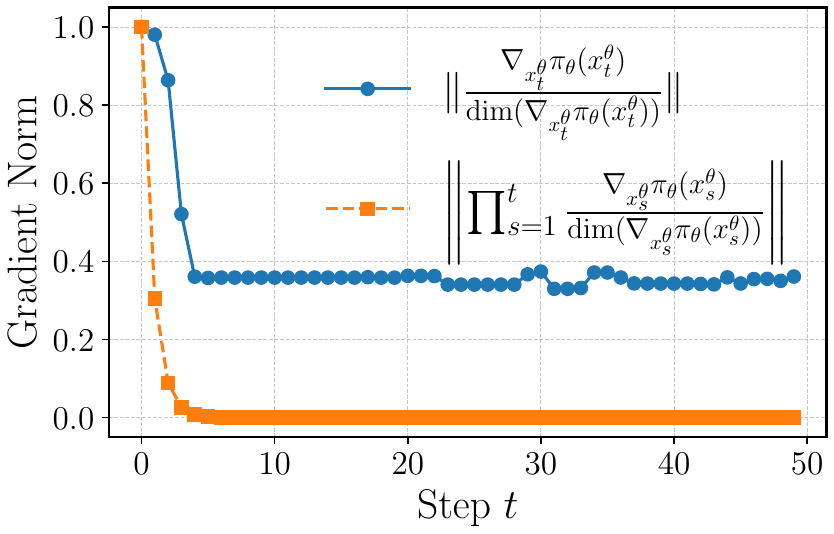}
        \vspace{-0.95cm}
        \caption{\textbf{Gradient norm with respect to denoising steps}.  Here, $\text{dim}(\cdot)$ denotes the gradient dimension. Detailed settings are provided in App. \ref{supp:grad_analysis}.}        
        \label{fig:norm}
        \end{minipage}
\end{wrapfigure}

\noindent \textbf{Motivation.} To address the aforementioned limitations, we argue that the key issue lies in {Corollary} \ref{thm:t1}: {the computation of ${\partial \mathbf{x}_t^\theta}/{\partial \theta}$ recursively depends on ${\partial \mathbf{x}_{t+1}^\theta}/{\partial \theta}$, making the computation of ${\partial \mathbf{x}_0^\theta}/{\partial \theta}$ reliant on the entire T-step reverse process}. This dependency necessitates storing a large computation graph, resulting in substantial memory consumption and delayed optimization. To overcome this, an intuitive insight is introduced: \emph{{optimizing the gradient step-by-step during the reverse process.}} As illustrated in Fig.~\ref{fig:framework}, step-by-step optimization offers several advantages: \emph{(1) Lower memory consumption and dense optimization}: each update only requires the computation graph of the current step, allowing gradients to be computed and applied immediately instead of waiting until the end of the $T$-step reverse process. \emph{(2) Fine-grained optimization}: each step is optimized independently, so that the update of the $t$-th step does not depend on the vanishing product of coefficients $\prod_{s=1}^{t-1} \frac{\partial \pi_\theta(\mathbf{x}^\theta_s, s, c)}{\partial \mathbf{x}^\theta_s}$.

\begin{wrapfigure}[10]{r}{0.35\linewidth}
    \vspace{-0.3cm}
    \begin{minipage}{\linewidth}
        \includegraphics[width=\textwidth]{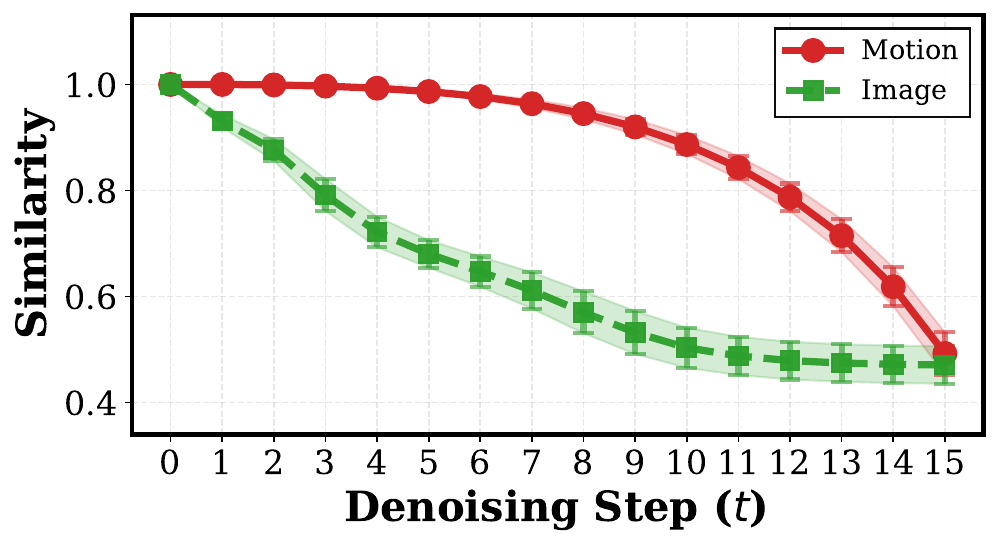}
        \vspace{-0.6cm}
        \caption{\textbf{Similarity between $t$-th step noised and clean motion.}} 
        \label{fig:motion_image_cmp}
    \end{minipage}
\end{wrapfigure}

However, in domains such as image generation, reward are predominantly output-level \citep{xu2023imagereward} rather than step-aware, since noised states with complex semantics are difficult to interpret. In contrast, motion representations exhibit simpler and more interpretable semantics, thereby making step-aware motion reward viable (Fig.~\ref{fig:motion_image_cmp}; see further details in App.~\ref{supp:image}).

Inspired by the above discussion, we propose EasyTune, a step-aware differentiable reward-based fine-tuning framework for diffusion models, introduced in Sec.\ref{sec:EasyTune}. Specifically, EasyTune employs a step-aware differentiable reward model designed to evaluate noised, rather than clean, motion data, allowing us to perform optimization at each step without storing multi-steps computation graph. Nevertheless, due to the scarcity of human-annotated motion pairs, the primary challenge lies in training such a reward model without any paired data. To address this issue, we present a self-refinement preference learning mechanism, in Sec.\ref{sec:preference_learning}, to identify preference data pairs specifically targeting the weaknesses of the pre-trained model, facilitating the acquisition of a reward model.

\section{Method}
    
\subsection{Efficient Step-Aware Fine-Tuning for Motion Diffusion}
\label{sec:EasyTune}

Assuming the reward model for evaluating noisy motion, we aim to propose a step-aware fine-tuning method that reduces the excessive memory usage and performs efficient and fine-grained optimization. As discussed in Sec. \ref{sec:motivation}, limitations of existing methods \citep{clark2024directly, wu2025drtune} stem from the recursive gradients computation. To address these issues, we introduce EasyTune, a simple yet effective method for fine-tuning motion diffusion models. The key idea is to maximize the reward value at each step, allowing the parameter to be optimized at each step without storing the full trajectory, as shown in Fig. \ref{fig:insight}. Specifically, the training objective function is defined as:
\begin{equation}\small
    \begin{aligned}
        \mathcal{L}_{\mathrm{EasyTune}}(\theta) = -\mathbb{E}_{c\sim \mathcal{D}_\mathrm{T}, \mathbf{x}^\theta_t \sim \pi_\theta(\cdot|c), t\sim \mathcal{U}(0, T)} \left[ \mathcal{R}_{\phi} (\mathbf{x}^\theta_t, t, c) \right],
    \end{aligned}
    \label{eq:loss_ours}
\end{equation}
where \(\mathcal{R}_{\phi} (\mathbf{{x}}^\theta_t, t, c)\) is the reward value of the \emph{stop gradient} noised motion \(\mathbf{{x}}^\theta_t\) at time step \(t\), and \(\mathcal{U}(0, T)\) is a uniform distribution over the time steps. Here, the {stop gradient} noised motion \(\mathbf{{x}}^\theta_t\) and its gradient w.r.t. the diffusion parameter $\theta$ are represents as:
\begin{equation}\small
    \begin{aligned}
        \mathbf{x}^\theta_{t-1} = \pi_\theta \big({\mathrm{sg}}(\mathbf{x}^\theta_t), t, c \big) := \frac{1}{\sqrt{\alpha_t}} \left( {\mathrm{sg}}(\mathbf{x}^\theta_t) - \frac{\beta_t}{\sqrt{1 - \bar{\alpha}_t}} \epsilon_\theta\big({\mathrm{sg}}(\mathbf{x}^\theta_t), t, c\big) \right),
    \end{aligned}
    \label{eq:ourgradient}
\end{equation}
where \({\mathrm{sg}}(\cdot)\) denotes the stop gradient operations. Eq. \eqref{eq:loss_ours} and Eq. \eqref{eq:ourgradient} indicate that EasyTune aims to optimize the diffusion model by maximizing the reward value of the noised motion \(\mathbf{x}^\theta_t\) at each step \(t\). 

\begin{figure}[bt!]
    \centering
    \includegraphics[width=\textwidth]{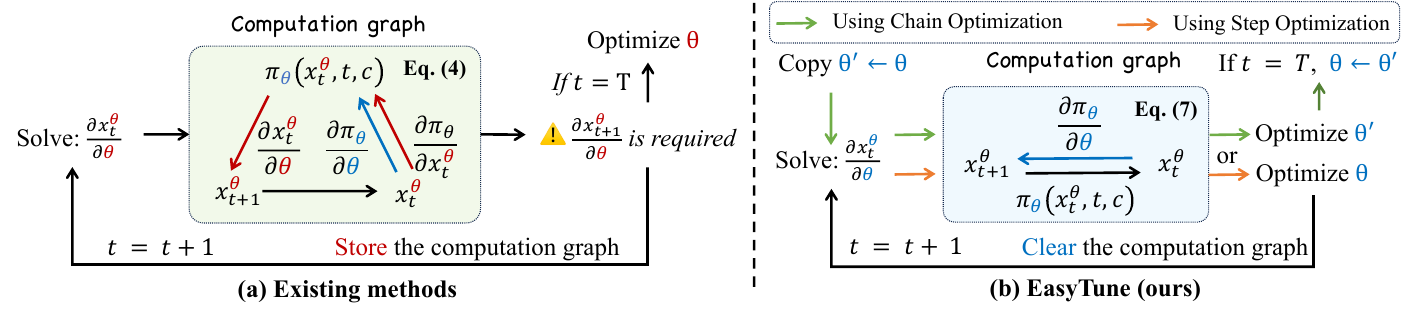}
    \vspace{-0.3cm}
    \caption{\textbf{Core insight of EasyTune.} By replacing the recursive gradient in Eq.\eqref{eq:recursive_gradient} with step-level ones in Eq.\eqref{eq:ourgradient}, \emph{EasyTune} removes recursive dependencies, enabling (1) step-wise graph storage, (2) efficiency, and (3) fine-grained optimization. See App.~\ref{supp:discussion} for pseudocode and discussion.}
    \vspace{-0.2cm}
    \label{fig:insight}
\end{figure}

\begin{corollary}\label{the:t2}
    Given the reverse process in Eq. \eqref{eq:ourgradient}, the gradient w.r.t. diffusion model $\theta$ is denoted as:
    \begin{equation}\small
        \begin{aligned}
            {\frac{\partial \mathbf{x}^{{\theta}}_{t-1}}{\partial {\theta}}} = \frac{\partial \pi_{{\theta}}\big({\mathrm{sg}}(\mathbf{x}^\theta_t), t, c\big)}{\partial {\theta}}.
        \end{aligned}
        \label{eq:ourgradient_the}
    \end{equation}
\end{corollary}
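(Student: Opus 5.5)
The argument mirrors the proof of Corollary~\ref{thm:t1}. We apply the multivariate chain rule to the modified reverse step in Eq.~\eqref{eq:ourgradient}, and then use the defining property of the stop-gradient operator to remove the indirect term.

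First, write the step as a composition. Define $\mathbf{z}_t := \mathrm{sg}(\mathbf{x}^\theta_t)$, so that $\mathbf{x}^\theta_{t-1} = \pi_\theta(\mathbf{z}_t, t, c)$. Here $\pi_\theta(\mathbf{z}, t, c) = \frac{1}{\sqrt{\alpha_t}}\big(\mathbf{z} - \frac{\beta_t}{\sqrt{1-\bar\alpha_t}}\epsilon_\theta(\mathbf{z}, t, c)\big)$ is differentiable both in its explicit parameter argument $\theta$ and in its input $\mathbf{z}$. Exactly as in Eq.~\eqref{eq:recursive_gradient}, the total derivative splits into a direct and an indirect part:
\begin{equation*}
\frac{\partial \mathbf{x}^\theta_{t-1}}{\partial \theta} = \frac{\partial \pi_\theta(\mathbf{z}_t, t, c)}{\partial \theta} + \frac{\partial \pi_\theta(\mathbf{z}_t, t, c)}{\partial \mathbf{z}_t}\cdot\frac{\partial \mathbf{z}_t}{\partial \theta}.
\end{equation*}

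Second, invoke the formal semantics of $\mathrm{sg}(\cdot)$. In the forward pass it is the identity, so $\mathbf{z}_t = \mathbf{x}^\theta_t$ numerically. In the backward pass it has zero Jacobian: $\partial \mathrm{sg}(\mathbf{u})/\partial \mathbf{u} = \mathbf{0}$. Consequently
\begin{equation*}
\frac{\partial \mathbf{z}_t}{\partial \theta} = \frac{\partial \mathrm{sg}(\mathbf{x}^\theta_t)}{\partial \mathbf{x}^\theta_t}\cdot\frac{\partial \mathbf{x}^\theta_t}{\partial \theta} = \mathbf{0}.
\end{equation*}
This holds regardless of how $\mathbf{x}^\theta_t$ was produced by earlier steps. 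Equivalently, $\mathbf{z}_t$ is treated as a constant with respect to $\theta$. Substituting $\mathbf{0}$ annihilates the second term. We are left with $\partial \pi_\theta(\mathrm{sg}(\mathbf{x}^\theta_t), t, c)/\partial\theta$, which is Eq.~\eqref{eq:ourgradient_the}.

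The main point requiring care is this second step: one must be precise about what "gradient" means once $\mathrm{sg}$ is present. The right notion is not a true derivative of the composite function, since numerically $\mathbf{x}^\theta_t$ does depend on $\theta$. It is the surrogate derivative computed by automatic differentiation, in which $\mathrm{sg}$ is the operator with identity forward map and zero backward map. I would state this convention explicitly before the chain-rule step. With that convention, the result can also be read off by comparison with Eq.~\eqref{eq:recursive_gradient}: the red indirect term is multiplied by a zero Jacobian, so the recursion to earlier steps, and hence the product structure of Eq.~\eqref{eq:final_gradient}, is cut off at step $t$.
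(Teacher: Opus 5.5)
Your proposal is correct and follows essentially the argument the paper relies on. The paper gives no separate written proof of Corollary~\ref{the:t2}; it reuses the chain-rule split from its proof of Corollary~\ref{thm:t1} and drops the indirect term because $\mathrm{sg}(\mathbf{x}^\theta_t)$ has zero Jacobian with respect to $\theta$. You go slightly further by stating explicitly that the result is the autodiff surrogate derivative rather than the true total derivative, a point the paper leaves implicit.
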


{Corollary}~\ref{the:t2} shows that EasyTune overcomes the recursive gradient issue, enabling efficient, fine-grained updates with substantially reduced memory. As Fig.~\ref{fig:memory} illustrates, while prior methods incur $\mathcal{O}(T)$ memory by storing the multi-steps trajectory, EasyTune maintains a constant $\mathcal{O}(1)$ memory. Guided by {Corollary}~\ref{the:t2}, we optimize the loss function $\mathcal{L}_{\mathrm{EasyTune}}(\theta)$ as follows:
    \begin{equation}\small
        \begin{aligned}
        \mathcal{L}_\mathrm{EasyTune} (\theta) =  -\mathbb{E}_{c\sim \mathcal{D}_\mathrm{T}, \mathbf{x}^\theta_t \sim \pi_\theta(\cdot|c), t\sim \mathcal{U}(0, T)} \frac{\partial \mathcal{R}(\mathbf{x}^\theta_t, t, c)}{\partial \mathbf{x}^\theta_{t-1}} \cdot \frac{\partial \pi_{{\theta}}\big({\mathrm{sg}}(\mathbf{x}^\theta_t), t, c\big)}{\partial {\theta}}.
        \end{aligned}
        \label{eq:loss_ours_expanded}
    \end{equation}

\begin{wrapfigure}[13]{r}{0.4\linewidth}
    \vspace{-0.1cm}
    \begin{minipage}{\linewidth}
        \includegraphics[width=\textwidth]{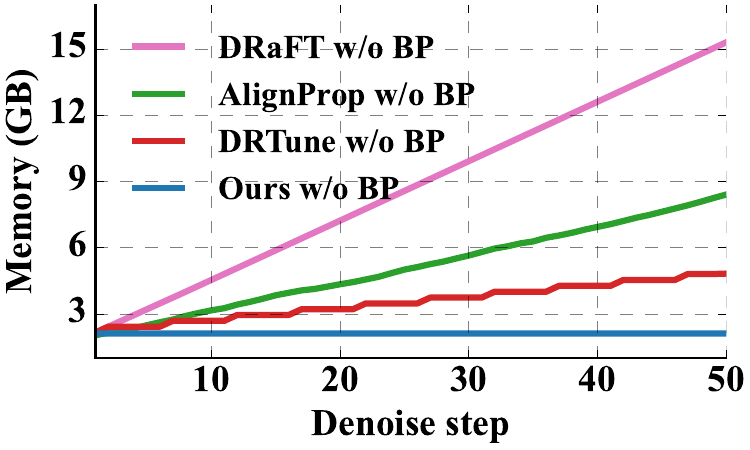}
        \vspace{-0.5cm}
        \caption{\textbf{Memory usage comparison.} Here, ``w/o BP'' indicates memory measured without backpropagation. Comprehensive analysis are in App.~\ref{supp:overhead}.}        \label{fig:memory}
        \end{minipage}
    \end{wrapfigure}
    \textbf{Discussion of Existing Methods.} Unlike prior methods (Eq.~\eqref{eq:final_gradient}), EasyTune updates the diffusion model $\theta$ using Eq.~\eqref{eq:loss_ours}, computing the gradient $\frac{\partial \pi_\theta({\mathrm{sg}}(\mathbf{x}^\theta_t), t, c)}{\partial \theta}$ at each step $t$ without storing the full $\mathcal{O}(T)$-step computation graph. Among related works, the closest is {DRTune}~\citep{wu2025drtune}, which also uses stop-gradient operations $\mathrm{sg}(\cdot)$ to solve the limitations of previous methods:
    \begin{equation}\scriptsize
        \begin{aligned}
            \mathbf{{x}}^\theta_{t-1} & = \frac{1}{\sqrt{\alpha_t}} \left( \mathbf{x}^\theta_t - \frac{\beta_t}{\sqrt{1 - \bar{\alpha}_t}} \epsilon_\theta\big({\mathrm{sg}}(\mathbf{x}^\theta_t), t, c\big) \right), \\
            {\frac{\partial \mathbf{x}^\theta_{t-1}}{\partial \theta}} & = \frac{1}{\sqrt{\alpha_t}} \left({\frac{\partial \mathbf{x}^\theta_{t}}{\partial \theta}} - \frac{\beta_t}{\sqrt{1 - \bar{\alpha}_t}} \frac{\partial
            \epsilon_\theta\big({\mathrm{sg}}(\mathbf{x}^\theta_t), t, c\big)}{\partial \theta}\right).
        \end{aligned}\label{eq:DRTune}
    \end{equation}
    However, recursive gradient computation remains an issue in existing methods (Eq.~\eqref{eq:DRTune}). As shown in Fig.~\ref{fig:memory}, their memory usage grows linearly with the number of denoising steps ($\mathcal{O}(T)$), while EasyTune maintains a constant memory footprint ($\mathcal{O}(1)$). These analyses and experiments highlight the efficiency of our method and details discussion are provided in App.~\ref{supp:discussion}.

\subsection{Self-Refining Preference Learning for Reward Model}
\label{sec:preference_learning}

Our goal is to develop a reward model without the requirement of human-labeled data. Existing works \citep{tan2024sopo} often repurpose pre-trained text-to-motion retrieval models \citep{petrovich2023tmr} to score text-motion alignment. However, this is suboptimal: retrieval models focus on matching positive pairs in a shared embedding space, whereas reward models must distinguish between preferred and non-preferred motions. Given their shared architecture, retrieval models can be fine-tuned for preference learning. The challenge, however, lies in the scarcity of such preference data in the motion domain. To this end, we propose \textbf{Self-refining Preference Learning (SPL)}, which leverages a retrieval-based auxiliary task to construct preference pairs for reward learning. SPL involves two steps: (1) \emph{Preference Pair Mining:} retrieve motions for each text; treat the ground-truth as preferred and top incorrect retrieval as non-preferred if it's not retrieved; (2) \emph{Preference Fine-tuning:} updating the encoders to assign higher scores to preferred motions. 

\textbf{Reward Model.} Given a motion \( \mathbf{x} \) and a text description \( c \), the reward value is computed based on the similarity between the motion features \( \mathbf{x} \) and text features \( c \), denoted as:
\begin{equation}
    \begin{aligned} \small
         \mathcal{R}_\phi (\mathbf{x}, c) = \mathcal{E}_\mathrm{M}(\mathbf{x}) \cdot \mathcal{E}_\mathrm{T}(c) \cdot \tau,
    \end{aligned}
    \label{eq:relevance_score}
\end{equation}
where \( \mathcal{E}_\mathrm{M} \) and \( \mathcal{E}_\mathrm{T} \) are the motion and text encoders from the pre-trained retrieval model \citep{weng2025}, and \( \tau \) is a trainable temperature parameter. 

Additionally, dealing with noisy motions remains a key challenge in step-level optimization. Current diffusion-based models can be divided into SDE-based \citep{song2020score} and ODE-based \citep{lu2022dpm} models. For ODE-based settings \citep{motionlcm-v2}, thanks to their deterministic sampling, we use the reward value of the coarse clean motion $\hat{\mathbf{x}}_0$ predicted by one-step prediction $\hat{\mathbf{x}}_0 = \pi'_\theta ({\mathbf{x}}_t,t,c)$ as the final reward value. For both SDE- and ODE-based settings \citep{tevet2023human}, we adopt a noise-aware reward model to accurately calculate their reward values:
\begin{equation}\small
\begin{aligned}
\mathcal{R}_\phi (\mathbf{x}_t, t, c)  &= 
\begin{cases}
\mathcal{R}_\phi (\hat{\mathbf{x}}_0, 0, c) , & \text{Only for ODE-based settings}, \\
\mathcal{R}_\phi ({\mathbf{x}}_t, t, c) , & \text{For SDE- and ODE-based settings}.
\end{cases}
\end{aligned}
\label{eq:reward}
\end{equation}
\textbf{Preference Data Mining} To identify non-preferred motions often incorrectly retrieved, we retrieve the top-\( k \) motions \( \mathcal{D}_\mathrm{R} \) from the training set or subset \( \mathcal{D}_\mathrm{T} \) given a text condition \( c \):
\begin{equation} \small
    \mathcal{D}_\mathrm{R} = \mathop{\mathrm{top}_k\arg\max}_{\mathbf{x} \in \mathcal{D}_\mathrm{T}}  \mathcal{R}_\phi(\mathbf{x}, c) = \mathop{\arg\max}_{\substack{
        \mathcal{D}_\mathrm{R} \subset \mathcal{D}_\mathrm{T},
        |\mathcal{D}_\mathrm{R}| = k
    }} \sum_{\mathbf{x} \in \mathcal{D}_\mathrm{R}}  \mathcal{R}_\phi(\mathbf{x}, c).
    \label{eq:retrieve}
\end{equation}
where $\mathop{\mathrm{top}_k\arg\max}_{\mathbf{x} \in \mathcal{D}_\mathrm{T}}  \mathcal{R}_\phi(\mathbf{x}, c)$ denotes the top-$k$ motion with the largest reward value.

Given a ground-truth motion \( \mathbf{x}^\mathrm{gt} \) and its text condition \( c \), we retrieve the top-\( k \) motions \( \mathcal{D}_\mathrm{R} \) based on reward scores. If \( \mathbf{x}^\mathrm{gt} \notin \mathcal{D}_\mathrm{R} \), we treat it as the preferred motion and the highest-scoring retrieved motion as the non-preferred one. Otherwise, both are set to \( \mathbf{x}^\mathrm{gt} \), and optimization is skipped:
\begin{equation}\small
\begin{aligned}
\mathbf{x}^\mathrm{w} &= \mathbf{x}^\mathrm{gt}, \quad
\mathbf{x}^\mathrm{l} = 
\begin{cases}
\arg \max_{\mathbf{x} \in \mathcal{D}_\mathrm{R}} \mathcal{R}_\phi(\mathbf{x}, c), & \text{if } \mathbf{x}^\mathrm{gt} \notin \mathcal{D}_\mathrm{R}, \\
\mathbf{x}^\mathrm{gt}, & \text{otherwise}.
\end{cases}
\end{aligned}
\label{eq:winner_loser}
\end{equation}


\textbf{Preference Fine-tuning.}  
Given a preference pair consisting of a preferred motion \( \mathbf{x}^\mathrm{w} \) and a non-preferred motion \( \mathbf{x}^\mathrm{l} \), we compute their reward scores and convert them into softmax probabilities:
\begin{equation}\small
\mathcal{P} = \left( \mathcal{P}(\mathbf{x}^\mathrm{w}, c), \mathcal{P}(\mathbf{x}^\mathrm{l}, c) \right) = \text{Softmax} \left( \mathcal{R}_\phi(\mathbf{x}^\mathrm{w}, c), \mathcal{R}_\phi(\mathbf{x}^\mathrm{l}, c) \right).
\label{eq:reward_distribution}
\end{equation}
Following Pick-a-pic~\citep{kirstain2023pick}, we optimize model by aligning the predicted softmax distribution \( \mathcal{P} \) with a target distribution \( \mathcal{Q} \), which reflects the ground-true preference between \( \mathbf{x}^\mathrm{w} \) and \( \mathbf{x}^\mathrm{l} \),  defined as:
\begin{equation}
\mathcal{Q} =
\begin{cases} 
(1.0, 0.0), & \text{if } \mathbf{x}^\mathrm{w} \text{ is preferred over } \mathbf{x}^\mathrm{l}, \\ 
(0.5, 0.5), & \text{if } \mathbf{x}^\mathrm{w} = \mathbf{x}^\mathrm{l}.
\end{cases}
\label{eq:target_distribution}
\end{equation}

Formally, the target distribution \( \mathcal{Q} \) encodes the preference between a preferred motion \( \mathbf{x}^\mathrm{w} \) and a non-preferred motion \( \mathbf{x}^\mathrm{l} \). If \( \mathbf{x}^\mathrm{w} \) is preferred, we set \( \mathcal{Q} = (1.0, 0.0) \), encouraging \( \mathcal{P}(\mathbf{x}^\mathrm{w}, c) \to 1 \) and \( \mathcal{P}(\mathbf{x}^\mathrm{l}, c) \to 0 \). If the two are identical (\( \mathbf{x}^\mathrm{w} = \mathbf{x}^\mathrm{l} \)), we set \( \mathcal{Q} = (0.5, 0.5) \), indicating no preference.

To optimize the reward model $\phi$, we minimize the KL divergence between them:
\begin{equation}\small
    \begin{aligned}
        \mathcal{L}_\mathrm{SPL} (\phi) = \mathrm{D}_\mathrm{KL}(\mathcal{Q} \parallel \mathcal{P}) = \sum_{\mathbf{x} \in \{\mathbf{x}^\mathrm{w}, \mathbf{x}^\mathrm{l}\}} \mathcal{Q}(\mathbf{x}, c) \log \frac{\mathcal{Q}(\mathbf{x}, c)}{\mathcal{P}(\mathbf{x}, c)}.
    \end{aligned}
    \label{eq:klloss}
\end{equation}
where \(\mathcal{Q}\) and \(\mathcal{P}\) are the target  and  reward distribution. By mining preference motion pairs by \textit{preference data mining}, we can fine-tune pre-trained retrieval models by Eq. \eqref{eq:klloss}, to obtain reward modesl without human-annotated data. More details are provided in App. \ref{supp:SPL}.

\section{Experiment}
\label{sec:Exp}
\subsection{Experimental Setup}
\label{sec:Exp_Setup}
\begin{table*}[tbp]
    \vspace{-0.3cm}
    \centering
    \setlength{\tabcolsep}{3pt}
    \caption{\textbf{Comparison of fine-tuning methods on HumanML3D.} Arrows $\uparrow$, $\downarrow$, and $\rightarrow$ indicate that higher, lower, and closer to real values are better. \textbf{Bold} and \underline{underline} denote the best and second-best results. {MLD baseline follows the implementation of~\citep{Dai2025}.}}
    \vspace{0.2cm}
    \resizebox{\textwidth}{!}{
    \begin{tabular}{llllllll}
        \toprule
        \multirow{2}{*}{Method} & \multicolumn{3}{c}{R Precision $\uparrow$} & \multirow{2}{*}{FID $\downarrow$} & \multirow{2}{*}{MM Dist $\downarrow$} & \multirow{2}{*}{Diversity $\rightarrow$} & \multirow{2}{*}{Memory (GB) $\downarrow$} \\
        \cmidrule(lr){2-4}
        ~ & Top 1 & Top 2 & Top 3 & & & & \\
        \midrule
        Real  & 0.511 & 0.703 & 0.797 & 0.002 & 2.974 & 9.503 & - \\
        MLD~\citep{chen2023executing} (Baseline) & 0.481 & 0.673 & 0.772 & 0.473 & 3.196 & 9.724 & 15.21 \\
        \midrule
         w/ ReFL-10  \citep{clark2024directly}         & 0.533$_\text{{\textcolor{blue}{+10.8\%}}}$ & 0.720$_\text{{\textcolor{blue}{+7.0\%}}}$ & 0.821$_\text{{\textcolor{blue}{+6.3\%}}}$ & 0.207$_\text{{\textcolor{blue}{+56.2\%}}}$ & 2.852$_\text{{\textcolor{blue}{+10.8\%}}}$  & 10.129$_\text{\textcolor{red}{-0.405}}$     & \textbf{22.10$_\text{{+\color{black}{6.89}}}$}      \\
          w/ ReFL-20  \citep{clark2024directly}         & 0.528$_\text{{\textcolor{blue}{+9.8\%}}}$ & 0.718$_\text{{\textcolor{blue}{+6.7\%}}}$ & 0.813$_\text{{\textcolor{blue}{+5.3\%}}}$ & 0.241$_\text{{\textcolor{blue}{+49.0\%}}}$ & 2.883$_\text{{\textcolor{blue}{+9.8\%}}}$  & 10.189$_\text{\textcolor{red}{-0.465}}$     & \textbf{22.10$_\text{{+\color{black}{6.89}}}$}       \\
        w/ DRaFT-10 \citep{clark2024directly}& {0.565}$_\text{{\textcolor{blue}{+17.5\%}}}$ & 0.757$_\text{{\textcolor{blue}{+12.5\%}}}$ & {0.846}$_\text{{\textcolor{blue}{+9.6\%}}}$ & 0.195$_\text{{\textcolor{blue}{+58.8\%}}}$ & {2.703}$_\text{{\textcolor{blue}{+15.4\%}}}$ & 9.851$_\text{{\textcolor{red}{-0.127}}}$ & 26.56$_\text{{+\color{black}{11.35}}}$ \\
        w/ DRaFT-50 \citep{clark2024directly}& 0.528$_\text{{\textcolor{blue}{+9.8\%}}}$ & 0.724$_\text{{\textcolor{blue}{+7.6\%}}}$ & 0.819$_\text{{\textcolor{blue}{+6.1\%}}}$ & 0.197$_\text{{\textcolor{blue}{+58.4\%}}}$ & 2.872$_\text{{\textcolor{blue}{+10.1\%}}}$ & \underline{9.641}$_\text{{\textcolor{blue}{+0.083}}}$ & 37.32$_\text{{+\color{black}{22.11}}}$ \\
        w/ AlignProp \citep{prabhudesai2023aligning}& 0.560$_\text{{\textcolor{blue}{+16.4\%}}}$ & {0.753}$_\text{{\textcolor{blue}{+11.9\%}}}$ & 0.841$_\text{{\textcolor{blue}{+8.9\%}}}$ & 0.266$_\text{{\textcolor{blue}{+43.8\%}}}$ & 2.739$_\text{{\textcolor{blue}{+14.3\%}}}$ & 9.877$_\text{{\textcolor{red}{-0.153}}}$ & 30.40$_\text{{+\color{black}{15.19}}}$ \\
        w/ DRTune \citep{wu2025drtune}& 0.549$_\text{{\textcolor{blue}{+14.1\%}}}$ & 0.746$_\text{{\textcolor{blue}{+10.8\%}}}$ & 0.836$_\text{{\textcolor{blue}{+8.3\%}}}$ & 0.313$_\text{{\textcolor{blue}{+33.8\%}}}$ & 2.795$_\text{{\textcolor{blue}{+12.5\%}}}$ & 9.930$_\text{{\textcolor{red}{-0.206}}}$ & {27.01}$_\text{{+\color{black}{11.80}}}$ \\
        w/ EasyTune (Ours,  Step Optimization) & \textbf{0.581$_\text{{\textcolor{blue}{+20.8\%}}}$} & \textbf{0.769$_\text{{\textcolor{blue}{+14.3\%}}}$} & \textbf{0.855$_\text{{\textcolor{blue}{+10.8\%}}}$} & \textbf{0.132$_\text{{\textcolor{blue}{+72.1\%}}}$} & \underline{2.637$_\text{{\textcolor{blue}{+17.5\%}}}$} & \textbf{9.465$_\text{{\textcolor{blue}{+0.183}}}$} & \textbf{22.10$_\text{{+\color{black}{6.89}}}$} \\ 
        w/ EasyTune (Ours, Chain Optimization) & \underline{0.574$_\text{{\textcolor{blue}{+19.3\%}}}$} & \underline{0.766$_\text{{\textcolor{blue}{+13.8\%}}}$} & \underline{0.854$_\text{{\textcolor{blue}{+10.6\%}}}$} & \underline{0.172$_\text{{\textcolor{blue}{+63.6\%}}}$} & \textbf{2.614$_\text{{\textcolor{blue}{+18.2\%}}}$} & {9.348$_\text{{\textcolor{blue}{+0.066}}}$} & \underline{{24.21$_\text{{+\color{black}{9.00}}}$}} \\
        \bottomrule
    \end{tabular}%
    }
    \label{tab:addlabel}%
\end{table*}%

\noindent \textbf{Datasets \& Evaluation.} We conduct experiments on HumanML3D~\citep{guo2022generating} and KIT-ML~\citep{plappert2016kit}. Following standard practice~\citep{guo2023momask, li2025lamp}, we report R-Precision@\(k\), Fréchet Inception Distance (FID), Multi-Modal Distance (MM Dist), and Diversity. We also measure peak memory usage to assess efficiency. Additionally, our SPL mechanism is evaluated using R-Precision@\(k\) under the previous setup~\citep{li2025lamp}.

\noindent \textbf{Implementation.} Our method consists of two components: fine-tuning the diffusion model with EasyTune and fine-tuning a pretrained retrieval model to obtain the reward model. For EasyTune, we evaluate pretrained backbones—MLD~\citep{chen2023executing}, MLD++~\citep{motionlcm-v2}, MotionLCM~\citep{Dai2025}, and MDM~\citep{tevet2023human}—with hyperparameters: a learning rate of {$1 \times 10^{-5}$} and a batch size of {256}. EasyTune is benchmarked against differentiable reward-based baselines with their official hyperparameters, detailed in App.\ref{supp:baseline_setting}. We fine-tune the reward model initialized with ReAlign~\citep{weng2025} using our SPL with top-$K$ samples ($K=10$). Then, the reward model is frozen and provides supervision for optimizing the diffusion model. The experimental results for the ODE-based model, using both reward computation methods from Eq.~\eqref{eq:reward}, are provided. Results corresponding to the first and second terms are presented in Tab.~\ref{tab:addlabel} and \ref{tab:sota_humanmld3d}, and Tab.~\ref{tab:PlugAndPlay_H3D}, respectively. {Experiments are conducted on a single NVIDIA RTX A6000 GPU with 48GB memory, detailed overhead is provided in App.~\ref{supp:overhead}.}

\subsection{Evaluation on Motion Diffusion Fine-Tuning} 

\noindent \textbf{Comparison with SoTA Fine-Tuning Methods.} To assess the effectiveness and efficiency of {EasyTune}, we compare it with recent state-of-the-art fine-tuning methods, including DRaFT, AlignProp, and DRTune, as shown in Tab.~\ref{tab:addlabel}. {EasyTune} consistently achieves the best overall performance across key metrics, including R-Precision, FID (0.132, +72.1\%), MM Dist (2.637, +17.5\%), and Diversity, while also requiring the least GPU memory (22.10 GB). We attribute these gains to two core designs: optimizing rewards at each denoising step for finer supervision, and discarding redundant computation graphs to reduce memory usage. 

    \begin{wraptable}[14]{r}{0.6\linewidth}
      \vspace{-0.9cm}
      \begin{minipage}{\linewidth}
          \setlength{\tabcolsep}{0.5pt} 
          \centering
          \caption{\textbf{Performance enhancement of diffusion-based motion generation methods.} {For ODE samplings (MLD, MLD++, MLCM), we adopt the one-step prediction reward.}}
        \vspace{0.15cm}
          \resizebox{\textwidth}{!}{
          \begin{tabular}{l@{\hspace{2pt}} l l l l l l}
              \toprule
              \multirow{2}{*}{Method} & \multicolumn{3}{c}{R Precision $\uparrow$} & \multirow{2}{*}{FID $\downarrow$} & \multirow{2}{*}{MM Dist $\downarrow$} & \multirow{2}{*}{Diversity $\rightarrow$} \\
              \cmidrule(lr){2-4}& {Top 1} & {Top 2} & {Top 3}  & & & \\
              \midrule
              Real & 0.511 & 0.703 & 0.797 & 0.002 & 2.974 & 9.503 \\
              \midrule
              {MLD \citep{chen2023executing}} & 0.481 & 0.673 & 0.772 & 0.473 & 3.196 & 9.724 \\
              {w/ EasyTune} & 
              0.568$_{\textcolor{blue}{+18.1\%}}$ & 
              0.754$_{\textcolor{blue}{+12.0\%}}$ & 
              0.846$_{\textcolor{blue}{+9.6\%}}$ & 
              0.194$_{\textcolor{blue}{+59.0\%}}$ & 
              2.672$_{\textcolor{blue}{+16.4\%}}$ & 
              9.368$_{\textcolor{blue}{+0.09}}$ \\
              \midrule
              {MLD++ \citep{motionlcm-v2}} & 0.548 & 0.738 & 0.829 & 0.073 & 2.810 & 9.658 \\
              {{w/ EasyTune}} & 
              {0.581$_{\textcolor{blue}{+6.0\%}}$} & 
              {0.762$_{\textcolor{blue}{+3,3\%}}$} & 
              {0.849$_{\textcolor{blue}{+2.4\%}}$} & 
              {0.073$_{\textcolor{blue}{+0.0\%}}$} & 
              {2.603$_{\textcolor{blue}{+7.4\%}}$} & 
              {9.719$_{\textcolor{red}{-0.06}}$} \\
              \midrule
              MLCM$^{1S}$ \citep{Dai2025} & 0.502 & 0.701 & 0.803 & 0.467 & 3.052 & 9.631 \\
              {w/ EasyTune} & 
              0.571$_{\textcolor{blue}{+13.7\%}}$ & 
              0.766$_{\textcolor{blue}{+9.3\%}}$ & 
              0.854$_{\textcolor{blue}{+6.4\%}}$ & 
              0.188$_{\textcolor{blue}{+59.7\%}}$ & 
              2.647$_{\textcolor{blue}{+13.3\%}}$ & 
              9.692$_{\textcolor{red}{-0.06}}$ \\
              \midrule
              {MLCM$^{4S}$} \citep{Dai2025} & 0.502 & 0.698 & 0.798 & 0.304 & 3.012 & 9.607 \\
              {w/ EasyTune} & 
              0.565$_{\textcolor{blue}{+12.5\%}}$ & 
              0.760$_{\textcolor{blue}{+8.8\%}}$ & 
              0.848$_{\textcolor{blue}{+6.3\%}}$ & 
              0.200$_{\textcolor{blue}{+34.2\%}}$ & 
              2.691$_{\textcolor{blue}{+10.7\%}}$ & 
              9.812$_{\textcolor{red}{-0.20}}$ \\
              \midrule
              {MDM$^{50S}$} \citep{tevet2023human} & 0.455 & 0.645 & 0.749 & 0.489 & 3.330 & 9.920 \\
              {w/ EasyTune} & 
              0.472$_{\textcolor{blue}{+3.7\%}}$ & 
              0.679$_{\textcolor{blue}{+5.3\%}}$ & 
              0.787$_{\textcolor{blue}{+5.1\%}}$ & 
              0.411$_{\textcolor{blue}{+16.0\%}}$ & 
              3.117$_{\textcolor{blue}{+6.4\%}}$ & 
              9.239$_{\textcolor{blue}{+0.15}}$ \\
              \midrule
              {Mo.Diffuse} \citep{zhang2022motiondiffuse} & 0.491 & 0.681 & 0.775 & 0.630 & 3.113 & 9.410 \\
              {w/ EasyTune} & 
              0.488$_{\textcolor{red}{-0.6\%}}$ & 
              0.686$_{\textcolor{blue}{+0.7\%}}$ & 
              0.788$_{\textcolor{blue}{+1.7\%}}$ & 
              0.556$_{\textcolor{blue}{+11.7\%}}$ & 
              3.068$_{\textcolor{blue}{+1.4\%}}$ & 
              9.215$_{\textcolor{red}{-0.20}}$ \\
              \bottomrule
          \end{tabular}}
        \label{tab:PlugAndPlay_H3D}
    \end{minipage}
\end{wraptable}

\noindent \textbf{Efficiency of the Optimization.} To assess convergence efficiency, we compare optimization curves of fine-tuning methods in Fig.~\ref{fig:loss} (in App. \ref{supp:result}). {EasyTune} converges faster and achieves consistently lower loss, suggesting better local optima with higher reward values. This improvement stems from its fine-grained, step-wise optimization, in contrast to the sparser, trajectory-level updates used in prior work~\citep{clark2024directly}, enabling more precise gradient signals and accelerated training. Furthermore, EasyTune achieves a \textbf{7.3$\times$} training speedup over DRaFT to reach the same reward level (see App.~\ref{supp:overhead} for detailed computational overhead analysis).

\subsection{Evaluation on Text-to-Motion Generation} 
\begin{table*}[t]
    \centering
\vspace{-0.2cm}
\caption{\textbf{Comparison of text-to-motion generation performance on the HumanML3D dataset.}}
    \vspace{0.2cm}
    \resizebox{0.94\textwidth}{!}{
    \begin{tabular}{l l l l l l l}
      \toprule
     \multirow{2}{*}{Method} & \multicolumn{3}{c}{R Precision $\uparrow$} & \multirow{2}{*}{FID $\downarrow$} & \multirow{2}{*}{MM Dist $\downarrow$} & \multirow{2}{*}{Diversity $\rightarrow$} \\
      \cmidrule(lr){2-4}
       ~& Top 1 & Top 2 & Top 3 & & & \\
      \midrule
     Real       & 0.511 & 0.703 & 0.797 & 0.002 & 2.974 & 9.503 \\
      \midrule
      TM2T  \citep{guo2022tm2t}&0.424$^{\pm0.003}$ & 0.618$^{\pm0.003}$ & 0.729$^{\pm0.002}$ & 1.501$^{\pm0.017}$ &-& 8.589$^{\pm0.076}$\\

      T2M   \citep{guo2022generating} & 0.455$^{\pm 0.002}$ & 0.636$^{\pm 0.003}$ & 0.736$^{\pm 0.003}$ & 1.087$^{\pm 0.002}$ & 3.347$^{\pm 0.008}$ & 9.175$^{\pm 0.002}$ \\
      MDM  \citep{tevet2023human} & 0.455$^{\pm 0.006}$ & 0.645$^{\pm 0.007}$ & 0.749$^{\pm 0.006}$ & 0.489$^{\pm 0.047}$ & 3.330$^{\pm 0.25}$ & 9.920$^{\pm 0.083}$ \\

      M2DM  \citep{kong2023priority}&0.497$^{\pm0.003}$ & 0.682$^{\pm0.002}$ & 0.763$^{\pm0.003}$ & 0.352$^{\pm0.005}$ &-& 9.926$^{\pm0.073}$\\

      T2M-GPT \citep{zhang2023generating} & 0.492$^{\pm 0.003}$ & 0.679$^{\pm 0.002}$ & 0.775$^{\pm 0.002}$ & 0.141$^{\pm 0.005}$ & 3.121$^{\pm 0.009}$ & 9.722$^{\pm 0.082}$\\
      Fg-T2M  \citep{wang2023fg}&0.492$^{\pm0.002}$ & 0.683$^{\pm0.003}$ & 0.783$^{\pm0.002}$ & 0.243$^{\pm0.019}$ &-& 9.278$^{\pm0.072}$\\					
      ReMoDiffuse  \citep{zhang2023remodiffuse}&0.510$^{\pm0.005}$ & 0.698$^{\pm0.006}$ & 0.795$^{\pm0.004}$ & 0.103$^{\pm0.004}$ &-& 9.018$^{\pm0.075}$\\						
      AttT2M \citep{zhong2023attt2m}&0.499$^{\pm 0.003}$ & 0.690$^{\pm 0.002}$ & 0.786$^{\pm 0.002}$ & 0.112$^{\pm 0.006}$ & 3.038$^{\pm 0.007}$ & 9.700$^{\pm 0.090}$\\

      MotionDiffuse  \citep{zhang2022motiondiffuse}  & 0.491$^{\pm 0.001}$ & 0.681$^{\pm 0.001}$ & 0.775$^{\pm 0.001}$ & 0.630$^{\pm 0.001}$ & 3.113$^{\pm 0.001}$ & 9.410$^{\pm 0.049}$ \\
      OMG   \citep{liang2024omg}  & - & - & 0.784$^{\pm 0.002}$   & 0.381$^{\pm 0.008}$ & - & 9.657$^{\pm 0.085}$ \\ 
      MotionLCM  \citep{Dai2025} & 0.502$^{\pm 0.003}$ & 0.698$^{\pm 0.002}$ & 0.798$^{\pm 0.002}$ & 0.304$^{\pm 0.012}$ & 3.012$^{\pm 0.007}$ & 9.607$^{\pm 0.066}$ \\
      MotionMamba  \citep{motionmaba} & 0.502$^{\pm 0.003}$ & 0.693$^{\pm 0.002}$ & 0.792$^{\pm 0.002}$ & 0.281$^{\pm 0.011}$ & 3.060$^{\pm 0.000}$  & 9.871$^{\pm 0.084}$\\ 
      CoMo  \citep{Huang2024CoMo}  & 0.502$^{\pm 0.002}$ & 0.692$^{\pm 0.007}$ & 0.790$^{\pm 0.002}$ & 0.262$^{\pm 0.004}$ & 3.032$^{\pm 0.015}$ & 9.936$^{\pm 0.066}$ \\
      ParCo   \citep{zou2024parco}  & 0.515$^{\pm 0.003}$ & 0.706$^{\pm 0.003}$ & 0.801$^{\pm 0.002}$ & \underline{0.109$^{\pm 0.005}$} & 2.927$^{\pm 0.008}$ & \underline{9.576$^{\pm 0.088}$} \\
      SiT  \citep{meng2024rethinking} & 0.500$^{\pm 0.004}$ & 0.695$^{\pm 0.003}$ & 0.795$^{\pm 0.003}$ & 0.114$^{\pm 0.007}$ & - & -\\
     SoPo \citep{tan2024sopo}  & {0.528$^{\pm 0.005}$} & {0.722$^{\pm 0.004}$} & {0.827$^{\pm 0.004}$ } & {0.174$^{\pm 0.005}$} & {2.939$^{\pm 0.011}$} & 9.584$^{\pm 0.074}$ \\
      \midrule
       MLD \citep{chen2023executing} (Baseline)  & 0.481$^{\pm 0.003}$ & 0.673$^{\pm 0.003}$ & 0.772$^{\pm 0.002}$ & 0.473$^{\pm 0.013}$ & 3.196$^{\pm 0.010}$ & 9.724$^{\pm 0.082}$ \\
       w/ EasyTune (Ours) &
      \underline{0.581$^{\pm 0.003}_\text{\textcolor{blue}{+20.8\%}}$} & 
      \underline{0.769$^{\pm 0.002}_\text{\textcolor{blue}{+14.3\%}}$} & 
      \underline{0.855$^{\pm 0.002}_\text{\textcolor{blue}{+10.8\%}}$} & 
       0.132$^{\pm 0.005}_\text{\textcolor{blue}{+72.1\%}}$ & 
       \underline{2.637$^{\pm 0.007}_\text{\textcolor{blue}{+17.5\%}}$} & 
       \textbf{9.465$^{\pm 0.075}_\text{\textcolor{blue}{+0.18}}$} \\[0.3em] 
      {MLD++ \citep{motionlcm-v2} (Baseline) } & 0.548$^{\pm 0.003}$ & 0.738$^{\pm 0.003}$ & 0.829$^{\pm 0.002}$ & 0.073$^{\pm 0.003}$ & 2.810$^{\pm 0.008}$ & 9.658$^{\pm 0.089}$ \\ 
      w/ EasyTune (Ours) &
      \textbf{0.591$^{\pm 0.004}_\text{\textcolor{blue}{+7.8\%}}$} & 
      \textbf{0.777$^{\pm 0.002}_\text{\textcolor{blue}{+5.3\%}}$} & 
      \textbf{0.859$^{\pm 0.002}_\text{\textcolor{blue}{+3.6\%}}$} & 
      \textbf{0.069$^{\pm 0.003}_\text{\textcolor{blue}{+6.8\%}}$} & 
      \textbf{2.592$^{\pm 0.008}_\text{\textcolor{blue}{+7.8\%}}$} & 
      9.705$^{\pm 0.086}_\text{\textcolor{red}{-0.06}}$ \\
      \bottomrule
\end{tabular}
}
\vspace{-0.4cm}
\label{tab:sota_humanmld3d}
\end{table*}

\noindent \textbf{Comparison with SoTA Text-to-Motion Methods.} We evaluate EasyTune on text-to-motion generation using MLD~\citep{chen2023executing} and MLD++~\citep{motionlcm-v2} as base models, comparing with state-of-the-art methods on the HumanML3D~\citep{guo2022generating} and KIT-ML~\citep{plappert2016kit} datasets, as shown in Tab.~\ref{tab:sota_humanmld3d} and~\ref{tab:sota_kit} (in App. \ref{supp:kit}). On HumanML3D, EasyTune improves the R-P@1 of MLD from 0.481 to 0.581 and MLD++ from 0.548 to 0.591, surpassing baselines like ParCo~\citep{zou2024parco} (0.515) and ReMoDiffuse~\citep{zhang2023remodiffuse} (0.510). It also achieves the best MM Dist (2.637 and 2.592) and competitive FID (0.132 and 0.069). 

\noindent \textbf{Generalization across Different Pre-trained Models.} To evaluate the generalization of EasyTune across pre-trained text-to-motion models, we applied it to MLD~\citep{chen2023executing}, MLD++~\citep{motionlcm-v2}, MLCM$^{1S}$~\citep{Dai2025}, and MDM$^{50S}$~\citep{tevet2023human}. As shown in Tab.~\ref{tab:PlugAndPlay_H3D}, EasyTune consistently improved performance. For instance, MLD saw an 18.1\% increase in R-P@1 (0.568) and a 59.0\% reduction in FID (0.194). {MLD++ achieved a 6.0\% gain in R-Precision@1 (0.581) and a 7.4\% improvement in MM Dist (2.603).} MLCM$^{1S}$ and MDM$^{50S}$ also showed significant FID reductions of 59.7\% and 16.0\%, respectively. These results highlight the generalization of EasyTune across various diffusion-based architectures.

\begin{table}[b]
    \centering
    \renewcommand{\arraystretch}{1.1}
    \caption{\textbf{Evaluation on text-motion retrieval benchmark, HumanML3D and KIT-ML.} The column ``Noise''  indicates whether the method can handle noisy motion from the denoised process.}
    \vspace{0.2cm}
    \resizebox{\textwidth}{!}{
    \begin{tabular}{c|lc|ccccc|ccccc}
        \toprule
        \multirow{2}{*}{} & \multirow{2}{*}{Methods} & \multirow{2}{*}{Noise} & \multicolumn{5}{c|}{Text-Motion Retrieval$\uparrow$} & \multicolumn{5}{c}{Motion-Text Retrieval$\uparrow$} \\
        & & & R@1 & R@2 & R@3 & R@5 & R@10 & R@1 & R@2 & R@3 & R@5 & R@10 \\
        \midrule
        \multirow{6}{*}{\rotatebox{90}{HumanML3D}} 
        & TEMOS \citep{petrovich2022temos} & \ding{55} & $40.49$ & $53.52$ & $61.14$ & $70.96$ & $84.15$ & $39.96$ & $53.49$ & $61.79$ & $72.40$ & $85.89$\\
        & T2M \citep{guo2022generating} & \ding{55} & $52.48$ & $71.05$ & $80.65$ & $89.66$ & \underline{$96.58$} & $52.00$ & $71.21$ & $81.11$ & $89.87$ & {$96.78$}\\
        & TMR \citep{petrovich2023tmr} & \ding{55} & $67.16$ & $81.32$ & $86.81$ & $91.43$ & $95.36$ & $67.97$ & $81.20$ & $86.35$ & $91.70$ & $95.27$\\
        & LaMP \citep{li2025lamp} & \ding{55} & $67.18$ & $81.90$ & {$87.04$} & \underline{$92.00$} & $95.73$ & {$68.02$} & {$82.10$} & {$87.50$} & {$92.20$} & \underline{$96.90$}\\
        & {ReAlign} \citep{weng2025} (Baseline) & {\ding{51}} & \underline{$67.59$} & \underline{$82.24$} & \underline{$87.44$} & $91.97$ & $96.28$ & \underline{$68.94$} & \underline{$82.86$} & \underline{$87.95$} & \underline{$92.44$} & $96.28$\\
        & w/ {SPL(Ours)} & {\ding{51}} & \textbf{69.31} & \textbf{83.71} & \textbf{88.66} & \textbf{92.81} & \textbf{96.75} & \textbf{70.23} & \textbf{83.41} & \textbf{88.72} & \textbf{93.07} & \textbf{97.04}\\
        \midrule
        \multirow{6}{*}{\rotatebox{90}{KIT-ML}} 
        & T2MOS \citep{petrovich2022temos} & \ding{55} & $43.88$ & $58.25$ & $67.00$ & $74.00$ & $84.75$ & $41.88$ & $55.88$ & $65.62$ & $75.25$ & $85.75$\\
        & T2M \citep{guo2022generating} & \ding{55} & $42.25$ & $62.62$ & $75.12$ & $87.50$ & $96.12$ & $39.75$ & $62.75$ & $73.62$ & $86.88$ & $95.88$\\
        & TMR \citep{petrovich2023tmr} & \ding{55} & $49.25$ & $69.75$ & $78.25$ & $87.88$ & $95.00$ & $50.12$ & $67.12$ & $76.88$ & $88.88$ & $94.75$\\
        & LaMP \citep{li2025lamp} & \ding{55} & {$52.50$} & \textbf{74.80} & \textbf{84.70} & \underline{$92.70$} & \underline{$97.60$} & \underline{$54.00$} & \underline{$75.30$} & \underline{$84.40$} & {$92.20$} & \textbf{97.60}\\
        & {ReAlign} \citep{weng2025} (Baseline) & {\ding{51}} & \underline{$52.84$} & $71.66$ & $82.96$ & $91.19$ & $97.59$ & $52.98$ & $72.87$ & $84.38$ & \underline{$92.61$} & $96.87$\\ 
        & w/ {SPL(Ours)} & {\ding{51}} & \textbf{53.27} & \underline{$73.58$} & \underline{$84.52$} & \textbf{93.18} & \textbf{97.73} & \textbf{55.11} & \textbf{75.28} & \textbf{86.36} & \textbf{93.18} & \underline{$97.44$}\\
        \bottomrule
    \end{tabular}
    }
    \label{retrieval}
\end{table}

\subsection{Ablation Study, User Study \& Visualization}
\noindent \textbf{Self-refinement Preference Learning for Reward Model Training.}  
We evaluate SPL on text-motion retrieval, comparing it with ReAlign and state-of-the-art methods. As shown in Tab.~\ref{retrieval}, SPL boosts ReAlign \citep{weng2025} on HumanML3D (R@1: 69.31, $+2.5\%$; R@3: 88.66, $+1.4\%$; motion-to-text R@1: 70.23, $+1.9\%$), outperforming LaMP \citep{li2025lamp} and TMR \citep{petrovich2023tmr}. On KIT-ML, SPL achieves R@5 of 93.18 ($+2.2\%$), and motion-to-text R@3 of 86.36 ($+2.3\%$), consistently surpassing baselines. 

\noindent \textbf{Self-refinement Preference Learning for Fine-Tuning.} To evaluate the effect of SPL, we fine-tune MLD \citep{chen2023executing} using reward models trained with and without SPL, and compare their win rates across epochs, as shown in Fig.~\ref{fig:pair}. The model with SPL consistently outperforms the baseline, achieving win rates of 57.32\%, 54.03\%, 55.13\%, and 67.45\% on R-P Top 1, Top 2, Top 3, and FID, respectively. This shows that SPL can improve motion generation by enhancing the reward model.

\noindent \textbf{Human Evaluation.} To assess whether our fine-tuned model exhibits reward hacking, we conducted a user study and visualized the corresponding motions. These visualizations are presented in Fig.~\ref{fig:study}, with additional visualization in Fig.~\ref{supp:fig:vis} (App. ~\ref{supp:vis}). Results shows that our method enhance the alignment, fidelity, and coherence of generated motions. A more detailed discussion and further experimental results are provided in App.~\ref{supp:hacking}.

\begin{figure}[tp]
    \centering
    \begin{minipage}[t]{0.48\textwidth}
        \centering
        \includegraphics[width=\textwidth]{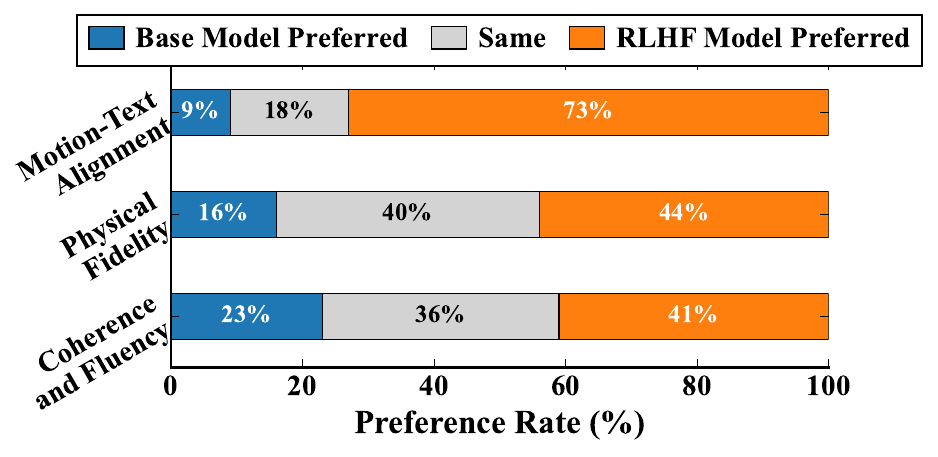}
        \vspace{-0.4cm}
        \caption{\textbf{User study on HumanML3D test set.} We use MLD model as base model.}
        \label{fig:study}
    \end{minipage}
    \hfill
    \begin{minipage}[t]{0.48\textwidth}
        \centering
        \includegraphics[width=\textwidth]{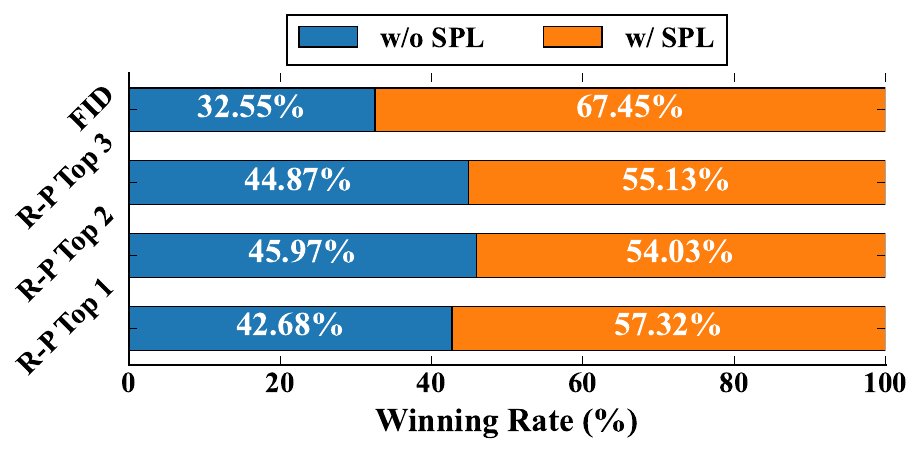}
        \vspace{-0.4cm}
    \caption{\textbf{Comparison of models fine-tuned with and without SPL.}}        
    \label{fig:pair}
    \end{minipage}
    \vspace{-0.3cm}
\end{figure}

\section{Conclusion}
\label{sec:conclusion}

In this work, we theoretically identify recursive dependence in denoising trajectories as the key limitation in aligning motion generative models. Our proposed \textbf{EasyTune} method decouples this dependence, enabling denser, more memory-efficient, and fine-grained optimization. Combined with the \textbf{SPL} mechanism to dynamically generate preference pairs,  experimental results demonstrate that EasyTune significantly outperforms existing methods while requiring less memory overhead.

\noindent \textbf{Limitation.} {In this work, we focus on enhancing the semantics alignment of generated motions, a key issue of text-to-motion generation.} Due to the scarcity of preference data, SPL depends on retrieval-based mining, which may introduce noisy or ambiguous pairs and lack physical grounding in the reward design. {Fortunately, we observe that this semantics reward still exhibits an implicit ability to distinguish real motions from generated ones, as discussed in App.~\ref{supp:physical}. Nevertheless, a unified and comprehensive reward model that explicitly accounts for both physical plausibility and semantic alignment is still worth exploring. Developing such a reward model that can simultaneously enhance both physical and semantic signals remains an important direction for future work.}

\newpage

\medskip

\clearpage

\startcontents[appendix]

\section*{Ethics statement}
Our work on EasyTune, a method for fine-tuning motion generative models, introduces several ethical considerations that warrant careful discussion. As our method is designed to align existing generative diffusion models, it inherits the potential biases and limitations of these foundational models. The large-scale motion datasets used to train these base models may contain demographic biases (e.g., representation of age, gender, or physical ability) or may underrepresent certain types of human movement. 

Furthermore, like other generative technologies, motion generation models could be misused by malicious actors. The ability to create realistic human motions could be exploited to generate convincing deepfakes or synthetic media for the purpose of disinformation, harassment, or creating non-consensual content. As EasyTune makes the process of aligning models to specific objectives more efficient, it could inadvertently lower the barrier for adapting these models to generate harmful or undesirable motions.

Finally, the advancement of motion generation technology may have a significant socio-economic impact. On one hand, such tools could automate tasks traditionally performed by animators, choreographers, and motion capture actors, potentially displacing jobs in creative industries. On the other hand, EasyTune could also serve as a powerful creative tool, democratizing animation and enabling new forms of artistic expression for independent creators and small studios. It also holds potential for positive applications in fields like robotics, virtual reality, and physical rehabilitation. We believe that continued research and community dialogue are essential to mitigate the risks while harnessing the benefits of this technology.

\section*{Reproducibility Statement}

We commit to releasing all \emph{code, model weights, and baseline implementations} upon acceptance. To ensure the reproducibility of our experiments, we put the key parts in Appendix \ref{supp:baseline_setting}. For datasets, we use open source datasets described in Sec.~\ref{sec:Exp_Setup}. For generated results, we upload generated videos to the supplementary material.

\section*{Large Language Models Usage Statement}
We used Large Language Models (LLMs) as auxiliary tools during the preparation of this manuscript.  In particular, LLMs were employed to polish the language, improve grammar, and enhance readability of  the text. All conceptual ideas, technical contributions, analyses, and conclusions presented in this  work are entirely our own and were developed independently of LLM assistance. The models were not  used to generate novel scientific content, perform data analysis, or contribute to the design of  experiments. We have carefully verified all statements and ensured that the final version of the manuscript accurately reflects our intended meaning and contributions. 

\clearpage

\bibliography{iclr2025_conference}
\bibliographystyle{iclr2025_conference}

\newpage
\appendix

\maketitlesupplementary
\setcounter{equation}{0}
\setcounter{figure}{0}
\setcounter{table}{0}
\setcounter{theorem}{0}
\setcounter{corollary}{0}

\renewcommand{\theequation}{S\arabic{equation}}

\renewcommand{\thefigure}{S\arabic{figure}}

\renewcommand{\thetable}{S\arabic{table}}

\renewcommand{\thetheorem}{S\arabic{theorem}}
\renewcommand{\thecorollary}{S\arabic{corollary}}

\setcounter{tocdepth}{2}
\titlecontents{section}
    [0pt]
    {\color{cvprblue}\bfseries}
    {\contentslabel{1.5em}}
    {}
    {\titlerule*[1pc]{\bfseries .}\contentspage}

\titlecontents{subsection}
    [1.5em]
    {\color{cvprblue}}
    {\contentslabel{2.5em}}
    {}
    {\titlerule*[1pc]{.}\contentspage}

{\color{cvprblue}\printcontents[appendix]{l}{1}{\section*{Supplementary Contents}}}

\noindent\rule{\textwidth}{0.4pt}

{This supplementary document provides additional experimental results, technical discussions, and theoretical analysis. It is organized as follows: Sec.~\ref{supp:result} presents extended experimental results, including baseline settings, reward hacking analysis, text-to-motion generation performance on KIT-ML, qualitative visualizations, ablation studies on step-level reweighting and sensitivity analyses, efficiency analyses, cross-domain noise-perception comparison, and failure case analysis. Sec.~\ref{supp:discussion} offers in-depth discussions on gradient analysis, connections to policy gradient methods, existing fine-tuning methods, SPL details, and noise-aware reward strategies. Sec.~\ref{supp:the} contains theoretical proofs, including Corollary~1, convergence analysis, and the derivation of Eq.~(5).}

\section{More Experimental Results} \label{supp:result}

\subsection{Experimental Setting for Baseline}  \label{supp:baseline_setting}
The hyperparameter configurations for our EasyTune method and the baseline are provided, with the majority of the settings following the official settings, as presented in Tab.~\ref{tab:hyperparams}.

\begin{table}[t]
  \centering
  \setlength{\tabcolsep}{4pt}
  \caption{\textbf{Hyperparameters for EasyTune and baseline methods.}}
    \vspace{0.2cm}
  \resizebox{1\textwidth}{!}{
  \begin{tabular}{lccccccccl>{\raggedright\arraybackslash}p{3.2cm}}
    \toprule
    & \textbf{MLD} & \textbf{AlignProp} & \textbf{ReFL-10} & \textbf{ReFL-20} & \textbf{DRaFT-10} & \textbf{DRaFT-50} & \textbf{DRTune} & \textbf{EasyTune (Ours)} & \textbf{Meaning} \\
    \midrule
    Random seed        & 1234 & 1234 & 1234 & 1234 & 1234 & 1234 & 1234 & 1234 & Seed for reproducibility \\
    Batch size         & 256  & 256  & 256  & 256  & 256  & 256  & 256  & 256  & Training examples per step \\
    CLIP Range         & -    & 1e0  & 1e0  & 1e0  & 1e0  & 1e0  & 1e0  & 1e0  & Grad clipping range \\
    Ckpt steps         & -    & 1000 & 1000 & 1000 & 1000 & 1000 & 1000 & 1000 & Steps between checkpoints \\
    Reward Weight      & -    & 1e-1 & 1e-1 & 1e-1 & 1e-1 & 1e-1 & 1e-1 & 1e-1 & Reward/alignment loss weight \\
    Learning Rate      & 1e-5 & 1e-5 & 1e-5 & 1e-5 & 1e-5 & 1e-5 & 1e-5 & 1e-5 & Optimizer learning rate \\
    K                  & -    & -    & 10   & 20   & [40,50] & [0,50] & [40,50] & [0,50] & Denoising timestep range for fine-tuning \\
    T                  & 50   & 50   & 50   & 50   & 50   & 50   & 50   & 50   & The number of denoise scheduler steps \\
    M                  & -    & -    & -    & -    & -    & -    & [40,50] & - & Range of early-stop timestep \\
    P                  & -    & 25   & -    & -    & -    & -    & -    & -    & Length of truncated backpropagation through time \\
    \bottomrule
  \end{tabular}
  }
  \label{tab:hyperparams}
\end{table}

\subsection{Experimental Results and Discussion about Reward Hacking}  \label{supp:hacking}

\textbf{Discussion about Reward Hacking.} Previous studies have discussed reward-based differentiable approaches to mitigating reward hacking. Specifically, the proposed strategies include early stopping on a validation set, fine-tuning with LoRA instead of full-parameter tuning, and incorporating KL-divergence regularization. Importantly, our method can be seamlessly combined with these strategies. In our implementation, we provide support for early stopping on a validation set as well as fine-tuning using LoRA.

Additionally, EasyTune is compatible with established techniques like KL regularization and multi-aspect rewards, which effectively mitigate overfitting, as shown in prior work (D-RaFT, AlignProp, DTune). Following these methods, we omitted KL regularization in our main loss:
\begin{equation}
    \begin{aligned}
        \mathcal{L}_{\mathrm{EasyTune}}^{\mathrm{KL}}(\theta) = -\mathbb{E}_{c\sim \mathcal{D}_\mathrm{T}, \mathbf{x}^\theta_t \sim \pi_\theta(\cdot|c), t\sim \mathcal{U}(0, T)} \left[ \mathcal{R}_{\phi} (\mathbf{x}^\theta_t, t, c) + \mathbb{D}_{\mathrm{KL}}(\mathbf{x}_t^\theta |\mathbf{x}_t^{\theta'})\right],
    \end{aligned}
    \label{eq:loss_kl}
\end{equation}

\textbf{Experimental Results about Reward Hacking.} Quantitative analysis (Fig.\ref{supp:fig:vis}) confirms that our method is free from reward hacking. To further investigate this behavior, we conducted a user study adn compare the baseline method with a variant of our approach that incorporates KL-divergence regularization. Specifically, we generated motions for the first 100 prompts in the HumanML3D test set using both the pre- and post-finetuned models. Each generated motion was independently evaluated by five participants. Using MDM and MLD as base models, the results—presented in Fig.~\ref{fig:user_study_mdm} and Fig.~\ref{fig:study}, respectively—show that our method consistently outperforms the baseline models in human evaluations without exhibiting signs of reward hacking. 

Additionally, Tab.~\ref{tab:kl} provides a detailed comparison between the baseline and the KL-regularized variants. Results shows that KL regularization helps mitigate overfitting and improves diversity, although at a slight cost in generation quality.

\begin{table}[b]
\centering
\caption{\textbf{Performance comparison between EasyTune with and without KL-regularized.}}
    \vspace{0.2cm}
\label{tab:easy_vs_kl}
\setlength{\tabcolsep}{18pt}
  \resizebox{1\textwidth}{!}{

\begin{tabular}{l ccccccc}
\toprule
Method & R@P1 & R@P2 & R@P3 & FID $\downarrow$ & MM-Dist $\downarrow$ & Div. $\uparrow$ & Memory $\downarrow$ \\
\midrule
EasyTune & \textbf{0.581} & \textbf{0.769} & \textbf{0.855} & \textbf{0.132} & \textbf{2.637} & 9.465 & \textbf{22.10} \\
EasyTune+KL & 0.575 & 0.763 & 0.846 & 0.172 & 2.674 & \textbf{9.482} & 29.70 \\
\bottomrule
\end{tabular}
\label{tab:kl}
  }
\end{table}

\subsection{Text-to-Motion Generation Evaluation on KIT-ML dataset}  \label{supp:kit}
\begin{table*}[t]
  \centering
  \setlength{\tabcolsep}{10pt} 
  \caption{\textbf{Comparison of text-to-motion generation performance on the KIT-ML dataset.}}
    \vspace{0.2cm}
  \resizebox{1\textwidth}{!}{
  \begin{tabular}{lllllll}
    \toprule
    \multirow{2}{*}{Method} & \multicolumn{3}{c}{R Precision $\uparrow$} & \multirow{2}{*}{FID $\downarrow$} & \multirow{2}{*}{MM Dist $\downarrow$} & \multirow{2}{*}{Diversity $\rightarrow$} \\
    \cmidrule(lr){2-4}
     ~ & Top 1 & Top 2 & Top 3 & & & \\
    \midrule
    Real & 0.424 & 0.649 & 0.779 & 0.031 & 2.788 & 11.08 \\
    \midrule
    TM2T \citep{guo2022tm2t} & 0.280 & 0.463 & 0.587 & 3.599 & 4.591 & 9.473 \\
    T2M \citep{guo2022generating} & 0.361 & 0.559 & 0.681 & 3.022 & 2.052 & 10.72 \\
    M2DM \citep{kong2023priority} & 0.416 & 0.628 & 0.743 & 0.515 & 3.015 & 11.42 \\
    T2M-GPT \citep{zhang2023generating} & 0.416 & 0.627 & 0.745 & 0.514 & 3.007 & 10.86 \\
    Fg-T2M \citep{wang2023fg} & 0.418 & 0.626 & 0.745 & 0.571 & 3.114 & 10.93 \\
    AttT2M \citep{zhong2023attt2m} & 0.413 & 0.632 & 0.751 & 0.870 & 3.039 & 10.96 \\
    MotionMamba \citep{motionmaba} & 0.419 & 0.645 & 0.765 & 0.307 & 3.021 & \underline{11.02} \\
    CoMo \citep{Huang2024CoMo} & 0.422 & 0.638 & 0.765 & 0.332 & 2.873 & 10.95 \\
    ParCo \citep{zou2024parco} & 0.430 & \underline{0.649} & 0.772 & 0.453 & \underline{2.820} & 10.95 \\
    SiT \citep{meng2024rethinking} & 0.387 & 0.610 & 0.749 & \textbf{0.242} & - & - \\
    \midrule
    MDM \citep{chen2023executing} & 0.403 & 0.606 & 0.731 & 0.497 & 3.096 & 10.76 \\
    w/ EasyTune (ours) & \textbf{0.442$_\text{\textcolor{blue}{+9.7\%}}$} & \textbf{0.655$_\text{\textcolor{blue}{+8.1\%}}$} & 
    \underline{0.773$_\text{\textcolor{blue}{+5.7\%}}$} & \underline{0.284$_\text{\textcolor{blue}{+42.9\%}}$} & \textbf{2.755$_\text{\textcolor{blue}{+11.0\%}}$} & 11.27$_\text{\textcolor{blue}{+0.13}}$ \\
    MoDiffuse \citep{zhang2022motiondiffuse} & 0.417 & 0.621 & 0.739 & 1.954 & 2.958 & \textbf{11.10} \\
    w/ EasyTune (ours) & \underline{0.438$_\text{\textcolor{blue}{+5.0\%}}$} & \underline{0.649$_\text{\textcolor{blue}{+4.5\%}}$} & \textbf{0.777$_\text{\textcolor{blue}{+5.1\%}}$} & 1.719$_\text{\textcolor{blue}{+12.0\%}}$ & 2.892$_\text{\textcolor{blue}{+2.2\%}}$ & 10.63$_\text{\textcolor{red}{-0.43}}$ \\
    \bottomrule
  \end{tabular}
  }
  \label{tab:sota_kit}
\end{table*}
Tab.~\ref{tab:sota_kit} presents the quantitative performance of text-to-motion generation models on the KIT-ML dataset, evaluated across multiple metrics: R-Precision (Top-1, Top-2, Top-3) for text-motion alignment, Frechet Inception Distance (FID) for motion quality, Multi-Modal Distance (MM Dist) for semantic relevance, and Diversity for motion variety. The analysis compares models enhanced with EasyTune against a comprehensive set of baselines.

The results reveal significant improvements in models enhanced with EasyTune. For MDM~\citep{chen2023executing}, Top-1 R-Precision increases from 0.403 to 0.442 (a 9.7\% gain), Top-2 R-Precision from 0.606 to 0.655 (8.1\% gain), and Top-3 R-Precision from 0.731 to 0.773 (5.7\% gain). FID of MDM decreases substantially from 0.497 to 0.284 (42.9\% improvement), indicating enhanced motion quality. MM Dist of MDM improves from 3.096 to 2.755 (11.0\% reduction), reflecting stronger semantic alignment, while Diversity of MDM slightly rises from 10.76 to 11.27, suggesting maintained motion variety. Similarly, for MoDiffuse~\citep{zhang2022motiondiffuse}, Top-1 R-Precision improves from 0.417 to 0.438 (5.0\% gain), Top-2 R-Precision from 0.621 to 0.649 (4.5\% gain), and Top-3 R-Precision from 0.739 to 0.777 (5.1\% gain). FID of MoDiffuse decreases from 1.954 to 1.719 (12.0\% improvement), and MM Dist of MoDiffuse reduces from 2.958 to 2.892 (2.2\% improvement). However, Diversity of MoDiffuse slightly declines from 11.10 to 10.63, indicating a minor trade-off in variety for improved alignment and quality. Compared to baselines, EasyTune-enhanced models achieve superior performance. Top-1 R-Precision of MDM with EasyTune (0.442) surpasses that of ParCo (0.430) and MotionMamba (0.419), while FID of MDM with EasyTune (0.284) is competitive with SiT (0.242). MM Dist of MDM with EasyTune (2.755) outperforms most baselines, approaching the real data's 2.788. These results establish EasyTune-enhanced models as new state-of-the-art on KIT-ML, highlighting their ability to improve text-motion alignment, motion quality, and semantic relevance.

\begin{figure}[tp]
    \centering
    \begin{minipage}[t]{0.48\textwidth}
        \centering
        \includegraphics[width=\textwidth]{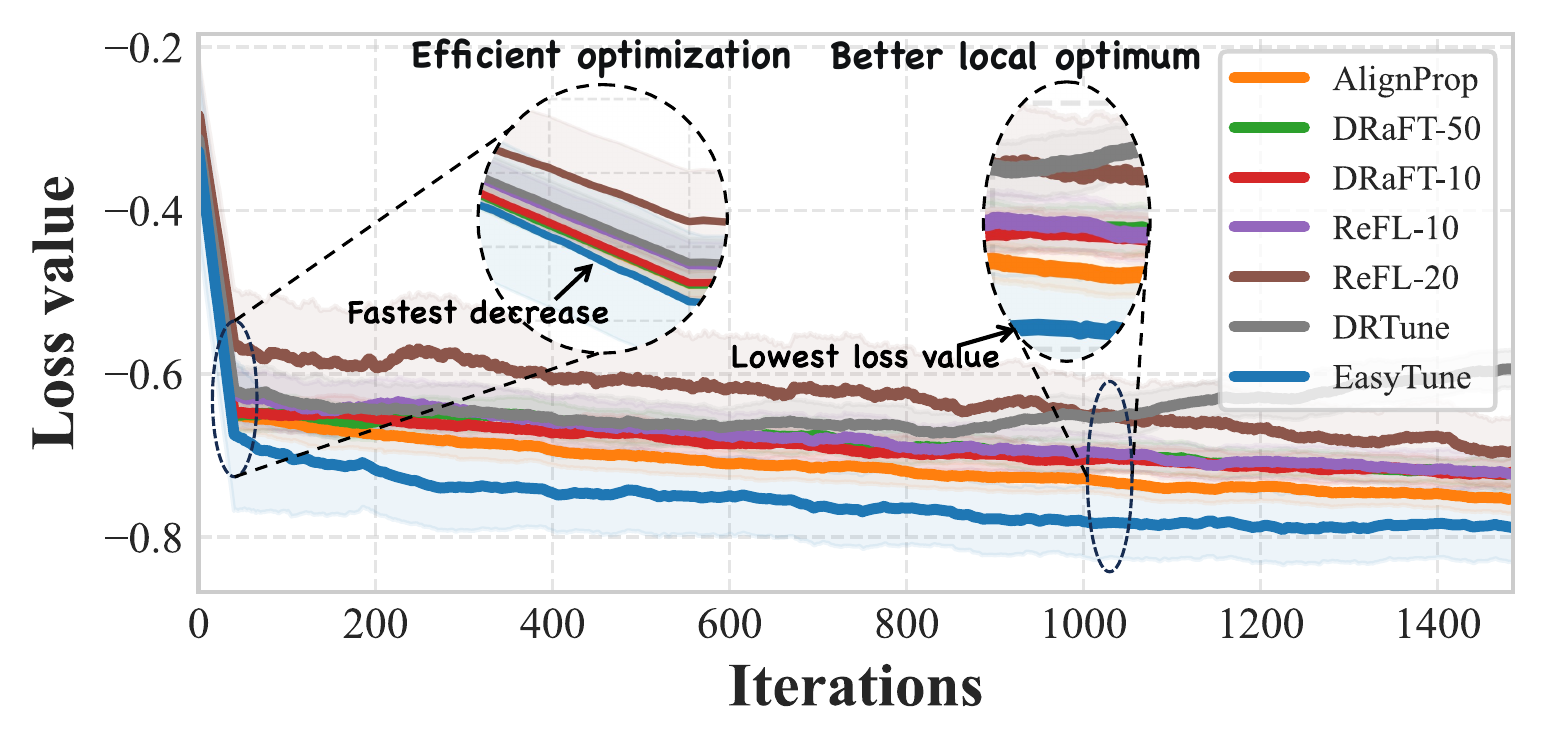}
        \vspace{-0.7cm}
        \caption{\textbf{Loss curves for EasyTune and existing fine-tuning methods.} Here, the x-axis represents the number of generated motion batches.}
        \label{fig:loss}
    \end{minipage}
    \hfill
    \begin{minipage}[t]{0.48\textwidth}
        \centering
        \includegraphics[width=\textwidth]{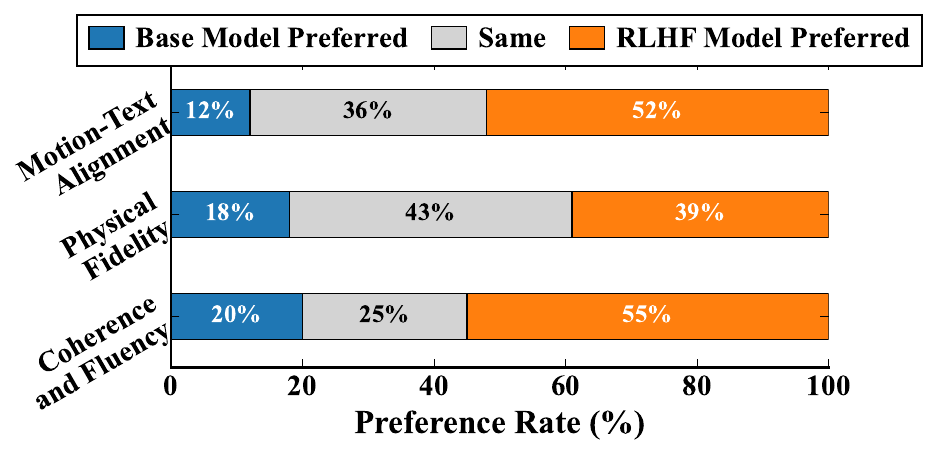}
        \vspace{-0.7cm}
        \caption{\textbf{User study on HumanML3D test set.} We use MDM model as base model.}
    \label{fig:user_study_mdm}
    \end{minipage}
\end{figure}

\begin{figure}[bht]
	\centering
    \vspace{-0.5cm}
	\includegraphics[width=0.82\textwidth]{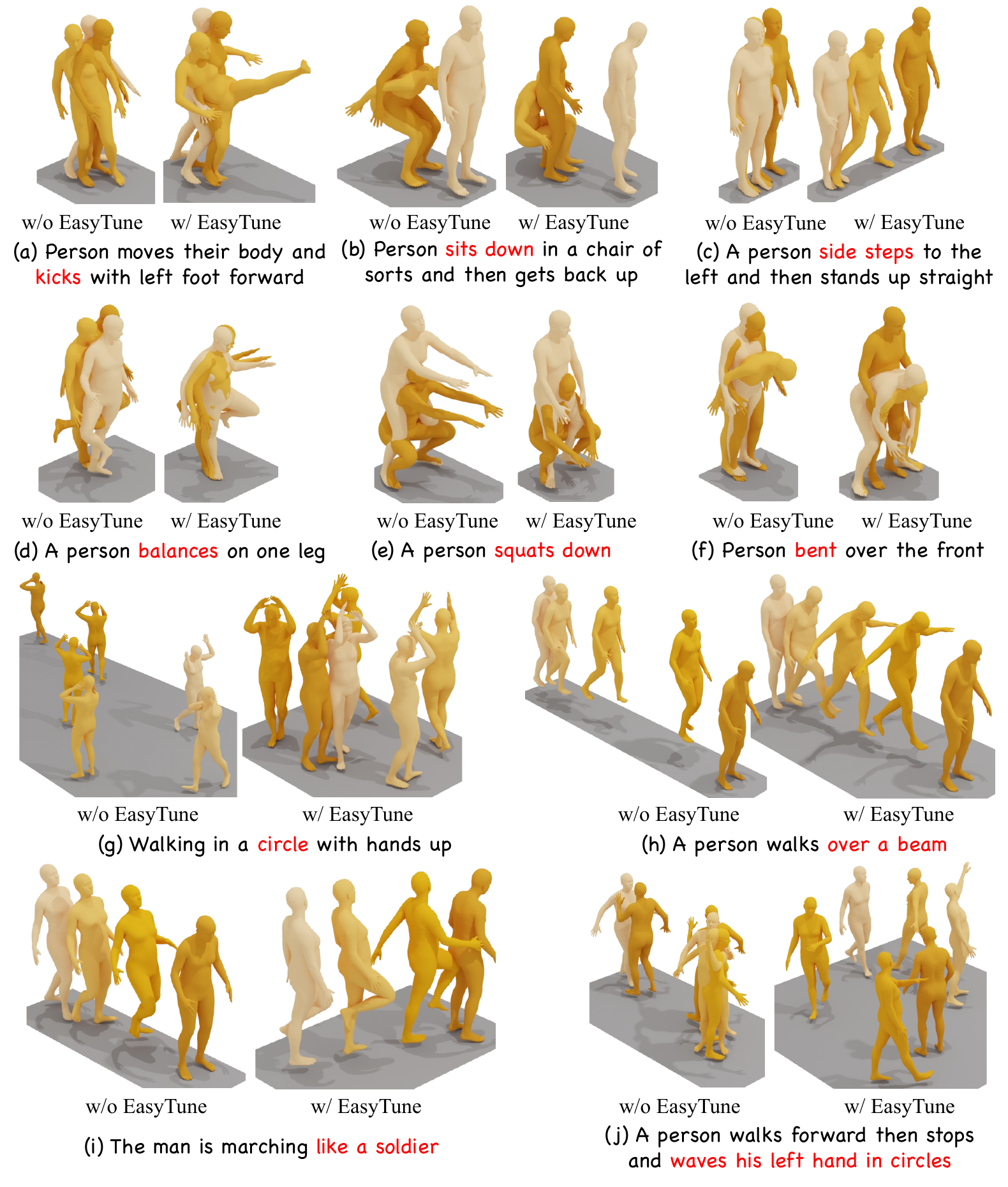} 
    \vspace{-10pt}
	\caption{\textbf{Visual results on HumanML3D dataset.} ``w/o EasyTune'' refers to motions generated by the original MLD model~\citep{chen2023executing}, while ``w/ EasyTune'' indicates motions generated by the MLD model fine-tuned using our proposed EasyTune. }
	\label{supp:fig:vis}
\end{figure}

\subsection{Visualizations} \label{supp:vis}
We visualize motions generated by the original MLD \citep{chen2023executing} and by MLD fine-tuned with our EasyTune, as shown in Fig. \ref{supp:fig:vis}. Our proposed EasyTune substantially improves the capacity of text-to-motion models to comprehend textual semantics. For example, in Fig.~\ref{supp:fig:vis}(j), the model fine-tuned with our proposed EasyTune effectively generates a motion that accurately reflects the semantic intent of the description "The man is marching like a soldier," whereas the original model fails to capture this nuanced behavior.

\subsection{{Quantitative Results on Step-Level Reward Reweighting}}
\label{supp:step}
Prior work has demonstrated that early denoising steps exert a substantial influence on the final generation quality~\citep{xie2025dymo}. We further observe that existing trajectory-level optimization methods frequently under-optimize these early steps, as discussed in Sec.~\ref{sec:motivation}. To systematically examine the effect of fine-tuning different subsets of steps, we evaluate four alternative reweighting strategies: (1)~optimizing only the final 20 steps; (2)~optimizing only the initial 20 steps; (3)~linear increasing reweighting with $w_t = \frac{T-t}{T} + 0.5$ at each step; and (4)~linear decreasing reweighting with $w_t = -\frac{T-t}{T} + 1.5$ at each step. Results are summarized in Tab.~\ref{tab:step_reweight}.

\begin{table*}[bp]
    \vspace{-0.3cm}
    \centering
    \setlength{\tabcolsep}{12pt} %
    \caption{{\textbf{Ablation study on step-level reward reweighting strategies for EasyTune.} The baseline is MLD.}}
    \vspace{0.2cm}
    \resizebox{\textwidth}{!}{
    \begin{tabular}{llllllll}
        \toprule
        \multirow{2}{*}{Method} & \multicolumn{3}{c}{R Precision $\uparrow$} & \multirow{2}{*}{FID $\downarrow$} & \multirow{2}{*}{MM Dist $\downarrow$} & \multirow{2}{*}{Diversity $\rightarrow$} & \multirow{2}{*}{Memory (GB) $\downarrow$} \\
        \cmidrule(lr){2-4}
        ~ & Top 1 & Top 2 & Top 3 & & & & \\
        \midrule
        Real  & 0.511 & 0.703 & 0.797 & 0.002 & 2.974 & 9.503 & - \\
        \midrule
        w/ DRaFT-50 \citep{clark2024directly}& 0.528 & 0.724 & 0.819 & 0.197 & 2.872 & 9.641 & 37.32 \\
        w/ EasyTune (Full)  & 0.581 & 0.769 & 0.855 & 0.132 & 2.637 & 9.465 & 22.10 \\ 
        \midrule
        EasyTune + (1) & 0.546 & 0.735 & 0.804 & 0.184 & 2.815 & 9.682 & 22.10 \\
        EasyTune + (2)  & 0.567 & 0.759 & 0.842 & 0.158 & 2.673 & 9.430 & 22.10 \\
        EasyTune + (3)  & 0.556 & 0.748 & 0.838 & 0.147 & 2.652 & 9.421 & 22.10 \\
        EasyTune + (4)  & 0.584 & 0.773 & 0.859 & 0.136 & 2.631 & 9.521 & 22.10 \\
        \bottomrule
    \end{tabular}%
    }
    \label{tab:step_reweight}%
\end{table*}

The experimental results in Tab.~\ref{tab:step_reweight} highlight the critical role of appropriately weighting different denoising steps. 
Strategy~(1), which optimizes only the final 20 steps, yields clearly inferior performance (R@1: 0.546, FID: 0.184) compared to EasyTune (Full), indicating that restricting optimization to late steps is insufficient to fully exploit the potential of the diffusion process. This result is also comparable to that of DRaFT-50 (R@1: 0.528, FID: 0.197), suggesting that DRaFT-50's suboptimal performance likely stems from gradient vanishing, which effectively causes the method to under-optimize early denoising steps. Strategy~(2), which optimizes only the initial 20 steps, achieves somewhat better performance (R@1: 0.567, FID: 0.158) than Strategy~(1), but still falls short of EasyTune (Full), showing that early-step optimization alone is also not sufficient.

For the linearly increasing reweighting scheme (Strategy~(3), $w_t = \frac{T-t}{T} + 0.5$), which places larger weights on later steps. In contrast, the linearly decreasing reweighting scheme (Strategy~(4), $w_t = -\frac{T-t}{T} + 1.5$), which emphasizes earlier steps, achieves the best overall performance among the reweighted variants (R@1: 0.584, FID: 0.136, MM Dist: 2.631). Notably, its performance is on par with, or slightly better than, EasyTune (Full) in terms of alignment (R@1: 0.584 vs.~0.581) while maintaining comparable generation quality. Taken together, these results provide strong empirical evidence that properly optimizing early denoising steps is crucial for downstream performance.

\subsection{{Sensitivity Analysis}}

\subsubsection{{Sensitivity Analysis of Retrieval Model Selection}} 

\noindent\textbf{Effect on Settings.} We investigate the sensitivity of our SPL mechanism to different reward models and their impact on final fine-tuned generation results. A key observation is that weaker reward models, when trained using hyperparameter settings optimized for stronger models, can suffer from training collapse. This occurs because the core principle of SPL is to mine motion pairs and maximize the gap between preferred and non-preferred motions. In other words, the model must learn to produce preferred motions while forgetting non-preferred ones. However, forgetting is inherently simpler than learning, and weaker retrieval models are more prone to mining incorrect pairs during online sampling. To address this, we employ more relaxed candidate number $K$ to increase the probability of successful pair mining, thereby strengthening learning signals and reducing erroneous unlearning. 

\noindent\textbf{Effect on Text-Motion Retrieval Task.} We demonstrate this using TMR, a moderately weaker retrieval model, where SPL consistently improves performance. As shown in Tab.~\ref{tab:spl_tmr}, the motion-text retrieval R@1 improves from $67.16\%$ to $68.76\%$, and motion-text retrieval R@1 improves from $67.97\%$ to $69.03\%$, confirming that our method generalizes effectively to weaker reward models with appropriate hyperparameter adjustments.

\begin{table*}[t]
    \centering
    \renewcommand{\arraystretch}{1.0}
    \caption{{\textbf{Performance of SPL mechanism for fine-tuning TMR.}}}
    \vspace{0.2cm}
\setlength{\tabcolsep}{10pt}
    \resizebox{\textwidth}{!}{
    \begin{tabular}{l|ccccc|ccccc}
        \toprule
        \multirow{2}{*}{Methods} & \multicolumn{5}{c|}{Text-Motion Retrieval$\uparrow$} & \multicolumn{5}{c}{Motion-Text Retrieval$\uparrow$} \\
        \cmidrule(lr){2-6} \cmidrule(lr){7-11}
        & R@1 & R@2 & R@3 & R@5 & R@10 & R@1 & R@2 & R@3 & R@5 & R@10 \\
        \midrule
        TMR   & $67.16$ & $81.32$ & $86.81$ & $91.43$ & $95.36$ & $67.97$ & $81.20$ & $86.35$ & $91.70$ & $95.27$ \\
        +SPL   & $68.76 $ & $82.36 $ & $87.99 $ & $92.06$ & $96.47$ & $69.03$ & $82.87$ & $87.84$ & $92.56$ & $96.45$ \\
        \bottomrule
    \end{tabular}
    }
    \label{tab:spl_tmr}
\end{table*}

\noindent\textbf{Effect on Text-to-Motion Generation Task.} We further explore whether step-aware fine-tuning generalizes to weaker pre-trained reward models. As shown in Tab.~\ref{tab:weak_reward}, even when using TMR, a less discriminative retrieval model compared to SPL, our step-aware optimization approach consistently improves generation quality. Specifically, when combined with EasyTune using TMR as the reward model, both the step-level and chain-of-thought variants achieve substantial gains: the step variant reaches R@1 of $0.573$ (vs.\ baseline $0.481$), and the chain variant reaches $0.567$. They still represent significant improvements over the baseline. This finding suggests that the effectiveness of step-aware optimization is not solely dependent on using the strongest available reward model. Rather, the key insight is that even weaker but still discriminative reward models can provide effective supervision signals, as their ranking capability, though inferior to stronger models, still exceeds that of the base generation model itself~\citep{tan2024sopo}. This robustness to reward model choice broadens the applicability of our approach.

\begin{table*}[bp]
    \vspace{-0.3cm}
    \centering
    \setlength{\tabcolsep}{12pt} %
    \caption{{\textbf{Fine-tuning performance using TMR as reward model.} The baseline is MLD.}}
    \vspace{0.2cm}
    \resizebox{\textwidth}{!}{
    \begin{tabular}{lllllll}
        \toprule
        \multirow{2}{*}{Method} & \multicolumn{3}{c}{R Precision $\uparrow$} & \multirow{2}{*}{FID $\downarrow$} & \multirow{2}{*}{MM Dist $\downarrow$} & \multirow{2}{*}{Diversity $\rightarrow$}  \\
        \cmidrule(lr){2-4}
        ~ & Top 1 & Top 2 & Top 3 & & & \\
        \midrule
        Real  & 0.511 & 0.703 & 0.797 & 0.002 & 2.974 & 9.503  \\
        \midrule
        +TMR (w/ SPL, Step) & 0.573 & 0.760 & 0.843 & 0.173 & 2.682 & 9.942 \\
        +TMR (w/ SPL, Chain) & 0.567 & 0.753 & 0.836 & 0.158 & 2.698 & 9.874  \\
        \bottomrule
    \end{tabular}%
    }
    \label{tab:weak_reward}%
\end{table*}

\subsubsection{{Analysis of Candidate Number $K$ Selection \& Retrieval Pool}}

\noindent\textbf{Mechanism.} As discussed above, the core mechanism of our SPL is to mine preference motion pairs online and to maximize the learning signal by enlarging the gap between preferred and non-preferred pairs. In essence, SPL is designed to forget incorrectly generated motions while retaining correct ones. However, this task is inherently asymmetric: forgetting is much easier than remembering, making it crucial to carefully control the frequency at which negative samples are forgotten. In our implementation, both the candidate number $K$ and the retrieval pool configuration substantially affect this behavior. Specifically, when retrieval fails, SPL jointly learns to memorize correct samples and forget incorrect ones; otherwise, the model simply optimizes the original pre-training objective. A larger $K$ and a smaller retrieval pool generally increase the retrieval success rate. \textbf{Intuitively, we should therefore choose $K$ and the retrieval pool such that successful retrievals occur substantially more often than failures.}

\noindent\textbf{Experimental Settings.} Fortunately, \cite{petrovich2023tmr} has discussed similar retrieval settings. Specifically, four retrieval pool settings are considered: (a)~\textit{All}: Using the entire test set without modification, though this can be problematic due to repetitive or subtly different text descriptions (e.g., ``person'' vs.\ ``human'', ``walk'' vs.\ ``walking''). (b)~\textit{All with threshold}: Searching over the entire test set but accepting a retrieval as correct only if the text similarity exceeds a threshold (set to 0.95 on a $[0, 1]$ scale). This approach is more principled, distinguishing between genuine matches and superficially similar pairs like ``A human walks forward'' vs.\ ``Someone is walking forward''. (c)~\textit{Dissimilar subset}: Sampling 100 motion-text pairs with maximally dissimilar texts (via quadratic knapsack approximation). This provides a cleaner but easier evaluation setting. (d)~\textit{Small batches}: Randomly sampling batches of 32 motion-text pairs and reporting average performance, providing a more realistic in-the-wild scenario.

Among these four configurations, settings~(a) and~(b) yield low retrieval success rates and are computationally expensive. Their success rates are often even lower than the failure rates, which substantially hinders the practical deployment of SPL. By contrast, configurations~(c) and~(d) achieve much higher retrieval success rates. In our main experiments, we therefore adopt setting~(d). With $K=10$, configuration~(d) attains a retrieval failure ratio of approximately 1:20. This ratio empirically leads to stable optimization. In comparison, configuration~(c) exhibits a failure ratio of about 1:6 at $K=10$, which tends to result in less stable performance due to the higher frequency of retrieval failures.

\noindent\textbf{Results \& Discussion.} Based on configurations~(c) and~(d), we systematically study how different choices of $K$ and retrieval pool settings influence generation performance. Specifically, we evaluate $K \in \{10, 15, 20\}$ for both settings, and report the results in Tab.~\ref{tab:sensitivity_K_retrieval}.
\begin{table*}[tbp]
    \vspace{-0.3cm}
    \centering
    \setlength{\tabcolsep}{12pt}
    \caption{\textbf{Sensitivity analysis of the number of candidate motions $K$ and retrieval pool settings.} The baseline is MLD. Numbers in the Method column denote the value of $K$.}
    \vspace{0.2cm}
    \resizebox{\textwidth}{!}{
    \begin{tabular}{lllllll}
        \toprule
        \multirow{2}{*}{Method} & \multicolumn{3}{c}{R Precision $\uparrow$} & \multirow{2}{*}{FID $\downarrow$} & \multirow{2}{*}{MM Dist $\downarrow$} & \multirow{2}{*}{Diversity $\rightarrow$}  \\
        \cmidrule(lr){2-4}
        ~ & Top 1 & Top 2 & Top 3 & & & \\
        \midrule
        Real  & 0.511 & 0.703 & 0.797 & 0.002 & 2.974 & 9.503  \\
        \midrule
        $K=10$ + (d)  & 0.581 & 0.769 & 0.855 & 0.132 & 2.637 & 9.465  \\ 
        $K=15$ + (d) & 0.571 & 0.758 & 0.843 & 0.142 & 2.668 & 9.486 \\
        $K=20$ + (d) & 0.564 & 0.747 & 0.830 & 0.184 & 2.704 & 9.629 \\
        $K=10$ + (c) & - & - & - & - & - & - \\
        $K=15$ + (c)  & 0.585 & 0.773 & 0.859 & 0.155 & 2.626 & 9.428  \\ 
        $K=20$ + (c) & 0.574 & 0.759 & 0.844 & 0.149 & 2.653 & 9.495\\
        \bottomrule
    \end{tabular}%
    }
    \label{tab:sensitivity_K_retrieval}%
\end{table*}

In practice, we recommend conducting a similar sensitivity analysis under limited computational resources (e.g., 10 minutes on a single GPU) to determine appropriate values of $K$ and the retrieval pool for a given application. A retrieval failure ratio of approximately 1:20 typically leads to stable and robust optimization across different scenarios.

\subsubsection{{Availability of Noise-Aware Reward}}
\label{sec:availability_of_noise_aware_reward}
\textbf{Experimental Setting.} In this experiment, we assess the noise-aware availability of our reward model, i.e., how well the underlying retrieval models can operate under noisy motion inputs. Specificallly, based on the HumanML3D dataset, we construct noisy test samples $\mathbf{x}_t$ by running the forward diffusion process of MLD. For noise-aware reward models, we directly evaluate on $\mathbf{x}_t$. For output-aware models (TMR~\citep{petrovich2023tmr}), we instead apply an ODE-based denoising process to $\mathbf{x}_t$ and use the resulting predicted clean samples $\hat{\mathbf{x}}_0$ as inputs.

\begin{table}[t]
    \centering
    \renewcommand{\arraystretch}{1.0}
    \caption{{\textbf{Experimental results of Noise-Aware Text-Motion Retrieval.}}}
    \vspace{0.2cm}
    \resizebox{\textwidth}{!}{
    \begin{tabular}{l|c|ccccc|ccccc}
        \toprule
        Methods & Input Data & \multicolumn{5}{c|}{Text-Motion Retrieval$\uparrow$} & \multicolumn{5}{c}{Motion-Text Retrieval$\uparrow$} \\
        &  & R@1 & R@2 & R@3 & R@5 & R@10 & R@1 & R@2 & R@3 & R@5 & R@10 \\
        \midrule
        ReAlign & Clean Data & $67.59$ & $82.24$ & $87.44$ & $91.97$ & $96.28$ & $68.94$ & $82.86$ & $87.95$ & $92.44$ & $96.28$ \\
        ReAlign & Noisy Data & $67.20$ & $81.46$ & $87.11$ & $91.39$ & $95.67$ & $68.02$ & $81.84$ & $87.56$ & $91.39$ & $95.69$ \\
        ReAlign  & Predicted Clean Data & $67.80$ & $82.38$ & $87.86$ & $92.27$ & $96.61$ & $68.04$ & $82.47$ & $87.97$ & $92.10$ & $96.34$ \\
        \midrule
        SPL  & Clean Data & $69.31$ & $83.71$ & $88.66$ & $92.81$ & $96.75$ & $70.23$ & $83.41$ & $88.72$ & $93.07$ & $97.04$ \\
        SPL  & Noisy Data & {69.36} & $83.63$ & $88.53$ & $92.83$ & $96.76$ & {70.34} & $83.41$ & $88.66$ & $93.04$ & $96.93$ \\
        SPL  & Predicted Clean Data & $68.39$ & $83.31$ & $88.59$ & $93.11$ & $96.73$ & $68.60$ & $82.35$ & $88.06$ & $92.79$ & $96.53$ \\
        \midrule
        TMR  & Clean Data & $67.16$ & $81.32$ & $86.81$ & $91.43$ & $95.36$ & $67.97$ & $81.20$ & $86.35$ & $91.70$ & $95.27$ \\
        TMR  & Predicted Clean Data & $66.98$ & $81.04$ & $87.09$ & $92.11$ & $95.74$ & $68.32$ & $80.69$ & $86.43$ & $92.13$ & $95.84$ \\
        \bottomrule
    \end{tabular}
    }
    \label{tab:ablation_noise}
\end{table}

\noindent\textbf{Results \& Discussion.} As shown in Tab.~\ref{tab:ablation_noise}, both noise-aware models (ReAlign and SPL) demonstrate remarkable stability across conditions, with SPL achieving virtually identical performance on noisy data ($69.36\%$ R@1) versus clean data ($69.31\%$ R@1), while output-aware models like TMR can be effectively adapted via ODE-based denoising ($66.98\%$ recovery from clean baseline $67.16\%$), collectively validating that modern retrieval models reliably perceive intermediate denoising states and can serve as robust reward models for diffusion-based optimization.

\subsubsection{{Sensitivity Analysis on Learning Rate}}

To further assess the robustness of our approach, we examine the sensitivity of EasyTune to the learning rate hyperparameter. Across a reasonable range of learning rates (from  $10^{-5}$ to $2 \times 10^{-4}$), our method exhibits only minor performance variation, indicating strong robustness to this critical hyperparameter. The corresponding results are shown in Fig.~\ref{fig:app_lr}.

\begin{figure}[tbp]
    \centering
    \begin{tabular}{ccc}
        \includegraphics[width=0.31\textwidth]{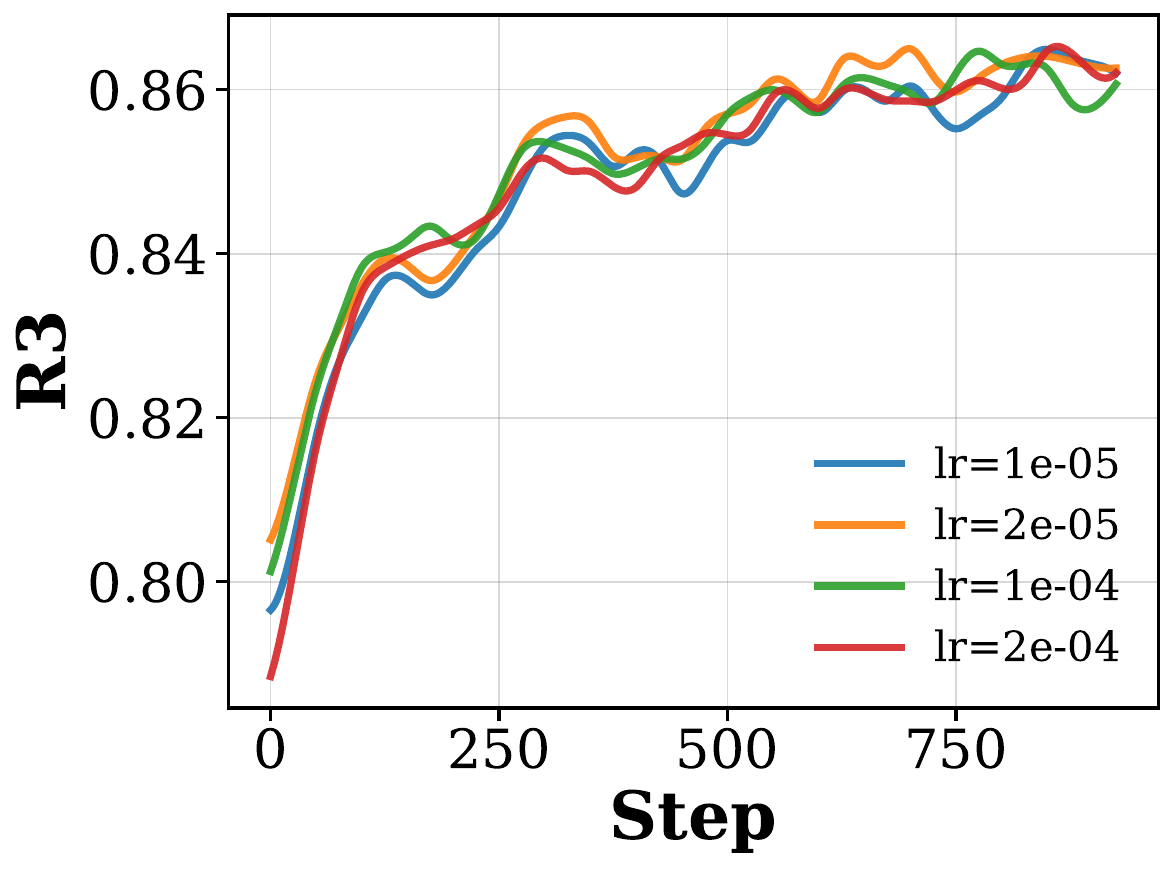} &
        \includegraphics[width=0.31\textwidth]{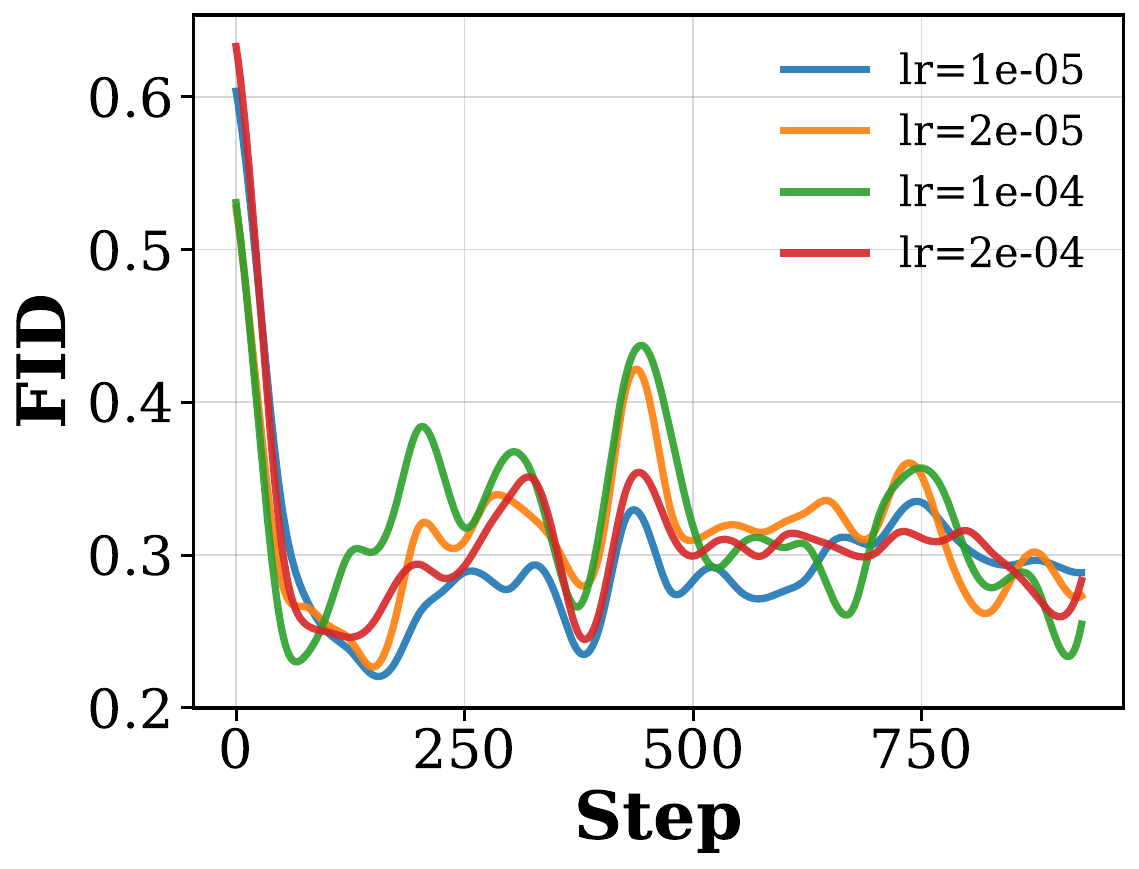} &
        \includegraphics[width=0.31\textwidth]{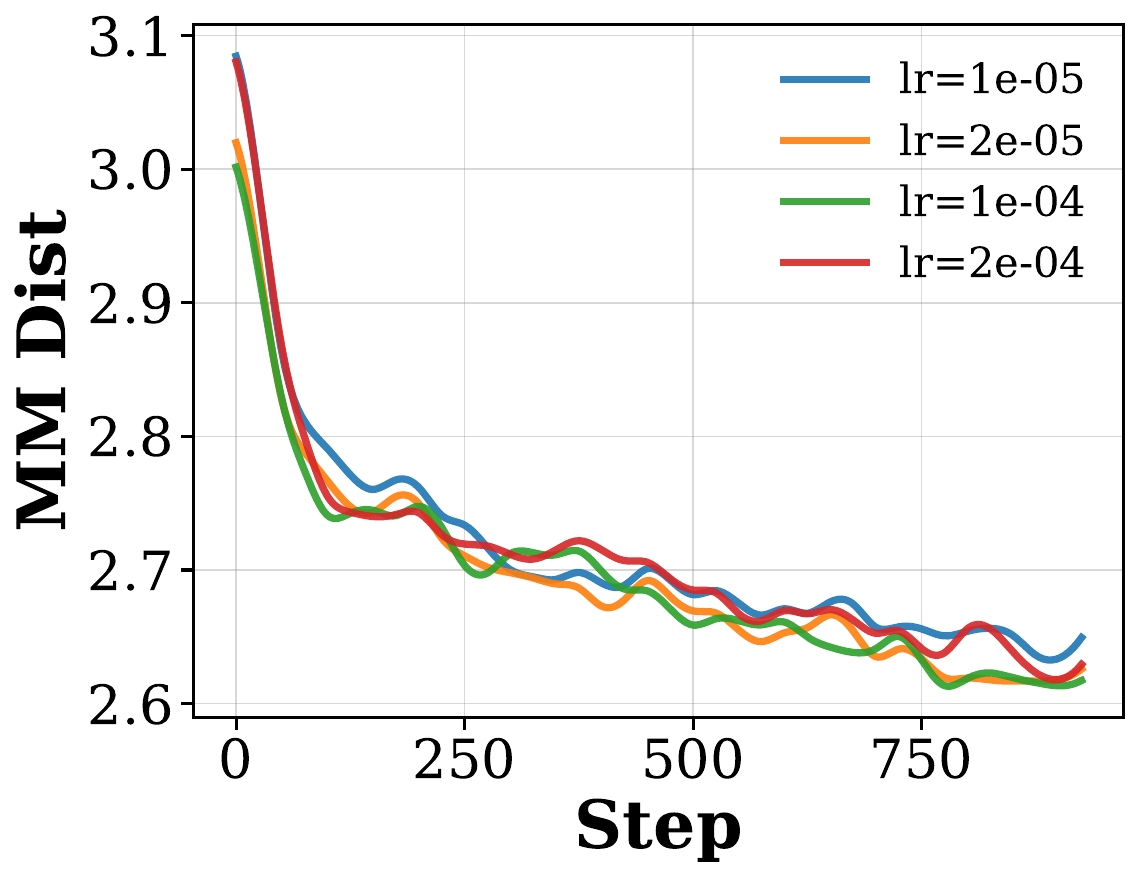} \\
        \small{(a) R-Precision (Top-3)} & \small{(b) FID} & \small{(c) MM Distance} \\
    \end{tabular}
    \caption{\textbf{Learning rate sensitivity analysis on validation set.} Performance metrics remain stable across the learning rate range (spanning from $2 \times 10^{-4}$ to $10^{-5}$), demonstrating the robustness of EasyTune to this hyperparameter. (a)~R-Precision at Top-3; (b)~Frechet Inception Distance; (c)~Multi-Modal Distance.}
    \label{fig:app_lr}
\end{figure}

\subsection{{Comprehensive Statistics on Overhead}}
\label{supp:overhead}

\subsubsection{{Memory Overhead}}
In this part, we provide a detailed comparison of the memory consumption of existing fine-tuning methods and our EasyTune framework. Our pipeline consists of several key stages, including loading the model, encoding text prompts, denoising in the latent space, decoding motions via the VAE, and computing rewards followed by backpropagation. As illustrated in Fig.~\ref{fig:app_memory}, we report both (a) the memory usage of each individual stage and (b) the overall memory trajectory throughout the full optimization process.  All experiments are conducted on a single NVIDIA RTX A6000 GPU, with Intel(R) Xeon(R) Silver 4316 CPU @ 2.30GHz.

It is worth noting that our method performs multiple optimization steps within a single denoising trajectory, which leads to relatively high average memory utilization. However, the peak memory consumption of EasyTune is significantly lower than that of existing methods. In practice, higher utilization indicates more efficient use of available GPU resources, while a lower peak memory footprint reflects reduced hardware requirements. The results further demonstrate that the memory savings of our method mainly stem from the $\mathcal{O}(1)$ memory growth of the denoising process.

\begin{figure}[tbp]
    \centering
        \includegraphics[width=\textwidth]{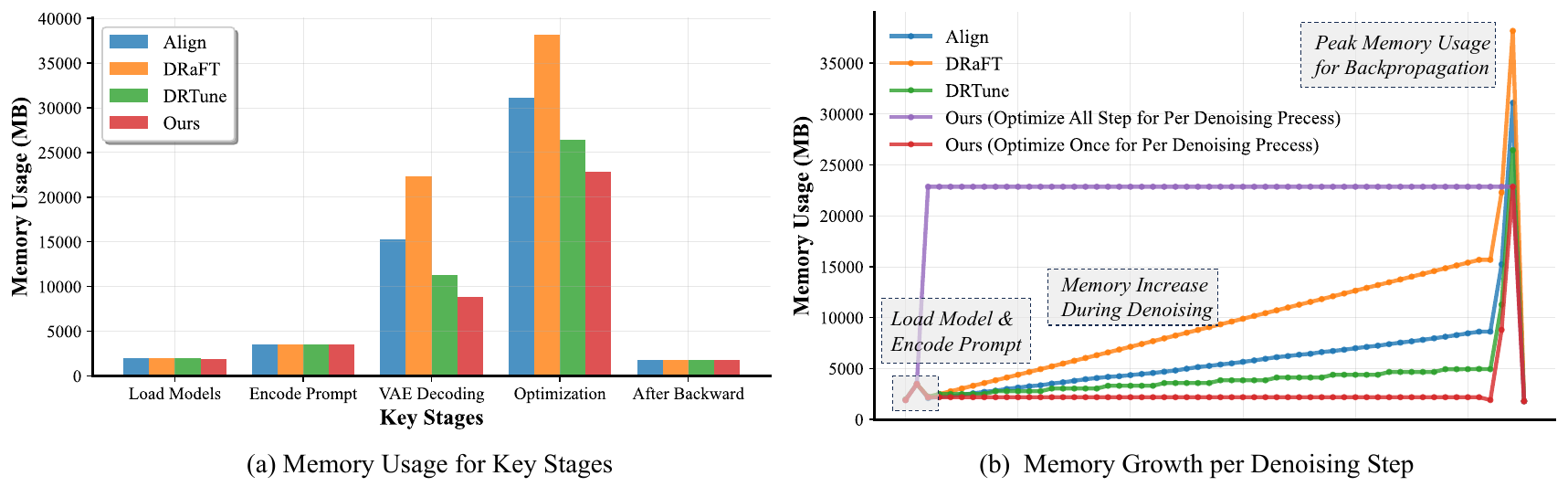}
        \vspace{-0.4cm}
        \caption{\textbf{Comprehensive memory analysis of EasyTune and existing fine-tuning methods.} We report the memory usage of key stages (model loading, prompt encoding, denoising, VAE-based motion decoding, and reward computation with backpropagation), as well as the full memory trajectory during optimization. EasyTune achieves lower peak memory while maintaining high utilization, benefiting from the $\mathcal{O}(1)$ memory growth of the denoising process.}
        \label{fig:app_memory}
\end{figure}

\subsubsection{{Computational Overhead}}

In this part, we benchmark the training-time and computational overhead of EasyTune against existing fine-tuning methods. Following the setup used in our main experiments, we measure the training time and total TFLOPs required to reach convergence. Additionally, all methods share the same sampling procedure as their corresponding base diffusion models, and thus incur no additional overhead during inference.

As shown in Tab.~\ref{tab:overhead}, EasyTune is consistently more training-efficient than prior differentiable reward-based methods. It achieves the lowest per-step optimization cost (1.47 seconds per update vs roughly 4.7--5.6 seconds for other methods), and to reach a reward score of 0.70 it needs only 263.36 seconds and 10191 TFLOPs, compared to 466.27 seconds and 18044 TFLOPs for DRaFT. For a reward score of 0.75, EasyTune converges in 358.17 seconds (13861 TFLOPs), while DRaFT requires 2616.54 seconds (101260 TFLOPs), yielding a \textbf{7.3x speedup} under a much smaller compute budget. Moreover, EasyTune is the only method that successfully reaches reward scores of 0.80 and 0.85 within the given budget, highlighting its stronger optimization capacity. Together with the memory efficiency analysis in Fig.~\ref{fig:app_memory}, these results show that EasyTune offers a substantially more efficient and practical fine-tuning strategy than existing differentiable reward-based approaches.
\begin{table*}[tp]
\centering
\caption{{\textbf{Computational overhead comparison.} We report the training time and TFLOPs required to reach different reward scores. Total time is measured in seconds on a single NVIDIA RTX A6000 GPU. ``-'' indicates the method could not reach that reward level within a reasonable training budget.}}
\vspace{0.2cm}
\label{tab:overhead}
\setlength{\tabcolsep}{20pt}
\resizebox{\textwidth}{!}{%
\begin{tabular}{lccccc}
\toprule
 & DRaFT & AlignProp & DRTune & ReFL & \textbf{EasyTune (Ours)} \\
\midrule
Time per Opt. (s) & 5.61 & 5.17 & 4.90 & 4.72 & \textbf{1.47} \\
\midrule
\multicolumn{6}{c}{\textit{Reward Score = 0.70}} \\
\midrule
Time (s) & 466.27 & 271.99 & 554.77 & 820.29 & \textbf{263.36} \\
TFLOPs & 18044 & 10526 & 21469 & 31745 & \textbf{10191} \\
\midrule
\multicolumn{6}{c}{\textit{Reward Score = 0.75}} \\
\midrule
Time (s) & 2616.54 & 971.55 & 2009.59 & - & \textbf{358.17} \\
TFLOPs & 101260 & 37599 & 77771 & - & \textbf{13861} \\
\midrule
\multicolumn{6}{c}{\textit{Reward Score = 0.80}} \\
\midrule
Time (s) & - & - & - & - & \textbf{452.53} \\
TFLOPs & - & - & - & - & \textbf{17513} \\
\midrule
\multicolumn{6}{c}{\textit{Reward Score = 0.85}} \\
\midrule
Time (s) & - & - & - & - & \textbf{1025.17} \\
TFLOPs & - & - & - & - & \textbf{39674} \\
\bottomrule
\end{tabular}%
}
\end{table*}

\begin{figure}[tp]
    \centering
        \includegraphics[width=\textwidth]{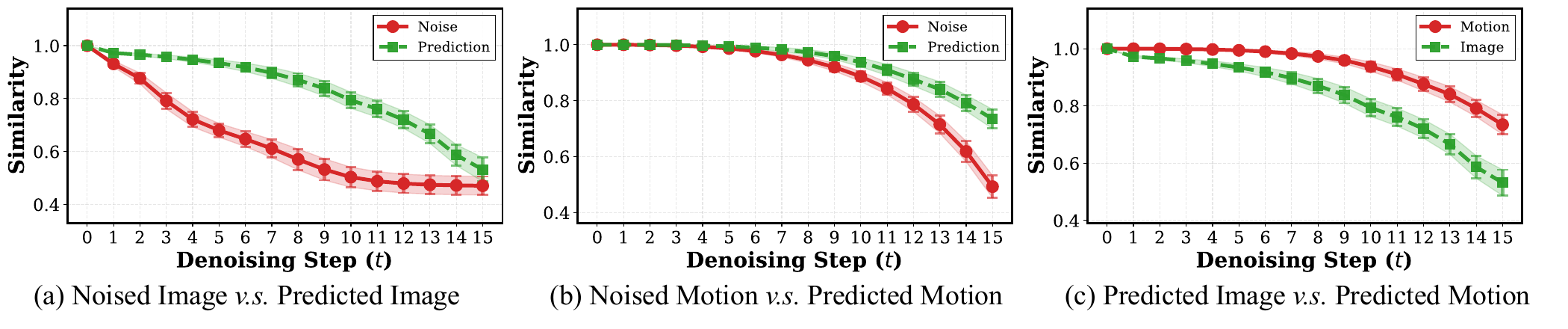}
        \vspace{-0.4cm}
        \caption{\textbf{Noise-perception comparison between images and motions.} We report the cosine similarity between noisy states and their ODE-based predictions etween ODE-based predictions and trajectory-level rewards, across denoising steps for both the image (FLUX.1~dev) and motion (MLD) domains.}
        \label{fig:image_motion_cmp}
\end{figure}

\begin{figure}[bp]
    \centering
        \includegraphics[width=1\textwidth]{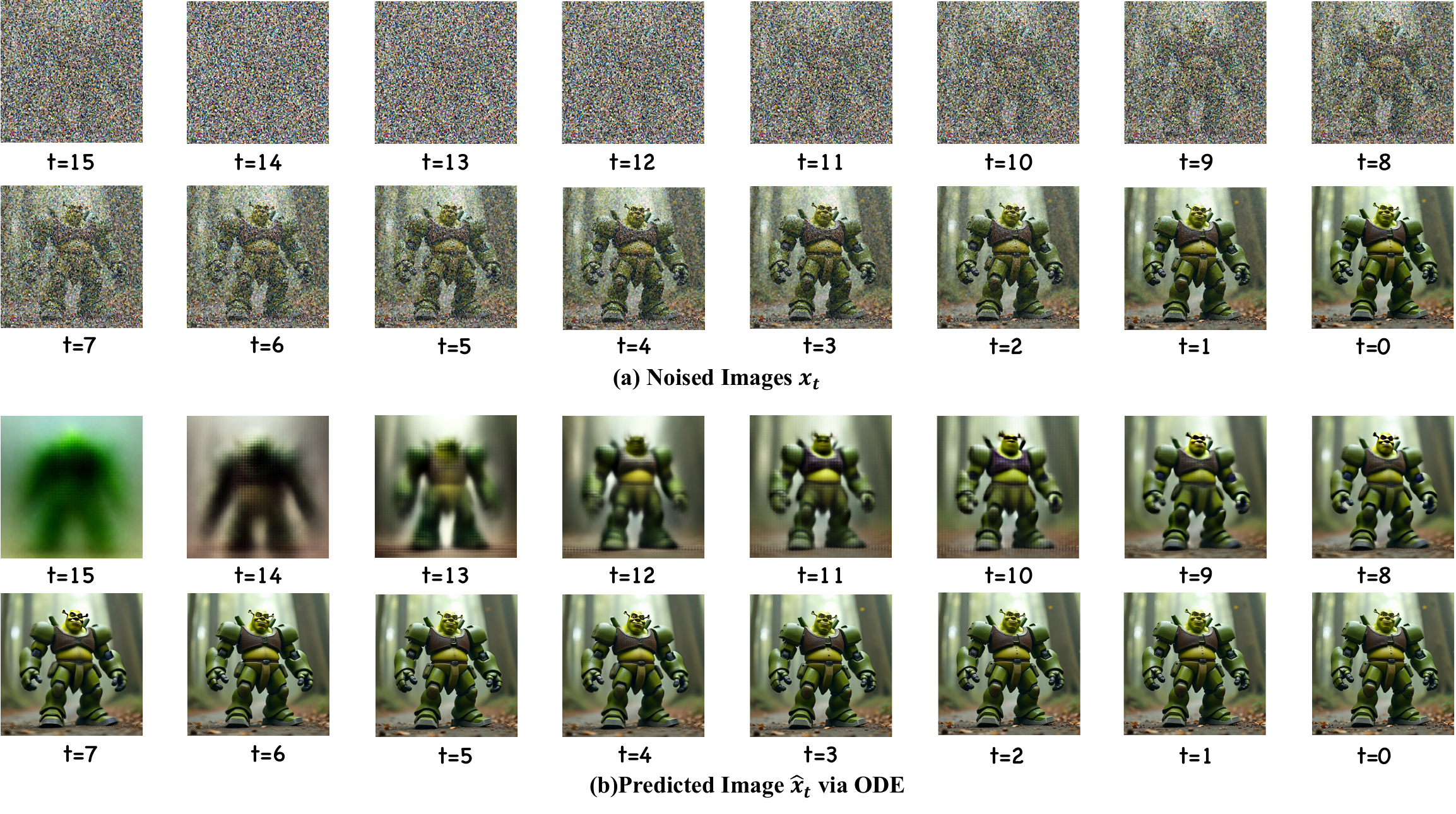}
        \vspace{-0.4cm}
        \caption{\textbf{Noisy states and ODE-based predictions for images across denoising steps.} Visualization of intermediate noisy images and their corresponding ODE-based predictions at different steps of the denoising process using FLUX.1~dev.}
        \label{fig:app_noised_ode_image}
\end{figure}

\subsection{{Noise-Perception Analysis of Image and Motion}}
\label{supp:image}

\noindent \textbf{Experimental Settings.} To quantify the perceptibility and sensitivity of noisy states, we perform a unified study on both image and motion generation. For the image domain, we select 50 prompts from HPDv2~\citep{wu2023human} and use FLUX.1~dev~\citep{flux2024} as the base model. For each prompt, we generate 12 images with 16 denoising steps, a guidance scale of 3.5, and a resolution of $720\times 720$. At each of the 16 steps, we store both the noisy image and its ODE-based prediction, and compute their cosine similarity using CLIP ViT-L/14 features~\citep{radford2021learning}. To ensure statistical significance and robustness, all experiments are repeated 50 times per prompt over 50 prompts, and we report the mean.

\begin{figure}[tp]
    \centering
        \includegraphics[width=\textwidth]{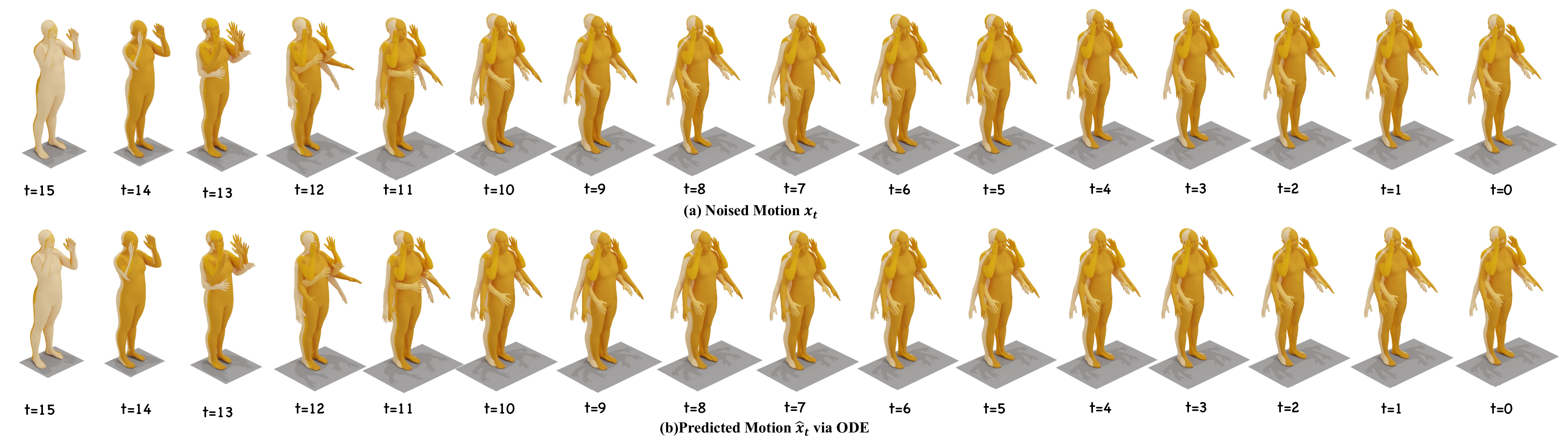}
        \vspace{-0.4cm}
        \caption{\textbf{Noisy states and ODE-based predictions for motions across denoising steps.} Visualization of intermediate noisy motions and their corresponding ODE-based predictions at different steps of the denoising process using MLD.}
        \label{fig:app_noised_ode_motion}
\end{figure}

For the motion domain, we follow a similar protocol using MLD as the base model and TMR as the feature extractor. We randomly sample 50 text prompts from HumanML3D and, for each prompt, generate 12 motion samples along the full denoising trajectory. At each step, we compute the cosine similarity between the noisy motion and its ODE-based prediction in the TMR feature space.

\noindent \textbf{Results and Discussion.} As summarized in Fig.~\ref{fig:image_motion_cmp}, both image and motion models exhibit increasing similarity between noisy states and ODE-based predictions as the denoising process proceeds. However, the image final results consistently shows weaker similarity with noised state, especially at high-noise steps. This indicates that image generation results have a poorer perception of early noise states compared to motion generation, making it harder for directly fine-tuning models by step-level optimization. Fortunately, their ODE-based prediction results show comparable performance, indicating that their step-aware optimization can still be performed by ODE-based prediction. Additionally, visual examples of noisy states and their ODE-based predictions are presented in Fig.~\ref{fig:app_noised_ode_image} and Fig.~\ref{fig:app_noised_ode_motion}, showing the perceptual differences across denoising steps for both domains.

\subsection{{Evaluation on Physical Perception Ability of Reward Model.}}
\label{supp:physical}

\begin{wraptable}{r}{0.45\textwidth}
\centering
\vspace{-0.9cm}
\caption{\textbf{Physical perception evaluation of the reward model.}}
\vspace{0.2cm}
\label{tab:physical_perception}
\setlength{\tabcolsep}{12pt}
\begin{tabular}{lc}
\toprule
Comparison Result & Count \\
\midrule
$r(\mathbf{x}_\text{gt}, c) > r(\mathbf{x}_\text{gen}, c)$ & 48 \\
$r(\mathbf{x}_\text{gt}, c) \le r(\mathbf{x}_\text{gen}, c)$ & 2 \\
\midrule
Total & 50 \\
\bottomrule
\end{tabular}
\vspace{-0.5cm}
\end{wraptable}
To investigate the physical perception capabilities of our reward model, we conducted an experiment to assess its ability to distinguish between real and generated motions. We selected 50 prompts and their corresponding ground-truth motions ($\mathbf{x}_\text{gt}$) from the HumanML3D test set. For each prompt, we also generated a motion ($\mathbf{x}_\text{gen}$) using the pretrained MLD model. We then evaluated both the ground-truth and generated motions using our reward model with an empty text condition ($c=\text{`'}$), obtaining reward scores $r(\mathbf{x}_\text{gt}, c)$ and $r(\mathbf{x}_\text{gen}, c)$. 

Our findings are summarized in Tab.~\ref{tab:physical_perception}. The reward for the ground-truth motion was higher than for the generated motion in 48 out of 50 cases (96\%). \textbf{This result strongly suggests that the reward model possesses a significant degree of physical perception.} This capability likely arises because the reward model is trained on real-world motion data, treating it as in-distribution, while viewing synthetically generated data as out-of-distribution. Such a distinction enables the model to develop a sensitivity to physical plausibility, a phenomenon similarly observed in anomaly detection literature~\citep{flaborea2023multimodal}.

\subsection{{Failure Case Analysis: Reward Hacking.}}
\label{supp:failure}
As a well-known challenge in reinforcement learning~\citep{clark2024directly}, reward hacking can emerge during model  fine-tuning, where continued optimization after convergence may degrade generation quality.  

As illustrated in Fig.~\ref{fig:app_hacking}, models may over-fit to semantic alignment while  neglecting realistic motion dynamics. For example, given the prompt ``A person stands up from  the ground, lifts their right foot, and sets it back down,'' a model suffering from reward hacking might generate a person \textbf{continuously} lifting their foot to over-fit to the ``lifts  their right foot'' action. Similarly, for ``A person squats down, \textbf{then stands up and moves  forward},'' the model might misinterpret this as ``A person squats down \textbf{while moving forward.}''  Fortunately, our method combined with KL-divergence regularization (as discussed in Sec.~\ref{supp:hacking}) can effectively mitigate this phenomenon. We recommend early stopping  after initial convergence, with the stopping point determined by validation set performance,  which provides an effective strategy to alleviate reward hacking issues.

\begin{figure}[t]
    \centering
    \includegraphics[width=\textwidth]{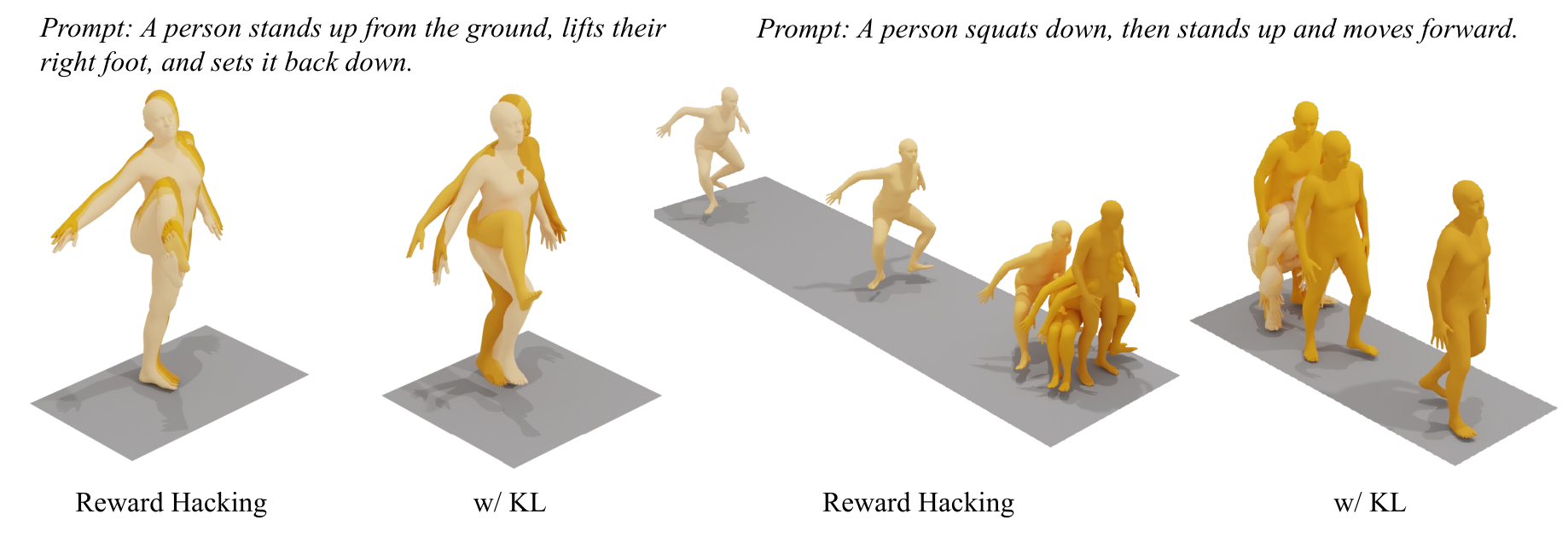}
    \vspace{-0.3cm}
    \caption{\textbf{Illustration of reward hacking in motion generation.} Examples demonstrating that over-fitting to reward signals may lead to semantically aligned but physically unrealistic motions. For better visualization, corresponding videos are provided in the supplementary materials.}
    \label{fig:app_hacking}
\end{figure}

\section{More Technical Discussions} \label{supp:discussion}

\subsection{Gradient Analysis of Differentiable Reward Methods}  \label{supp:grad_analysis}

Given a diffusion model $\epsilon_\theta$, we first sample a latent variable $\mathbf{x}_T \sim \mathcal{N}(0, \mathbf{I})$. Starting from $\mathbf{x}_T$, we apply $T-t$ denoising steps to obtain $\mathbf{x}_t$, while retaining the computational graph for gradient analysis. At step $t$, we consider the Jacobian of the diffusion prediction with respect to its input,
\begin{equation}
\frac{\partial \pi_\theta(\mathbf{x}_t^\theta)}{\partial \mathbf{x}_t^\theta},
\end{equation}
and its sequential product across denoising steps,
\begin{equation}
\prod_{s=1}^{t} \frac{\partial \pi_\theta(\mathbf{x}_s^\theta)}{\partial \mathbf{x}_s^\theta}.
\end{equation}
This quantity can be interpreted as the gradient of the noisy sample $\mathbf{x}_t^\theta$ with respect to the predicted clean sample $\mathbf{x}_0^\theta$, i.e., the effective coefficient governing optimization at step $t$. Consequently, the gradient of the training objective takes the form
\begin{equation}
    \frac{\partial \mathcal{L}(\theta)}{\partial \theta} = -\mathbb{E}_{c\sim \mathcal{D}_\mathrm{T}, \mathbf{x}^\theta_0 \sim \pi_\theta(\cdot|c)} \Bigg[ \frac{\partial \mathcal{R}_\phi(\mathbf{x}^\theta_0)}{\partial \mathbf{x}^\theta_0}  \cdot \sum_{t=1}^T \underbrace{\left( \prod_{s=1}^{t-1} \frac{\partial \pi_\theta(\mathbf{x}^\theta_s, s, c)}{\partial \mathbf{x}^\theta_s} \right)}_{\text{tends to 0 as t increases}} \underbrace{\left(\frac{\partial \pi_\theta(\mathbf{x}^\theta_t, t, c)}{\partial \theta}\right)}_{\text{optimizing t-th step}} \Bigg].
    \label{eq:ttttt}
\end{equation}
Importantly, during optimization, especially when $t$ is large, these coefficients rapidly decay toward 0 (as shown in Fig.~\ref{fig:norm}), i.e.,
\begin{equation}
\prod_{s=1}^{t} \frac{\partial \pi_\theta ({x_s^\theta})}{\partial {x_s^\theta}} \rightarrow 0.
\end{equation}
Thus, for large noise steps, which often determine the final generation quality \citep{xie2025dymo}, this vanishing effect implies that existing methods tend to under-optimize such steps, thereby leading to coarse or suboptimal results.

\subsection{{Discussion on Step Optimization and Policy Gradient Methods}}
\label{supp:discussion:step}
In the field of diffusion posting-training, beyond mentions differentiable-reward approaches, another major line of work is policy-gradient methods, such as GRPO~\citep{xue2025dancegrpo, liu2025flow}, PPO~\citep{ren2024diffusion}, and DDPO~\citep{black2023training}. In this section, we point out that, \textbf{both our approach and these policy-gradient methods truncate the gradient along the diffusion chain, and in practice, they have not been observed to suffer from global inconsistency issues.} As a result, our method inherits a widely accepted and principled design, and does not suffer from undesirable behaviors caused by breaking the differentiable chain structure. Here, we briefly discuss the connections between our method and existing policy-gradient methods.

\noindent \textbf{GRPO.} DanceGRPO~\citep{xue2025dancegrpo} can be viewed as instantiating GRPO~\citep{guo2025deepseek} in the diffusion setting, where each denoising trajectory sampled under a condition $c$ is treated as a rollout generated by the policy $p_{\theta}$. For every prompt $c$, a group $\mathcal{G} = \{\mathbf{x}^{1:K}\}$ of $K$ trajectories is drawn from the current policy, and a group-wise relative advantage is computed to compare their rewards. By estimating advantages $\mathcal{A}^{k}$ from this group of $K$ rollouts, GRPO~\citep{xue2025dancegrpo} optimizes a PPO-style clipped surrogate objective:
\begin{equation}
\begin{aligned}
\mathcal{J}_{\mathrm{GRPO}}&(\theta, \mathcal{G})
= \mathbb{E}_{\substack{c\sim\mathcal{D},\, \mathbf{x}^{1:K}, t\sim\mathcal{U}(0,T),\, k\sim\mathcal{U}(1,K)}} \Big[
\min\!\big( \rho_{\theta}^{k,t}\,\mathcal{A}^{k},\;
\mathrm{clip}(\rho_{\theta}^{k,t},\,1-\varepsilon,\,1+\varepsilon)\,\mathcal{A}^{k} \big)
\Big],
\end{aligned}
\end{equation}
where advantages $\mathcal{A}^{k}$ and probability ratios $\rho_{\theta}^{k,t}$ are computed as follows:
\begin{equation}\small
\begin{aligned}
\mathcal{A}^{k} \,&=\, \frac{r(\mathbf{x}^{k}_0,c)-\mu(r(\mathbf{x}^{1:K},c))}{\sigma(r(\mathbf{x}^{1:K},c))},\;
\rho_{\theta}^{k,t} \,=\, 
\frac{p_{\theta}(\mathbf{x}^{k}_t \mid \mathbf{x}^{k}_{t-1}, c)}
     {p_{\theta_{\mathrm{old}}}(\mathbf{x}^{k}_t \mid \mathbf{x}^{k}_{t-1}, c)},
\end{aligned}
\label{eq:grpo111}
\end{equation}
where $r(\mathbf{x}^{k}_0,c)$ is the scalar reward assigned to the final sample in the $k$-th rollout, and $\mu(r(\mathbf{x}^{1:K},c))$ and $\sigma(r(\mathbf{x}^{1:K},c))$ denote the mean and standard deviation of rewards within the group $\mathbf{x}^{1:K}$. This group-wise normalization makes $\mathcal{A}^{k}$ a relative score that measures how much better (or worse) a sample is compared to its peers under the same condition $c$, which stabilizes training and mitigates scale mismatch across prompts.

\noindent \textbf{DDPO.} As a classical RL method, DDPO~\citep{black2023training, fan2023dpok} optimizes the generative diffusion model via policy gradients using $K$ rollouts. Its objective is written as:
\begin{equation}
\begin{aligned}
\nabla_\theta \mathcal{J}_{\mathrm{DDPO}}(\theta, \mathcal{G}) = \mathbb{E}_{\substack{c\sim\mathcal{D},\; \mathbf{x}^{1:K}, t\sim\mathcal{U}(0,T),\; k\sim\mathcal{U}(1,K)}}
 \left[r(\mathbf{x}^k_0,c) \cdot \nabla_\theta
    \log p_{\theta}(\mathbf{x}^{k}_t \mid \mathbf{x}^{k}_{t-1}, c)
  \right].
\end{aligned}
\label{eq:ddpo}
\end{equation}
Here $r(\mathbf{x}^k_0,c)$ again denotes the scalar reward assigned to the final denoised sample in the $k$-th rollout, while $\log p_{\theta}(\mathbf{x}^{k}_t \mid \mathbf{x}^{k}_{t-1}, c)$ is the log-probability at step $t$.

\noindent \textbf{Discussion.} By examining Eq.~\eqref{eq:grpo111} and Eq.~\eqref{eq:ddpo}, we identify a key observation: both GRPO and DDPO treat this optimization as multiple pre-step optimization (in the form of $\log p_{\theta}(\mathbf{x}^{k}_t \mid \mathbf{x}^{k}_{t-1}, c)$), rather than optimizing the entire denoising process in a single optimization pass. Specifically, GRPO and DDPO first sample $K$ trajectories of length $T$ under each condition. Subsequently, they assign a scalar reward to each trajectory and use this reward as a weight to progressively imitate these trajectories step by step. 

However, we observe a fundamental inconsistency in existing differentiable-reward methods: \textbf{differentiable-reward methods recursively decompose the optimization process as the optimization of multiple sub-chains, rather than as the optimization of independent steps.} This design choice introduces the challenges discussed in the Sec.~\ref{sec:motivation}, particularly the vanishing gradient and memory problem illustrated in Fig.~\ref{fig:memory} and \ref{fig:norm}. In contrast, our EasyTune solve this issue by step-aware optimization, and its objective is:
\begin{equation}
    \begin{aligned}
        \mathcal{L}_{\mathrm{EasyTune}}(\theta) = -\mathbb{E}_{c\sim \mathcal{D}_\mathrm{T}, \mathbf{x}^\theta_t \sim \pi_\theta(\cdot|c), t\sim \mathcal{U}(0, T)} \left[ \mathcal{R}_{\phi} (\mathbf{x}^\theta_t, t, c) \right],
    \end{aligned}
    \label{supp:eq:loss_ours}
\end{equation}
where $\mathcal{R}_{\phi} (\mathbf{x}^\theta_t, t, c)$ denotes the reward directly assigned to the intermediate denoised sample at step $t$, rather than a single trajectory-level scalar. Crucially, our approach evaluates and optimizes at each timestep independently, with $t$ uniformly sampled from $\{0, \ldots, T\}$. Therefore, compared to the chain optimization as shown in Eq.~\eqref{eq:ttttt}, our method is more aligned with the principled design of policy-gradient methods, while maintaining a step-wise optimization structure that avoids long-chain backpropagation and achieves better results.

\subsection{Discussions on Existing Fine-Tuning Methods}

We compare our proposed \textbf{EasyTune} method with existing direct reward fine-tuning approaches, including DRaFT-K~\citep{clark2024directly}, AlignProp~\citep{prabhudesai2023aligning}, ReFL~\citep{clark2024directly}, and DRTune~\citep{wu2025drtune}. The pseudocode of EasyTune is provided in Algorithm~\ref{alg:Easy-tune}. By analyzing their optimization strategies and computational requirements, we highlight four key advantages of EasyTune:

\textbf{(1) Higher Optimization Efficiency}: Existing methods optimize model parameters after completing \( T \) or \( T - t_{\mathrm{stop}} \) reverse steps, resulting in infrequent updates (e.g., one update per \( T \) or \( T - t_{\mathrm{stop}} \) steps). In contrast, EasyTune optimizes at each denoising timestep, achieving one update per step. This significantly increases the frequency and effectiveness of parameter updates, enabling faster convergence and better alignment with reward objectives.

\begin{figure}[t]
\centering
\begin{minipage}{\linewidth}
    \begin{algorithm}[H]
        \caption{EasyTune: Efficient Step-Aware Fine-Tuning}\label{alg:Easy-tune}
        \begin{algorithmic}[1]
        \Require Pre-trained diffusion model $\epsilon_\theta$, reward model $R_\phi$.
        \Ensure Fine-tuned diffusion model $\epsilon_\theta$.
        \For{each text condition $c \in \mathcal{D}_\mathrm{T}$ \textbf{and} not converged}
        \State $\mathbf{x}_T \sim \mathcal{N}(0, \mathbf{I})$
        \If{\textit{Chain Optimization}}
        \State Copy $\theta'  \leftarrow \theta$
        \EndIf
        \For {$t = T, ..., 1$}
        \State Denoise by $\theta$: $\mathbf{{x}}^\theta_{t-1} = \pi_\theta(\mathbf{{x}}^\theta_{t})$ by $   \mathbf{x}^\theta_{t-1} = \pi_\theta \big(\mathbf{x}^\theta_t, t, c \big)$, Eq. \eqref{eq:ourgradient}
        \If{\textit{Chain Optimization}}
        \State Optimize: update diffusion model $\epsilon_{\theta'}$ by gradient from $\epsilon_{\theta}$: $\frac{\partial \mathcal{L}_{\mathrm{EasyTune}}(\theta)}{\partial \theta}$ in Eq. \eqref{eq:loss_ours}
        \Else
        \State Optimize: update diffusion model $\epsilon_\theta$ by $\frac{\partial \mathcal{L}_{\mathrm{EasyTune}}(\theta)}{\partial \theta}$ in Eq. \eqref{eq:loss_ours}
        \EndIf
        \State Stop Gradient: $\mathbf{{x}}_{t-1}^\theta = \mathrm{sg}(\mathbf{{x}}_{t-1}^\theta)$
        \EndFor
        \If{\textit{Chain Optimization}}
        \State Assign $\theta \leftarrow \theta'$
        \EndIf
        \EndFor
        \end{algorithmic}
    \end{algorithm}
\end{minipage}
\label{fig:training_algorithms}
\end{figure}

\textbf{(2) Lower Storage Requirements}: Methods like DRaFT-K, AlignProp, ReFL, and DRTune rely on recursive gradient computations, requiring storage of intermediate states across multiple timesteps (e.g., gradients at step \( t \) depend on step \( t+1 \), as in Eq.~\eqref{eq:recursive_gradient} and Eq.~\eqref{eq:DRTune}). This increases memory overhead. EasyTune, however, computes gradients solely for the current timestep (Eq.~\eqref{eq:ourgradient}), eliminating the need to store recursive states and substantially reducing memory usage.

\textbf{(3) Fine-grained Optimization}: Existing methods optimize over coarse timestep ranges or rely on early stopping, which limits their ability to capture step-specific dynamics. EasyTune, as shown in Algorithm~\ref{alg:Easy-tune}, performs optimization at each denoising step, allowing precise adjustments to the diffusion model based on the gradients from the reward model at individual timesteps. This fine-grained approach enhances the ability of model to align with complex motion generation objectives.

\textbf{(4) Simpler and More Effective Pipeline}: Existing methods introduce complex designs to mitigate optimization and storage challenges, such as variable timestep sampling or early stopping mechanisms. These add computational overhead and reduce generality. EasyTune simplifies the process by performing step-wise optimization, as shown in Algorithm~\ref{alg:Easy-tune}, making it more straightforward, robust, and applicable across diverse motion generation tasks.

\subsection{Details on Self-Refining Preference Learning}   \label{supp:SPL}
The \textbf{Self-Refining Preference Learning (SPL)} mechanism constructs preference pairs for reward model fine-tuning without human annotations, using a retrieval-based auxiliary task. Algorithm~\ref{alg:spl} outlines the process, which iterates over a training subset $\mathcal{D}_\mathrm{T}$ of motion-text pairs to refine text and motion encoders $\mathcal{E}_\mathrm{T}$, $\mathcal{E}_\mathrm{M}$, and a temperature parameter $\tau$, collectively parameterized as $\phi$.

        \begin{algorithm}[t]
            \caption{Self-Refining Preference Learning}
            \label{alg:spl}
            \begin{algorithmic}[1]
            \Require Training subset $\mathcal{D}_\mathrm{T}$, text/motion encoders $\mathcal{E}_\mathrm{M}$/ $\mathcal{E}_\mathrm{T}$, temperature parameter $\tau$, retrieval number $k$.
            \Ensure Fine-tuned reward model $\mathcal{E}_\mathrm{M}$, $\mathcal{E}_\mathrm{T}$, and $\tau$.

            \State \textbf{Initialize:} Parameters $\phi \gets \{ \mathcal{E}_\mathrm{M}, \mathcal{E}_\mathrm{T}, \tau \}$
            \For{each data pair $(\mathbf{x}^\mathrm{gt}, c) \in \mathcal{D}_\mathrm{T}$ and not converged}
                \State $\blacktriangleright$ \textbf{Step 1: Preference Data Mining}
                \State Compute reward scores for all $\mathbf{x} \in \mathcal{D}_\mathrm{T}$ using Eq.~\eqref{eq:relevance_score}
                \State Retrieve top-$k$ motions $\mathcal{D}_\mathrm{R}$ using Eq.~\eqref{eq:retrieve} 
                \State Set winning $\mathbf{x}^\mathrm{w}$ and losing motions  $\mathbf{x}^\mathrm{l}$ using Eq.~\eqref{eq:winner_loser}
                \State $\blacktriangleright$ \textbf{Step 2: Preference Fine-tuning}
                \State Compute softmax probabilities $\mathcal{P}$ using Eq.~\eqref{eq:reward_distribution}
                \State Define target distribution $\mathcal{Q}$ using Eq.~\eqref{eq:target_distribution}
                \State Compute loss $\mathcal{L}_\mathrm{SPL}(\phi)$ by $\mathcal{Q}$ and $\mathcal{P}$ using Eq. \eqref{eq:klloss}
                \State Update parameters $\phi$ by $\nabla_\phi \mathcal{L}_\mathrm{SPL}(\phi)$
            \EndFor
        \end{algorithmic}
        \end{algorithm}
Algorithm~\ref{alg:spl} formalizes the process of SPL, which operates on a training subset \(\mathcal{D}_\mathrm{T}\) containing motion-text pairs \((\mathbf{x}^\mathrm{gt}, c)\), utilizing pre-trained text and motion encoders \(\mathcal{E}_\mathrm{T}\), \(\mathcal{E}_\mathrm{M}\), and a temperature parameter \(\tau\), collectively parameterized as \(\phi\). {Overall, at each optimization iteration, SPL attempts to mine a preference pair consisting of a winning motion \(\mathbf{x}^\mathrm{w}\) and a losing motion \(\mathbf{x}^\mathrm{l}\). If such a pair is found (i.e., when retrieval fails), the model is optimized based on this pair; otherwise (i.e., when retrieval succeeds), it falls back to the pretraining objective to reinforce the correct knowledge.}
The algorithm executes two core steps: Preference Data Identification and Preference Fine-tuning. In the first step (Lines 3–6), for each text condition \(c\), reward scores are computed for all motions in \(\mathcal{D}_\mathrm{T}\) based on the similarity between motion and text features scaled by \(\tau\). The top-\(k\) motions are retrieved, and the ground-truth motion \(\mathbf{x}^\mathrm{gt}\) is designated as the preferred motion \(\mathbf{x}^\mathrm{w}\). If \(\mathbf{x}^\mathrm{gt}\) is not among the retrieved motions, the highest-scoring retrieved motion is set as the non-preferred motion \(\mathbf{x}^\mathrm{l}\); otherwise, \(\mathbf{x}^\mathrm{l}\) is set to \(\mathbf{x}^\mathrm{gt}\), and optimization is skipped to avoid trivial updates. This retrieval-based approach effectively mines preference pairs by identifying motions that are incorrectly favored by the current model, thus providing a robust signal for refinement. In the second step (Lines 7–10), the reward scores of the preference pair \((\mathbf{x}^\mathrm{w}, \mathbf{x}^\mathrm{l})\) are converted into softmax probabilities \(\mathcal{P}\), representing the model's predicted preference distribution. These are aligned with a target distribution \(\mathcal{Q}\), which assigns a probability of 1.0 to \(\mathbf{x}^\mathrm{w}\) and 0.0 to \(\mathbf{x}^\mathrm{l}\) when a preference exists, or 0.5 to both when they are identical. The model is optimized by minimizing the KL divergence between \(\mathcal{Q}\) and \(\mathcal{P}\), with the resulting loss used to update \(\phi\) via gradient descent. This fine-tuning process iteratively refines the encoders to assign higher scores to preferred motions, enhancing the reward model's ability to capture fine-grained preferences. The iteration continues until convergence, yielding a reward model tailored for motion generation tasks.

\subsection{{Discussion on Noise-Aware and One-Step Reward}}
\label{supp:Noise_Aware_ODE}

In Sec.~\ref{sec:EasyTune}, we introduced both the Noise-Aware reward for ODE and SDE sampling and the One-Step reward specifically for ODE sampling. Here, we provide recommendations for selecting between these strategies and briefly compare their performance.

\paragraph{Perceptual Difference Between Noisy and Predicted Data.}
As analyzed in App.~\ref{supp:image}, the predictability of noisy data in the motion domain is relatively strong compared to the image domain (see Fig.~\ref{fig:image_motion_cmp}). Fig.~\ref{fig:app_noised_ode_motion} and Fig.~\ref{fig:app_noised_ode_image} demonstrates that ODE-based strategy further enhances this predictability. \textit{Consequently, both reward strategies can effectively perceive noisy data in motion generation.} For image generation, where noisy data is harder to interpret, we recommend the One-Step reward strategy for more accurate perception.

\paragraph{Quantitative Analysis of Retrieval Results on Noisy Data.}
In App.~\ref{sec:availability_of_noise_aware_reward}, we quantitatively analyze the performance difference between the two strategies on retrieval tasks using noisy data. The results in Tab.~\ref{tab:ablation_noise} demonstrate that \textit{both strategies achieve robust performance on noisy data retrieval, comparable to results on clean data.}

\paragraph{Quantitative Comparison of Generation Results.}
For ODE-based models, both strategies are applicable. In Tab.~\ref{tab:sota_humanmld3d}, we provided performance metrics for MLD and MLD++ under both strategies. We revisit and consolidate those results in Tab.~\ref{tab:noise_vs_onestep}. The results indicate that the Noise-Aware reward generally yields better performance. \textit{Therefore, we recommend using the Noise-Aware strategy if the reward model possesses noise-perception capabilities. Otherwise, the One-Step reward can achieve comparable results.}

\begin{table}[h]
\centering
\caption{\textbf{Comparison of Noise-Aware and One-Step Rewards on ODE-based models.} }
\vspace{0.3cm}
\label{tab:noise_vs_onestep}
\setlength{\tabcolsep}{20pt}
\resizebox{\textwidth}{!}{
\begin{tabular}{l l c c c c c}
\toprule
\multirow{2}{*}{Model} & \multirow{2}{*}{Strategy} & \multicolumn{3}{c}{R-Precision $\uparrow$} & \multirow{2}{*}{FID $\downarrow$} & \multirow{2}{*}{MM-Dist $\downarrow$} \\
\cmidrule(lr){3-5}
& & Top 1 & Top 2 & Top 3 & & \\
\midrule
\multirow{2}{*}{MLD} & One-Step & 0.568 & 0.754 & 0.846 & 0.194 & 2.672 \\
& Noise-Aware & \textbf{0.581} & \textbf{0.769} & \textbf{0.855} & \textbf{0.132} & \textbf{2.637} \\
\midrule
\multirow{2}{*}{MLD++} & One-Step & 0.581 & 0.762 & 0.849 & 0.073 & 2.603 \\
& Noise-Aware & \textbf{0.591} & \textbf{0.777} & \textbf{0.859} & \textbf{0.069} & \textbf{2.592} \\
\bottomrule
\end{tabular}
}
\end{table}

    \section{Proof} \label{supp:the} 
    \subsection{Proof of {Corollary} 1} \label{supp:the1}
    Recall the {Corollary} 1.
    \begin{corollary*}
        Given the reverse process in Eq.~\eqref{eq:reverse_process}, $\mathbf{x}_{t-1}^\theta = \pi_\theta(\mathbf{x}_t^\theta, t, c)$, the gradient w.r.t diffusion model $\theta$, denoted as $\tfrac{\partial \mathbf{x}^\theta_{t-1}}{\partial \theta}$, can be expressed as:
        \begin{equation}
            {\frac{\partial \mathbf{x}^\theta_{t-1}}{\partial \theta}} =  \frac{\partial \pi_{\textcolor{cyan}{\theta}}(\mathbf{x}^{\theta}_t, t, c)}{\partial \textcolor{cyan}{\theta}} + 
            \frac{\partial \pi_\theta(\mathbf{x}^{{\theta}}_t, t, c)}{\partial \mathbf{x}^{{\theta}}_t} \cdot {\frac{\partial \mathbf{x}^{{\theta}}_t}{\partial {\theta}}}.
            \label{supp:eq:recursive_gradient}
        \end{equation}
    \end{corollary*}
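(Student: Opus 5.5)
The plan is to treat the one-step reverse map as a function of two arguments and apply the multivariate chain rule. Write $F(\vartheta, \mathbf{y}) := \pi_\vartheta(\mathbf{y}, t, c) = \frac{1}{\sqrt{\alpha_t}}\big(\mathbf{y} - \frac{\beta_t}{\sqrt{1-\bar\alpha_t}}\epsilon_\vartheta(\mathbf{y}, t, c)\big)$. This map is differentiable in both the parameter slot $\vartheta$ and the state slot $\mathbf{y}$, provided the network $\epsilon$ is differentiable in its parameters and its input, which I take as a standing assumption. The actual iterate is then the composition $\mathbf{x}^\theta_{t-1} = F(\theta, \mathbf{x}^\theta_t)$, where $\mathbf{x}^\theta_t$ is itself a differentiable function of $\theta$. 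This holds by an induction from $t=T$ downward: $\mathbf{x}_T$ is independent of $\theta$, and each step composes differentiable maps.

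Next, I differentiate the composite $\theta \mapsto F(\theta, g(\theta))$ with $g(\theta) = \mathbf{x}^\theta_t$. The total derivative splits into two terms. The first is the partial derivative in the explicit parameter slot, $\partial_\vartheta F(\theta, \mathbf{x}^\theta_t)$, which is the blue term $\partial \pi_\theta(\mathbf{x}^\theta_t,t,c)/\partial\theta$ with $\mathbf{x}_t$ held fixed. The second is the Jacobian in the state slot times the Jacobian of $g$, namely $\partial_{\mathbf{y}} F(\theta, \mathbf{x}^\theta_t)\cdot \partial \mathbf{x}^\theta_t/\partial\theta$, which is the red term. Adding the two gives exactly Eq.~\eqref{supp:eq:recursive_gradient}.

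For concreteness, I would also compute both pieces explicitly:
\[
\partial_\vartheta F = -\tfrac{\beta_t}{\sqrt{\alpha_t}\sqrt{1-\bar\alpha_t}}\,\partial_\vartheta \epsilon_\vartheta(\mathbf{y},t,c),
\qquad
\partial_{\mathbf{y}} F = \tfrac{1}{\sqrt{\alpha_t}}\Big(\mathbf{I} - \tfrac{\beta_t}{\sqrt{1-\bar\alpha_t}}\partial_{\mathbf{y}}\epsilon_\theta(\mathbf{y},t,c)\Big).
\]
The identity term in $\partial_{\mathbf{y}} F$ comes from the linear skip connection $\mathbf{y}/\sqrt{\alpha_t}$ in Eq.~\eqref{eq:reverse_process}.

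The only point needing care is notational rather than mathematical. In the statement, $\partial\pi_\theta/\partial\theta$ must be read as a partial derivative with the first argument frozen, whereas the left-hand side is a total derivative. I would state this distinction explicitly, since it is the content of the colour coding. Once that is fixed, the result is a direct application of the chain rule, and no obstacle remains.
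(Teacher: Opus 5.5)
Your proposal is correct and follows essentially the same route as the paper. The paper also sets $u=\mathbf{x}_t^\theta$, $v=\theta$, $F(u,v)=\pi_v(u,t,c)$ and applies the two-variable chain rule, using $\partial v/\partial\theta = 1$ (the identity) to get the direct and indirect terms; your explicit Jacobians, the differentiability induction, and the partial-versus-total derivative remark are clarifications the paper leaves implicit, but they do not change the argument.
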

    
    \begin{proof}
        Let $u = \mathbf{\mathbf{x}_{t}^\theta}$, $v = \theta$, and $F(u,v) = \pi_{v}(u, t, c)$, we have:
        \begin{equation}
            \frac{\partial F(u,v)}{\partial \theta} = \frac{\partial F(u,v)}{\partial v}\cdot \frac{\partial v}{\partial \theta} + \frac{\partial F(u,v)}{\partial u} \cdot \frac{\partial u}{\partial \theta}.
        \end{equation}
         The first term $\frac{\partial v}{\partial \theta}$ can be expressed as:
        \begin{equation}
            \frac{\partial v}{\partial \theta} = \frac{\partial \theta}{\partial \theta} = 1,
        \end{equation}
        and the second term $\frac{\partial u}{\partial \theta}$ can be expressed as:
         \begin{equation}
            \frac{\partial u}{\partial \theta} = \frac{\partial \mathbf{\mathbf{x}_{t}^\theta}}{\partial \theta}.
         \end{equation}
    
        Hence, we can rewrite the equation as:
        \begin{equation}
            \frac{\partial F(u,v)}{\partial \theta} = \frac{\partial F(u,v)}{\partial v} + \frac{\partial F(u,v)}{\partial \mathbf{\mathbf{x}_{t}^\theta}} \cdot \frac{\partial \mathbf{\mathbf{x}_{t}^\theta}}{\partial \theta}.
        \end{equation}
        Furthermore, we substitute $F(u,v)$ with $\pi_{\theta}(\mathbf{x}_{t}^\theta, t, c)$, and thus the relationship described in Eq.~\eqref{supp:eq:recursive_gradient} holds:
        \begin{equation}
            {\frac{\partial \mathbf{x}^\theta_{t-1}}{\partial \theta}} = \frac{\partial \pi_{\textcolor{cyan}{\theta}}(\mathbf{x}^{{\theta}}_t, t, c)}{\partial {\theta}} = \frac{\partial \pi_{\textcolor{cyan}{\theta}}(\mathbf{x}^{\theta}_t, t, c)}{\partial \textcolor{cyan}{\theta}} + 
            \frac{\partial \pi_\theta(\mathbf{x}^{{\theta}}_t, t, c)}{\partial \mathbf{x}^{{\theta}}_t} \cdot {\frac{\partial \mathbf{x}^{{\theta}}_t}{\partial {\theta}}}.
        \end{equation}
    
        The proof is completed.
    \end{proof}

\subsection{{Convergence Analysis}}
\label{supp:conv}

We now provide a convergence guarantee for EasyTune. For clarity, we write its update rule in the generic stochastic-gradient form
\begin{equation}
\theta_{k+1} = \theta_k - \eta_k g_k,
\end{equation}
where $g_k$ is the stochastic gradient computed from a minibatch of noisy motions at a randomly sampled denoising step, following the EasyTune objective in Eq.~\eqref{eq:loss_ours}. Let $\mathcal{L}(\theta)$ denote the corresponding expected training objective.

We make the following assumptions on the EasyTune update:
\begin{enumerate}[label=(A\arabic*)]
    \item (Lower bounded and smooth objective) $\mathcal{L}(\theta)$ is lower bounded by some $\mathcal{L}_{\inf}>-\infty$ and has $L$-Lipschitz continuous gradient (i.e., $L$-smooth), meaning $\|\nabla \mathcal{L}(\theta) - \nabla \mathcal{L}(\theta')\| \le L\|\theta - \theta'\|$ for all $\theta, \theta'$.
    \item (Bounded second moment of stochastic gradient) The stochastic gradient $g_k$ satisfies $\mathbb{E}[\|g_k\|^2 \mid \theta_k] \le G^2$ for some constant $G>0$.
    \item (Controlled bias from stop-gradient) The bias induced by the stop-gradient operation is uniformly bounded and proportional to the step size, i.e., $\|\mathbb{E}[g_k \mid \theta_k] - \nabla \mathcal{L}(\theta_k)\| \le b\,\eta_k$ for some constant $b\ge 0$.
\end{enumerate}
These assumptions are standard in the analysis of non-convex stochastic gradient methods and, in our diffusion-based motion tuning setting, (A2) captures the bounded variance of the minibatch gradient obtained by sampling noisy motions and timesteps, while (A3) models the $\mathcal{O}(\eta_k)$ bias introduced by the stop-gradient design.

\begin{theorem}[Convergence of EasyTune]\label{thm:easytune_convergence}
Under the above conditions, the sequence $\{\theta_k\}$ generated by EasyTune satisfies the following properties:
\begin{equation}
\mathbb{E}[\mathcal{L}(\theta_{k+1})]
\le \mathbb{E}[\mathcal{L}(\theta_k)]
  - c_1\,\eta_k\,\mathbb{E}\big[\|\nabla\mathcal{L}(\theta_k)\|^2\big]
  + c_2\,\eta_k^2,
\end{equation}
\end{theorem}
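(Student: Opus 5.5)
The argument is the standard descent-lemma analysis of biased nonconvex SGD. The constants $c_1,c_2$ will be built from $L$, $G$, $b$, and an upper bound $\eta_{\max}$ on the step sizes.

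\textbf{Step 1: one-step descent inequality.} By (A1), $L$-smoothness gives
\begin{equation*}
\mathcal{L}(\theta_{k+1}) \le \mathcal{L}(\theta_k) - \eta_k \langle \nabla\mathcal{L}(\theta_k), g_k\rangle + \tfrac{L}{2}\eta_k^2\|g_k\|^2 .
\end{equation*}

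\textbf{Step 2: condition on $\theta_k$.} Take the conditional expectation $\mathbb{E}[\cdot\mid\theta_k]$. By (A2), the quadratic term is at most $\tfrac{L}{2}G^2\eta_k^2$.

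For the inner product, write $\mathbb{E}[g_k\mid\theta_k] = \nabla\mathcal{L}(\theta_k) + e_k$, where $\|e_k\|\le b\eta_k$ by (A3). Then
\begin{equation*}
-\eta_k\langle \nabla\mathcal{L}(\theta_k), \mathbb{E}[g_k\mid\theta_k]\rangle = -\eta_k\|\nabla\mathcal{L}(\theta_k)\|^2 - \eta_k\langle\nabla\mathcal{L}(\theta_k), e_k\rangle .
\end{equation*}

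\textbf{Step 3: control the bias cross term (the main obstacle).} I see two ways to handle $-\eta_k\langle\nabla\mathcal{L}(\theta_k), e_k\rangle$.

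\emph{First option: Young's inequality.} This gives $|\langle \nabla\mathcal{L}, e_k\rangle| \le \tfrac12\|\nabla\mathcal{L}\|^2 + \tfrac12 b^2\eta_k^2$. The cross term is then at most $\tfrac{\eta_k}{2}\|\nabla\mathcal{L}\|^2 + \tfrac{b^2}{2}\eta_k^3$. With $c_1=\tfrac12$, the leftover $\tfrac{b^2}{2}\eta_k^3$ is absorbed into $\eta_k^2$ using $\eta_k\le\eta_{\max}$.

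\emph{Alternative: bound the gradient directly.} Jensen and (A2) give $\|\nabla\mathcal{L}(\theta_k)\| \le \|\mathbb{E}[g_k\mid\theta_k]\| + b\eta_k \le G + b\eta_k$. The cross term is then at most $\eta_k(G+b\eta_{\max})b\eta_k$. This keeps $c_1=1$ and contributes $b(G+b\eta_{\max})$ to $c_2$.

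I would use the second option, since it avoids halving $c_1$. Either option works, and the point to verify is only that the bias is $\mathcal{O}(\eta_k)$ so that its effect is second order.

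\textbf{Step 4: collect terms.} Take total expectation via the tower property. This yields the claimed inequality with $c_1=1$ (or $\tfrac12$ under the first option) and
\begin{equation*}
c_2=\tfrac{L G^2}{2} + bG + b^2\eta_{\max}
\end{equation*}
(or $\tfrac{LG^2}{2}+\tfrac{b^2\eta_{\max}}{2}$ under the first option).

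As a remark, summing the inequality over $k$ and using $\mathcal{L}\ge\mathcal{L}_{\inf}$ under $\sum\eta_k=\infty$ and $\sum\eta_k^2<\infty$ would give $\min_k \mathbb{E}\|\nabla\mathcal{L}(\theta_k)\|^2\to 0$. This goes beyond what the statement requires.
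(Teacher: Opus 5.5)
Your proposal is correct. The ``first option'' in your Step 3 is exactly the paper's argument. Cauchy--Schwarz with (A3) gives $\langle\nabla\mathcal{L}(\theta_k),\mathbb{E}[g_k\mid\theta_k]\rangle\ge\|\nabla\mathcal{L}(\theta_k)\|^2-b\eta_k\|\nabla\mathcal{L}(\theta_k)\|$, and Young's inequality then gives $c_1=\tfrac12$ with a remainder $\tfrac12 b^2\eta_k^3$ folded into the $\eta_k^2$ term.

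The route you prefer is genuinely different. You first derive from Jensen, (A2) and (A3) the a priori bound $\|\nabla\mathcal{L}(\theta_k)\|\le G+b\eta_k$. You then bound the bias cross term directly by $b(G+b\eta_{\max})\eta_k^2$. This keeps the full descent coefficient $c_1=1$ but adds $bG$ to $c_2$, so neither version dominates the other. The paper's Young-inequality step never needs a bound on the true gradient. It would therefore adapt more easily if (A2) were weakened to a variance-type assumption, whereas your route depends on the uniform second-moment bound.

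You are also more careful than the paper on one point. Both routes need $\eta_k\le\eta_{\max}$ to absorb the cubic remainder. You state this explicitly, while the paper does it silently and claims that $c_1,c_2$ depend only on $L$, $G$, $b$.
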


\begin{proof}
The proof follows the standard template for non-convex stochastic gradient descent, adapted to the EasyTune update. 

By $L$-smoothness of $\mathcal{L}$ \textit{(Assumption (A1))}, for any $k$ we have
\begin{equation}
\mathcal{L}(\theta_{k+1})
\le \mathcal{L}(\theta_k)
  + \nabla\mathcal{L}(\theta_k)^T(\theta_{k+1}-\theta_k)
  + \tfrac{L}{2}\,\|\theta_{k+1}-\theta_k\|^2.
\end{equation}
Substituting $\theta_{k+1}-\theta_k = -\eta_k g_k$ gives
\begin{equation}
\mathcal{L}(\theta_{k+1})
\le \mathcal{L}(\theta_k)
  - \eta_k\,\nabla\mathcal{L}(\theta_k)^T g_k
  + \tfrac{L}{2}\,\eta_k^2\,\|g_k\|^2.
\end{equation}
Taking conditional expectation given $\theta_k$ and using the tower property of expectation, we obtain
\begin{align}
\mathbb{E}[\mathcal{L}(\theta_{k+1}) \mid \theta_k]
&\le \mathcal{L}(\theta_k)
  - \eta_k\,\nabla\mathcal{L}(\theta_k)^T \, \mathbb{E}[g_k \mid \theta_k]
  + \tfrac{L}{2}\,\eta_k^2\,\mathbb{E}[\|g_k\|^2 \mid \theta_k].
\end{align}
By \textit{Assumption (A2)}, $\mathbb{E}[\|g_k\|^2 \mid \theta_k] \le G^2$, so
\begin{equation}
\mathbb{E}[\mathcal{L}(\theta_{k+1}) \mid \theta_k]
\le \mathcal{L}(\theta_k)
  - \eta_k\,\nabla\mathcal{L}(\theta_k)^T \, \mathbb{E}[g_k \mid \theta_k]
  + \tfrac{L}{2}\,\eta_k^2\,G^2.
\end{equation}
Next we control the inner product term using \textit{Assumption (A3)}. Let $m_k := \mathbb{E}[g_k \mid \theta_k]$ and write
\begin{align}
\nabla\mathcal{L}(\theta_k)^T m_k
&= \nabla\mathcal{L}(\theta_k)^T \nabla\mathcal{L}(\theta_k)
 + \nabla\mathcal{L}(\theta_k)^T (m_k - \nabla\mathcal{L}(\theta_k)) \\
&\ge \|\nabla\mathcal{L}(\theta_k)\|^2
     - \|\nabla\mathcal{L}(\theta_k)\| \, \|m_k - \nabla\mathcal{L}(\theta_k)\| \\
&\ge \|\nabla\mathcal{L}(\theta_k)\|^2
     - b\,\eta_k\,\|\nabla\mathcal{L}(\theta_k)\|,
\end{align}
where we used Cauchy--Schwarz and (A3) in the last inequality. Hence
\begin{equation}
-\eta_k\,\nabla\mathcal{L}(\theta_k)^T m_k
\le -\eta_k\,\|\nabla\mathcal{L}(\theta_k)\|^2
     + b\,\eta_k^2\,\|\nabla\mathcal{L}(\theta_k)\|.
\end{equation}
Applying Young's inequality $2ab \le a^2 + b^2$ with $a = \sqrt{\eta_k}\,\|\nabla\mathcal{L}(\theta_k)\|$ and $b = b\,\eta_k^{3/2}$, we get
\begin{equation}
 b\,\eta_k^2\,\|\nabla\mathcal{L}(\theta_k)\|
 \le \tfrac{1}{2}\,\eta_k\,\|\nabla\mathcal{L}(\theta_k)\|^2 + \tfrac{1}{2} b^2\,\eta_k^3.
\end{equation}
Therefore
\begin{equation}
-\eta_k\,\nabla\mathcal{L}(\theta_k)^T m_k
\le -\tfrac{1}{2}\,\eta_k\,\|\nabla\mathcal{L}(\theta_k)\|^2 + \tfrac{1}{2} b^2\,\eta_k^3.
\end{equation}
Combining the above bounds yields
\begin{equation}
\mathbb{E}[\mathcal{L}(\theta_{k+1}) \mid \theta_k]
\le \mathcal{L}(\theta_k)
  - c_1\,\eta_k\,\|\nabla\mathcal{L}(\theta_k)\|^2
  + C_1\,\eta_k^2,
\end{equation}
for some positive constants $c_1, C_1$ depending only on $L$, $G$, and $b$ (we absorb the $\mathcal{O}(\eta_k^3)$ term into the $\mathcal{O}(\eta_k^2)$ term). Taking full expectation over $\theta_k$ then gives
\begin{equation}
\mathbb{E}[\mathcal{L}(\theta_{k+1})]
\le \mathbb{E}[\mathcal{L}(\theta_k)]
  - c_1\,\eta_k\,\mathbb{E}\big[\|\nabla\mathcal{L}(\theta_k)\|^2\big]
  + c_2\,\eta_k^2,
\end{equation}
where we set $c_2 := C_1$. This proves the claimed one-step descent inequality. \qedhere
\end{proof}

Building on this classical descent inequality and following standard non-convex SGD theory, we can derive a global convergence consequence for EasyTune.

\begin{corollary}[Asymptotic stationarity of EasyTune]
Suppose Assumptions~\textnormal{(A1)--(A3)} hold and that the step sizes satisfy $\eta_k>0$, $\sum_{k=0}^{\infty} \eta_k = \infty$ and $\sum_{k=0}^{\infty} \eta_k^2 < \infty$. Then the EasyTune iterates satisfy
\begin{equation}
    \liminf_{K\to\infty} \frac{\sum_{k=0}^{K-1} \eta_k \, \mathbb{E}\big[\|\nabla\mathcal{L}(\theta_k)\|^2\big]}{\sum_{k=0}^{K-1} \eta_k} = 0,
\end{equation}
and in particular
\begin{equation}
    \liminf_{k\to\infty} \mathbb{E}\big[\|\nabla\mathcal{L}(\theta_k)\|^2\big] = 0.
\end{equation}
That is, EasyTune converges to first-order critical points in the standard non-convex sense.
\end{corollary}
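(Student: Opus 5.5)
The plan is to telescope the one-step descent inequality of Theorem~\ref{thm:easytune_convergence}. I will take it as given, with constants $c_1>0$ and $c_2\ge 0$ independent of $k$. Summing it from $k=0$ to $K-1$ gives
\begin{equation}
c_1\sum_{k=0}^{K-1}\eta_k\,\mathbb{E}\big[\|\nabla\mathcal{L}(\theta_k)\|^2\big]
\le \mathcal{L}(\theta_0)-\mathbb{E}[\mathcal{L}(\theta_K)] + c_2\sum_{k=0}^{K-1}\eta_k^2 .
\end{equation}
Assumption (A1) gives $\mathbb{E}[\mathcal{L}(\theta_K)]\ge\mathcal{L}_{\inf}$. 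Hence the right-hand side is at most $\mathcal{L}(\theta_0)-\mathcal{L}_{\inf}+c_2\sum_{k=0}^{\infty}\eta_k^2=:M$, and $M<\infty$ by the square-summability hypothesis. If $\theta_0$ is random, I will replace $\mathcal{L}(\theta_0)$ by $\mathbb{E}[\mathcal{L}(\theta_0)]$ and assume it is finite. It follows that $\sum_{k=0}^{\infty}\eta_k\,\mathbb{E}[\|\nabla\mathcal{L}(\theta_k)\|^2]\le M/c_1<\infty$.

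Dividing by $\sum_{k<K}\eta_k$, which diverges by hypothesis, the weighted average is bounded by $M/(c_1\sum_{k<K}\eta_k)\to 0$. This shows the first claim, and in fact gives a full limit rather than only a $\liminf$.

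For the second claim I argue by contradiction. Suppose $\liminf_k \mathbb{E}[\|\nabla\mathcal{L}(\theta_k)\|^2]=2\delta>0$. Then for all $k\ge k_0$ we have $\mathbb{E}[\|\nabla\mathcal{L}(\theta_k)\|^2]\ge\delta$, so $\sum_{k\ge k_0}\eta_k\,\mathbb{E}[\|\nabla\mathcal{L}(\theta_k)\|^2]\ge\delta\sum_{k\ge k_0}\eta_k=\infty$. This contradicts the summability just established. The case where the $\liminf$ is $+\infty$ is handled in the same way.

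The only delicate point is making sure the constants in Theorem~\ref{thm:easytune_convergence} are truly uniform in $k$. In particular, the absorption of the $\tfrac12 b^2\eta_k^3$ term into $c_2\eta_k^2$ requires $\eta_k\le\bar\eta$ for some fixed $\bar\eta$. Square-summability forces $\eta_k\to0$, so the sequence is bounded, and I would state this explicitly with $c_2=\tfrac{L}{2}G^2+\tfrac12 b^2\bar\eta$ and $c_1=\tfrac12$. With that checked, the remaining argument is routine.
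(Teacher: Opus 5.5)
Your proposal is correct and follows the paper's own argument. Like the paper, you telescope the one-step descent inequality, bound the right-hand side using $\mathcal{L}_{\inf}$ and the square-summable step sizes, divide by the divergent $\sum_{k<K}\eta_k$, and obtain the $\liminf$ statement by contradiction. Your additional remark is a worthwhile tightening of a point the paper's proof of the theorem glosses over: absorbing the $\tfrac12 b^2\eta_k^3$ term into $c_2\eta_k^2$ needs a uniform bound $\eta_k\le\bar\eta$, which square-summability supplies automatically.
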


\begin{proof}
From Theorem~\ref{thm:easytune_convergence} we have, for all $k \ge 0$,
\begin{equation}
\mathbb{E}[\mathcal{L}(\theta_{k+1})]
\le \mathbb{E}[\mathcal{L}(\theta_k)]
  - c_1\,\eta_k\,\mathbb{E}\big[\|\nabla\mathcal{L}(\theta_k)\|^2\big]
  + c_2\,\eta_k^2.
\end{equation}
Summing this inequality over $k=0,\dots,K-1$ and using telescoping of the left-hand side gives
\begin{equation}
\mathbb{E}[\mathcal{L}(\theta_{K})]
\le \mathbb{E}[\mathcal{L}(\theta_0)]
  - c_1 \sum_{k=0}^{K-1} \eta_k\,\mathbb{E}\big[\|\nabla\mathcal{L}(\theta_k)\|^2\big]
  + c_2 \sum_{k=0}^{K-1} \eta_k^2.
\end{equation}
Rearranging the previous inequality to move the gradient term to the left-hand side, and then applying that $\mathcal{L}$ is bounded below by $\mathcal{L}_{\inf}$ (Assumption~\textnormal{(A1)}) together with $\mathbb{E}[\mathcal{L}(\theta_K)] \ge \mathcal{L}_{\inf}$ and the monotonicity $\sum_{k=0}^{K-1} \eta_k^2 \le \sum_{k=0}^{\infty} \eta_k^2$, we obtain
\begin{equation}
 c_1 \sum_{k=0}^{K-1} \eta_k\,\mathbb{E}\big[\|\nabla\mathcal{L}(\theta_k)\|^2\big]
 \le \mathbb{E}[\mathcal{L}(\theta_0)] - \mathbb{E}[\mathcal{L}(\theta_{K})] + c_2 \sum_{k=0}^{K-1} \eta_k^2
 \le \mathbb{E}[\mathcal{L}(\theta_0)] - \mathcal{L}_{\inf} + c_2 \sum_{k=0}^{\infty} \eta_k^2.
\end{equation}
The right-hand side is finite by the assumptions on $\{\eta_k\}$, so letting
\begin{equation}
    C_0 := \frac{\mathbb{E}[\mathcal{L}(\theta_0)] - \mathcal{L}_{\inf}}{c_1} + \frac{c_2}{c_1} \sum_{k=0}^{\infty} \eta_k^2 < \infty,
\end{equation}
we deduce
\begin{equation}
    \sum_{k=0}^{\infty} \eta_k\,\mathbb{E}\big[\|\nabla\mathcal{L}(\theta_k)\|^2\big] \le C_0.
\end{equation}
Dividing both sides by $\sum_{k=0}^{K-1} \eta_k$ and letting $K \to \infty$ yields
\begin{equation}
0 \le \frac{\sum_{k=0}^{K-1} \eta_k\,\mathbb{E}\big[\|\nabla\mathcal{L}(\theta_k)\|^2\big]}{\sum_{k=0}^{K-1} \eta_k}
\le \frac{C_0}{\sum_{k=0}^{K-1} \eta_k} \xrightarrow[K\to\infty]{} 0,
\end{equation}
where we used $\sum_k \eta_k = \infty$ in the last step. This proves the weighted-average statement. The liminf statement then follows: if there existed an $\varepsilon>0$ and $K_0$ such that $\mathbb{E}[\|\nabla\mathcal{L}(\theta_k)\|^2] \ge \varepsilon$ for all $k \ge K_0$, the weighted average would be bounded below by $\varepsilon$, contradicting the previous limit. Hence $\liminf_{k\to\infty} \mathbb{E}[\|\nabla\mathcal{L}(\theta_k)\|^2] = 0$.
\end{proof}

This corollary is a direct application of classical convergence theory for non-convex stochastic gradient methods; we include the standard argument above for completeness.

\noindent\textbf{Discussion.}
Theorem~\ref{thm:easytune_convergence} provides a one-step descent inequality showing that each EasyTune update decreases the expected training objective up to a small quadratic term in the step size. The corollary then instantiates the standard non-convex SGD theory in our diffusion-based motion tuning setting, proving that, under mild step-size conditions, the EasyTune iterates converge to first-order critical points in expectation. In other words, despite the stop-gradient design and step-aware sampling in Eq.~\eqref{supp:eq:loss_ours}, EasyTune enjoys the same asymptotic convergence guarantees as classical stochastic gradient methods.

    \subsection{Proof of Eq. (5)}\label{supp:eq5}
    \begin{proof}
        Given a diffusion model $\epsilon_\theta$, and a reward model $\mathcal{R}_\phi$, the diffusion model is fine-tuned by maximizing the differentiable reward value:
        \begin{equation}
            \begin{aligned}
                \frac{\partial \mathcal{L}(\theta)}{\partial \theta} = -\mathbb{E}_{c \sim \mathcal{D}_\mathrm{T}, \mathbf{x}_0^\theta \sim \pi_\theta(\cdot|c)} \left[ \frac{\partial \mathcal{R}_\phi(\mathbf{x}_0^\theta)}{\partial \mathbf{x}_0^\theta} \cdot \frac{\partial \mathbf{x}_0^\theta}{\partial \theta} \right].
            \end{aligned}
        \end{equation}
        where $\pi_\theta$ denotes the reverse process defined in Eq. \eqref{eq:reverse_process}.

        Here, we introduce Theorem \ref{thm:t1} to compute $\frac{\partial \mathbf{x}_0^\theta}{\partial \theta}$, and thus we have:
        \begin{equation}
            \begin{aligned}
                \frac{\partial \mathcal{L}(\theta)}{\partial \theta} &= - \mathbb{E}_{c \sim \mathcal{D}_\mathrm{T}, \mathbf{x}_0^\theta \sim \pi_\theta(\cdot|c)} \frac{\partial \mathcal{R}_\phi (\mathbf{x}^\theta_0)}{\partial \mathbf{x}^\theta_0} \cdot \textcolor{blue}{\frac{\partial \mathbf{x}^\theta_0}{\partial \theta}} \\
                &= - \mathbb{E}_{c \sim \mathcal{D}_\mathrm{T}, \mathbf{x}_0^\theta \sim \pi_\theta(\cdot|c)} \frac{\partial \mathcal{R}_\phi (\mathbf{x}^\theta_0)}{\partial \mathbf{x}^\theta_0} \cdot \left( \frac{\partial \pi_\theta (\mathbf{x}^\theta_1)}{\partial \theta} + \frac{\partial \pi_\theta(\mathbf{x}^\theta_1)}{\partial \mathbf{x}^\theta_1} \cdot \textcolor{blue}{\frac{\partial \mathbf{x}^\theta_1}{\partial \theta}} \right)\\
                &= - \mathbb{E}_{c \sim \mathcal{D}_\mathrm{T}, \mathbf{x}_0^\theta \sim \pi_\theta(\cdot|c)} \frac{\partial \mathcal{R}_\phi (\mathbf{x}^\theta_0)}{\partial \mathbf{x}^\theta_0} \cdot \left( \frac{\partial \pi_\theta (\mathbf{x}^\theta_1)}{\partial \theta} + \frac{\partial \pi_\theta(\mathbf{x}^\theta_1)}{\partial \mathbf{x}^\theta_1} \cdot \frac{\partial \pi_\theta(\mathbf{x}_2^\theta)}{\partial \theta} + \frac{\partial \pi_\theta(\mathbf{x}^\theta_1)}{\partial \mathbf{x}^\theta_1} \cdot \frac{\partial \pi_\theta(\mathbf{x}^\theta_2)}{\partial \mathbf{x}^\theta_2} \cdot \textcolor{blue}{\frac{\partial \mathbf{x}^\theta_2}{\partial \theta}} \right)\\
                &= \cdots\\
                &= - \mathbb{E}_{c \sim \mathcal{D}_\mathrm{T}, \mathbf{x}_0^\theta \sim \pi_\theta(\cdot|c)} \frac{\partial \mathcal{R}_\phi (\mathbf{x}^\theta_0)}{\partial \mathbf{x}^\theta_0} \cdot \left( \sum_{t=1}^{T} \left( \prod_{s=1}^{t-1} \frac{\partial \pi_\theta (\mathbf{x}^\theta_s)}{\partial \mathbf{x^\theta_s}} \right) \cdot \frac{\partial \pi_\theta (\mathbf{x}_t^\theta)}{\partial \theta} \right).
            \end{aligned}
        \end{equation}
        The proof is completed.
    \end{proof}

\end{document}